\newtheorem{theorem}{Theorem}
\newtheorem{definition}{Definition}
\newtheorem{lemma}{Lemma}
\newtheorem{proposition}{Proposition}
\newtheorem{assumption}{Assumption}
\newcommand{\req}[1]{Eq.~(\ref{#1})}
\newcommand{\rfig}[1]{Fig.~\ref{#1}}
\newcommand{\rtab}[1]{Tab.~\ref{#1}}
\title{Smoothness Analysis of Adversarial Training}
\author{%
  Sekitoshi Kanai\\
  NTT\\
  \texttt{sekitoshi.kanai.fu@hco.ntt.co.jp}
  \And
  Masanori Yamada\\
  NTT\\
  \texttt{masanori.yamada.cm@hco.ntt.co.jp} 
  \AND
  Hiroshi Takahashi \\
  NTT \\
  \texttt{hiroshi.takahashi.bm@hco.ntt.co.jp} 
  \And
  Yuuki Yamanaka\\
  NTT \\
  \texttt{yuuki.yamanaka.kb@hco.ntt.co.jp} 
  \And
  Yasutoshi Ida \\
  NTT \\
  \texttt{yasutoshi.ida@ieee.org} 
}
\begin{document}

\maketitle
\thispagestyle{fancy}
\lhead{\tiny The latest version of this article is published in IEEE Transactions on Neural Networks and Learning Systems (DOI: 10.1109/TNNLS.2023.3244172) \cite{IEEEkanai} under a Creative Commons License. \vspace{0.3cm}}

\begin{abstract}
    Deep neural networks are vulnerable to adversarial attacks.
    Recent studies about adversarial robustness focus on the loss landscape in the parameter space
    since it is related to optimization and generalization performance.
    These studies conclude that the difficulty of adversarial training is caused by
    the non-smoothness of the loss function: i.e., its gradient is not Lipschitz continuous.
    However, this analysis ignores the dependence of adversarial attacks on model parameters.
    Since adversarial attacks are optimized for models, they should depend on the parameters. 
    Considering this dependence, we analyze the smoothness of the loss function of adversarial training using the optimal attacks for the model parameter
    in more detail.
    We reveal that the constraint of adversarial attacks is one cause of the non-smoothness and that
    the smoothness depends on the types of the constraints.
    Specifically, the $L_\infty$ constraint can cause non-smoothness more than the $L_2$ constraint.
    Moreover, our analysis implies that if we flatten the loss function with respect to input data,
   the Lipschitz constant of the gradient of adversarial loss tends to increase.
    To address the non-smoothness, we show that EntropySGD smoothens the non-smooth loss
    and improves the performance of adversarial training.
\end{abstract}

\section{Introduction}
While deep learning is starting to play a crucial role in modern data analysis applications, 
deep learning applications are threatened by adversarial examples. 
Adversarial examples are data perturbed to make models misclassify them.
To improve the robustness against adversarial examples, a lot of studies have explored and presented
defense methods~\cite{carmon2019unlabeled,pgd,pgd2,zhang2019you,Wang2020Improving,distillation,cohen2019certified}. 
Especially, adversarial training has attracted the most attention~\cite{carmon2019unlabeled,pgd,pgd2}.
Adversarial training generates adversarial examples of training data by maximizing the loss function with norm constraints 
and minimizes the loss function on these adversarial examples with respect to parameters to obtain the robustness.

However, adversarial training is still difficult to achieve good accuracies on adversarial examples (hereinafter, called robust accuracies) 
compared with accuracies on clean data achieved by standard training.
To tackle the issue, several studies investigate 
the loss landscape in the parameter space~\cite{liu2020loss,awp,YamadaFlat}. 
\citet{awp} have experimentally revealed that loss landscapes in the parameter space of adversarial training 
can be sharp, which is a cause of the poor generalization performance of adversarial training. 
As a theoretical analysis, \citet{liu2020loss} have shown that the loss function of the adversarial training (hereinafter, called adversarial loss) is not a Lipschitz-smooth function: i.e., 
its gradient is not Lipchitz continuous~\cite{zhou2020regret}.
Since the gradient-based optimization is not very effective for non-smooth objective functions~\cite{NLProgram},
this indicates that minimization in adversarial training is more difficult than that in the standard training. 
However, this theoretical analysis ignores the dependence of adversarial examples on the model parameters.
Since adversarial examples are optimized for the models, they should depend on the parameters.

Considering the dependence,
we theoretically investigate the smoothness of adversarial loss in the parameter space and reveal
that the constraint of adversarial attacks is one cause of non-smoothness.
In addition, since our analysis still indicates that adversarial loss can be a non-smooth function,
we show that EntropySGD addresses the non-smoothness of adversarial loss.
First, we analyze the Lipschitz continuity of gradients of adversarial loss for binary linear classification
because its optimal adversarial examples are obtained in closed form.
As a result, we reveal that adversarial loss can be a locally smooth function
and that the smoothness depends on the constraints of adversarial examples.
Next, we extend the analysis to the general case using the optimal adversarial attack.
By using the condition of the local maximum point, we reveal that the adversarial loss using the optimal attack is a locally smooth function
if the optimal attack is inside the feasible region of the constraints (i.e., if the constraints do not affect the optimal attack).
In addition, even if the optimal attack is on the boundary of the feasible region,
the adversarial loss with the $L_2$ norm constraint becomes locally smooth.
Our analysis also implies that if we flatten the loss function with respect to input data, 
the Lipschitz constant of gradient with respect to parameters can increase: the Lipschitz constant depends on the singular values of Hessian matrices with respect to input data.
Since Lipschitz constants of gradient determine the convergence and stability of training~\cite{hardt2016train}, 
this result might explain why adversarial training is more difficult than standard training.
Finally, to show that the improvements in smoothness contributes adversarial training, we apply EntropySGD to adversarial training.
We prove that EntropySGD smoothens the non-smooth and non-negative loss function, 
and our experimental results demonstrate that EntropySGD accelerates adversarial training and improves robust accuracy.
\section{Preliminaries}
\subsection{Adversarial training and characteristics of adversarial loss}
An adversarial example $\bm{x}^\prime$ for a data point $\bm{x}\in \mathbb{R}^d$ with a label $y$ is formulated as
\begin{align}
    \textstyle \bm{x}'\textstyle =\bm{x}+\bm{\delta},~~
    \textstyle \bm{\delta}\textstyle =\mathrm{arg}\!\max_{||\bm{\delta}||_p\leq \varepsilon}\ell(\bm{x}+\bm{\delta},y,\bm{\theta}),\label{adex}
 \end{align}
where $||\cdot||_p$ is $L_p$ norm,
$\bm{\theta}\!\in\! \mathbb{R}^m$ is a parameter vector, $\varepsilon$ is a magnitude of adversarial examples, and $\ell$ is a loss function.\footnote{To save space, we use $||\cdot||$ for $L_2$ norm (Euclidean norm), and $\ell(\bm{x},y,\bm{\theta})$ is written as just $\ell(\bm{x},\bm{\theta})$ or $\ell(\bm{\theta})$. }
To obtain a robust model, adversarial training attempts to solve 
\begin{align}
    \textstyle\!\!\!\min_{\bm{\theta}}\!L_{\varepsilon}(\bm{\theta})\textstyle\!=\!\min_{\bm{\theta}}\frac{1}{N}\sum_{n=1}^{N}\ell_{\varepsilon} (\bm{x}_n,y_n,\bm{\theta})
    \textstyle\!=\!\min_{\bm{\theta}}\frac{1}{N}\sum_{n=1}^{N}\max_{||\bm{\delta}||_p\leq \varepsilon}\!\ell(\bm{x}_n\!+\!\bm{\delta},y_n,\bm{\theta}).\nonumber
\end{align}
To solve the inner maximum problem, projected gradient descent (PGD)~\cite{pgd,pgd2} is widely used.
For example, PGD with the $L_\infty$ constraint iteratively updates the adversarial examples as
\begin{align}
    \bm{\delta } \leftarrow \Pi_{\varepsilon}\left(\bm{\delta}+\eta_{\mathrm{P}}\mathrm{sign}\left(\nabla_{\bm{\delta}} \ell\left(\bm{x}+\bm{\delta}, y, \bm{\theta}\right)\right)\right),\nonumber
\end{align}
where $\eta_{\mathrm{P}}$ is a step size. $\Pi_{\varepsilon}$ clips elements of $\bm{\delta}$ into the range $[-\varepsilon, \varepsilon]$
as a projection operation into the feasible region $\{\bm{\theta}~|~||\bm{\delta}||_\infty\leq \varepsilon\}$.

We discuss Lipschitz-smoothness of adversarial loss: i.e., the Lipschitz continuity of the gradient of adversarial loss.
We use the following definition:
\begin{definition}
    $f(\bm{\theta})$ is $C_l$-Lipschitz on a set $\Theta$
    if there is a constant $0\!\leq\! C_l\!<\!\infty$ satisfying 
    \begin{align}\textstyle
        ||f(\bm{\theta}_1)\!-\!f(\bm{\theta}_2)||\!\leq\! C_l||\bm{\theta}_1\!-\!\bm{\theta}_2||,~~\forall \bm{\theta}_1,\bm{\theta}_2\in \Theta.
    \end{align}
    In addition, $f(\bm{\theta})$ is $C_s$-smooth on a $\Theta$ 
    if there is a constant $0\!\leq\! C_s\!<\!\infty$ satisfying 
    \begin{align}\textstyle\label{LipSmo}
        ||\nabla_{\theta} f(\bm{\theta}_1)\!-\!\nabla_{\theta} f(\bm{\theta}_2)||\!\leq\! C_s||\bm{\theta}_1\!-\!\bm{\theta}_2||,~~\forall \bm{\theta}_1,\bm{\theta}_2\in \Theta.
    \end{align}
\end{definition}
This smoothness of the objective function (\req{LipSmo}) is an important property for the convergence of gradient-based optimization~\cite{NLProgram}.
If we choose the appropriate learning rate (step size), a gradient method converges to a stationary point of the loss function $L(\bm{\theta})$
under the following condition~\cite{NLProgram}:
\begin{align}\label{LipCond}
    ||\nabla_{\theta} f(\bm{\theta}_1)\!-\!\nabla_{\theta} f(\bm{\theta}_2)||\!\leq\! C_s||\bm{\theta}_1\!-\!\bm{\theta}_2||,~~\forall \bm{\theta}_1,\bm{\theta}_2\in \{\bm{\theta}|L(\bm{\theta})\leq L(\bm{\theta}^0)\},
\end{align}
where $\bm{\theta}^0$ is the initial parameter of training, and we assume that the set $\{\bm{\theta}|L(\bm{\theta})\!\leq\! L(\bm{\theta}^0)\}$ is bounded.

\citet{liu2020loss} have analyzed the smoothness of adversarial loss under the following assumption:
\begin{assumption}
    \label{assp}
    For loss function $\ell(\bm{x},\bm{\theta})$, we have the following inequalities:
    \begin{align}
        \textstyle||\ell(\bm{x},\bm{\theta}_1)-\ell(\bm{x},\bm{\theta}_2)||\textstyle &\leq C_{\bm{\theta}}||\bm{\theta}_1-\bm{\theta}_2||,\\
        \textstyle ||\nabla_{\bm{\theta}}\ell(\bm{x},\bm{\theta}_1)-\nabla_{\bm{\theta}}\ell(\bm{x},\bm{\theta}_2)||&\textstyle \leq C_{\bm{\theta}\bm{\theta}}||\bm{\theta}_1-\bm{\theta}_2||,\label{orgC}\\
        \textstyle ||\nabla_{\bm{\theta}}\ell(\bm{x}_1,\bm{\theta})-\nabla_{\bm{\theta}}\ell(\bm{x}_2,\bm{\theta})||&\textstyle \leq C_{\bm{\theta}\bm{x}}||\bm{x}_1-\bm{x}_2||,
    \end{align}
    where $0\leq C_{\bm{\theta}}<\infty$, $0\leq C_{\bm{\theta}\bm{\theta}}<\infty$, and $0\leq C_{\bm{\theta}\bm{x}}<\infty$.
\end{assumption}
Assumption~\ref{assp} states that $\ell(\bm{x},\bm{\theta})$ is $C_{\bm{\theta}}$-Lipschitz, 
$C_{\bm{\theta}\bm{\theta}}$-smooth with respect to $\bm{\theta}$, and
$C_{\bm{\theta}\bm{x}}$-smooth with respect to $\bm{x}$.
Under Assumption~\ref{assp}, \citet{liu2020loss} have proven the following proposition:
\begin{proposition}{\cite{liu2020loss}}\label{liuProp}
    If the Assumption~\ref{assp} holds, we have
    \begin{align}
        \textstyle\!\!||L_{\varepsilon}(\bm{\theta}_1)\!-\!L_{\varepsilon}(\bm{\theta}_2)||&\!\textstyle\leq\! C_{\bm{\theta}}||\bm{\theta}_1-\bm{\theta}_2||,\\
        \textstyle \!\!||\nabla_{\bm{\theta}}L_{\varepsilon}(\bm{\theta}_1)\!-\!\nabla_{\bm{\theta}}L_{\varepsilon}(\bm{\theta}_2)||&\!\textstyle\leq \!C_{\bm{\theta}\bm{\theta}}||\bm{\theta}_1\!-\!\bm{\theta}_2||\!+\!2\varepsilon C_{\bm{\theta}\bm{x}}.\label{advsmo}
    \end{align}
\end{proposition}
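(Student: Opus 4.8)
The plan is to prove both bounds one data point at a time and then average, since $L_{\varepsilon}(\bm{\theta})=\frac{1}{N}\sum_{n}\ell_{\varepsilon}(\bm{x}_n,y_n,\bm{\theta})$ and the triangle inequality lets any per-sample bound pass through the average unchanged. So it suffices to establish both inequalities for a single $\ell_{\varepsilon}(\bm{x},\bm{\theta})=\max_{||\bm{\delta}||_p\leq\varepsilon}\ell(\bm{x}+\bm{\delta},\bm{\theta})$. Throughout I write $\bm{\delta}_i^{*}$ for a perturbation attaining the maximum in $\ell_{\varepsilon}(\bm{x},\bm{\theta}_i)$ for $i\in\{1,2\}$.

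For the first (Lipschitz) inequality I would use the standard fact that a pointwise maximum of $C_{\bm{\theta}}$-Lipschitz functions is itself $C_{\bm{\theta}}$-Lipschitz. Since $\bm{\delta}_2^{*}$ is feasible for the problem defining $\ell_{\varepsilon}(\bm{x},\bm{\theta}_1)$, we have $\ell_{\varepsilon}(\bm{x},\bm{\theta}_1)-\ell_{\varepsilon}(\bm{x},\bm{\theta}_2)\geq\ell(\bm{x}+\bm{\delta}_2^{*},\bm{\theta}_1)-\ell(\bm{x}+\bm{\delta}_2^{*},\bm{\theta}_2)\geq-C_{\bm{\theta}}||\bm{\theta}_1-\bm{\theta}_2||$, applying the first inequality of Assumption~\ref{assp} at the fixed input $\bm{x}+\bm{\delta}_2^{*}$. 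Swapping the roles of $\bm{\theta}_1$ and $\bm{\theta}_2$ (now using that $\bm{\delta}_1^{*}$ is feasible for the $\bm{\theta}_2$-problem) gives the matching upper bound, and together they yield $|\ell_{\varepsilon}(\bm{x},\bm{\theta}_1)-\ell_{\varepsilon}(\bm{x},\bm{\theta}_2)|\leq C_{\bm{\theta}}||\bm{\theta}_1-\bm{\theta}_2||$.

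For the second (smoothness) inequality the key tool is Danskin's theorem, which lets me differentiate through the inner maximization without differentiating the maximizer: $\nabla_{\bm{\theta}}\ell_{\varepsilon}(\bm{x},\bm{\theta})=\nabla_{\bm{\theta}}\ell(\bm{x}+\bm{\delta}^{*}(\bm{\theta}),\bm{\theta})$. I would then insert the intermediate point $\nabla_{\bm{\theta}}\ell(\bm{x}+\bm{\delta}_1^{*},\bm{\theta}_2)$ and apply the triangle inequality. The term $||\nabla_{\bm{\theta}}\ell(\bm{x}+\bm{\delta}_1^{*},\bm{\theta}_1)-\nabla_{\bm{\theta}}\ell(\bm{x}+\bm{\delta}_1^{*},\bm{\theta}_2)||$ is controlled by the $C_{\bm{\theta}\bm{\theta}}$-smoothness in $\bm{\theta}$ from \req{orgC}, giving $C_{\bm{\theta}\bm{\theta}}||\bm{\theta}_1-\bm{\theta}_2||$, while $||\nabla_{\bm{\theta}}\ell(\bm{x}+\bm{\delta}_1^{*},\bm{\theta}_2)-\nabla_{\bm{\theta}}\ell(\bm{x}+\bm{\delta}_2^{*},\bm{\theta}_2)||$ is controlled by the $C_{\bm{\theta}\bm{x}}$-Lipschitzness of $\nabla_{\bm{\theta}}\ell$ in the input, giving $C_{\bm{\theta}\bm{x}}||\bm{\delta}_1^{*}-\bm{\delta}_2^{*}||$. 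Finally I would bound $||\bm{\delta}_1^{*}-\bm{\delta}_2^{*}||\leq||\bm{\delta}_1^{*}||+||\bm{\delta}_2^{*}||\leq2\varepsilon$ from the norm constraint, yielding the claimed $C_{\bm{\theta}\bm{\theta}}||\bm{\theta}_1-\bm{\theta}_2||+2\varepsilon C_{\bm{\theta}\bm{x}}$.

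The main obstacle, and the conceptual heart of the result, is the term $||\bm{\delta}_1^{*}-\bm{\delta}_2^{*}||$. Unlike the $C_{\bm{\theta}\bm{\theta}}$ contribution, this quantity need not shrink as $\bm{\theta}_1\to\bm{\theta}_2$, because the maximizer can jump discontinuously when the parameter moves; the best generic estimate is the crude $2\varepsilon$ bound from the feasible set. This leaves a constant offset $2\varepsilon C_{\bm{\theta}\bm{x}}$ on the right-hand side that does not vanish in the limit, so the bound does not certify $C_s$-smoothness in the sense of \req{LipSmo} --- precisely the point that adversarial loss fails to be Lipschitz-smooth. Two technical caveats I would flag: invoking Danskin's theorem requires the inner maximum to be attained together with enough regularity of $\ell$ (and a selection rule when the maximizer is non-unique), and for $p\neq2$ the step from $||\bm{\delta}_i^{*}||_p\leq\varepsilon$ to the $L_2$ bound $||\bm{\delta}_i^{*}||\leq\varepsilon$ needs either $p=2$ or a dimension-dependent norm-equivalence factor --- exactly the looseness the present paper sets out to refine.
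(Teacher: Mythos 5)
Your proposal is correct and follows essentially the same route as the paper's proof sketch of \req{liuEq}: for the gradient bound you insert the intermediate point $\nabla_{\bm{\theta}}\ell(\bm{x}+\bm{\delta}_1^{*},\bm{\theta}_2)$, apply the triangle inequality together with the $C_{\bm{\theta}\bm{\theta}}$ and $C_{\bm{\theta}\bm{x}}$ conditions of Assumption~\ref{assp}, and bound the distance between the two maximizers crudely by $2\varepsilon$, exactly as the paper does before averaging over the $N$ samples. Your added remarks --- the Danskin/selection issue and the fact that $\|\bm{\delta}\|_p\leq\varepsilon$ only gives $\|\bm{\delta}\|_2\leq\varepsilon$ up to a norm-equivalence factor when $p\neq 2$ --- are legitimate caveats that the paper's sketch also glosses over, and they do not change the substance of the argument.
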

From this proposition, \citet{liu2020loss} concluded that the adversarial loss $L_{\varepsilon}(\bm{\theta})$ is not globally smooth
for $\bm{\theta }$ because \req{advsmo} has a constant term.
If the gradient is not Lipschitz continuous, the gradient method is not effective as mentioned above.
However, if there is a set where the gradient of the loss function is locally Lipschitz continuous,
the gradient method can be effective under the condition of \req{LipCond}.
Therefore, this paper investigates the smoothness of adversarial loss in detail. 

\subsection{EntropySGD}
Many studies of deep learning for standard training
 have investigated loss landscape in parameter space because
a flat loss landscape improves the generalization performance~\cite{chaudhari2019entropy,sung2020s,wu2019noisy,jastrzkebski2017three,SHAM}.
To obtain a parameter vector $\bm{\theta}$ 
located on the flat landscape of the objective function $L(\bm{\theta})$, \citet{chaudhari2019entropy} have presented EntropySGD.
The objective function of EntropySGD is 
\begin{align}
    \textstyle\!\!\!\!-F(\bm{\theta})\!=-\mathrm{log}\!\int_{\bm{\theta}'}\!\mathrm{exp} \left(-\beta L(\bm{\theta}')-\beta\frac{\gamma}{2}||\bm{\theta}\!-\!\bm{\theta}'||_2^2 \right)\!d\bm{\theta}',
    \label{Opt}
\end{align}
where $\gamma$ is a hyperparameter, which determines a smoothness of loss. $\beta$ is usually set to one.
The gradient of EntropySGD with $\beta=1$ becomes 
\begin{align}\textstyle
    -\nabla_{\theta} F(\bm{\theta})=\gamma(\bm{\theta}-\mathbb{E}_{p_{\bm{\theta}}(\bm{\theta}')}[\bm{\theta}']),\label{GradEnSGD}
\end{align}
where $p_{\bm{\theta}}$ is a probability density function as
\begin{align}\textstyle
    p_{\bm{\theta}}(\bm{\theta}')=\frac{1}{\mathrm{exp}(F(\bm{\theta},\gamma))}\mathrm{exp} \left(- L(\bm{\theta}')-\frac{\gamma}{2}||\bm{\theta}\!-\!\bm{\theta}'||_2^2 \right).
    \label{PropEnSGD}
\end{align}
\citet{chaudhari2019entropy} have shown that EntropySGD improves the smoothness of the smooth loss: i.e., 
the Lipschitz constant of $-\nabla_{\theta} F(\bm{\theta})$ is smaller than that of $\nabla_{\theta} L(\bm{\theta})$
when $L(\bm{\theta})$ is smooth. 
However, for non-smooth loss functions, they do not show the effectiveness of EntropySGD.
Thus, the effectiveness of EntropySGD for adversarial training is still unclear due to non-smoothness.
\subsection{Related work}\label{rworkad}
    For improving adversarial robustness, several studies focus on the flatness of the loss landscape in the input data space
    because adversarial examples are perturbations in the input space~\cite{LLR,lmt,parseval,engstrom2018evaluating}.
    \citet{LLR} have presented a regularization method to flatten the loss landscape with respect to data points.
    Lipschitz constraints of the models can also be regarded as methods for flattening the loss in data space~\cite{lmt,parseval}.
    Few studies investigate the loss landscape of adversarial training in parameter space 
    because the relationship between the loss landscape in parameter space and the robustness had not been known before studies of~\cite{liu2020loss,awp}.
    \citet{awp} have experimentally shown that adversarial training can sharpen the loss landscape, which degrades generalization performance, and have proposed adversarial weight perturbation (AWP)
    to flatten the adversarial loss.
    Similarly, \citet{YamadaFlat} have investigated the flatness of adversarial loss for logistic regression. 
    Compared with these studies, we theoretically investigate the smoothness of adversarial loss with respect to parameters following~\cite{liu2020loss}.
    
    For standard training of deep neural networks (DNNs),
    \citet{keskar2016large} have shown that large batch training causes the sharp loss landscape, which
    causes the poor generalization performance. 
    \citet{Explor} have revealed the relationship between generalization performance and the flatness by using PAC-Bayes.
    A lot of studies have investigated approaches for flattening the loss landscape~\cite{SHAM,chaudhari2019entropy,sung2020s,wu2019noisy}.
    Since the spectral norm of the Hessian matrix is used as a measure of flatness~\cite{keskar2016large,chaudhari2019entropy,zhang2021flatness},
    there is a relationship between Lipschitz-smoothness and flatness:
    if a $C_s$-Lipschitz gradient $\nabla_{\theta} L(\bm{\theta})$ is everywhere differentiable,
    we have 
   \begin{align}\textstyle
       \textstyle\sup_{\bm{\theta}} \sigma_1(\nabla_{\theta}^2 L(\bm{\theta}))&\textstyle\leq C_s,\label{HesUp}
   \end{align}
   where $\sigma_i$ is the $i$-th largest singular value, and thus, $\sigma_1(\cdot)$ is a spectral norm.
   Equation~(\ref{HesUp}) indicates that smooth functions with small $C_s$ tend to have small spectral norms of the Hessian matrices, i.e., flat loss landscapes.   
We further discuss the relationship between flatness and smoothness in the supplementary materials.
We focus on EntropySGD since it is a basic method for smoothing the loss.
\section{Smoothness of adversarial training}
\label{smoanalysis}
\citet{liu2020loss} have investigated the smoothness of the adversarial loss as Proposition~\ref{liuProp}.
Its proof of sketch is as follows:
Let $\bm{x}^{1\prime}$ and $\bm{x}^{2\prime}$ be adversarial examples for $\bm{\theta}_1$ and $\bm{\theta}_2$, respectively.
We have
\begin{align}
    \!\!\!\textstyle||\nabla_{\bm{\theta}}\ell_{\varepsilon}(\bm{x},\bm{\theta}_1)-\nabla_{\bm{\theta}}\ell_{\varepsilon}(\bm{x},\bm{\theta}_2)||&=
    ||\nabla_{\bm{\theta}}\ell(\bm{x}^{1\prime},\bm{\theta}_1)\!-\!\nabla_{\bm{\theta}}\ell(\bm{x}^{2\prime},\bm{\theta}_2)||\nonumber\\
    &\leq 
    ||\nabla_{\bm{\theta}}\ell(\bm{x}^{1\prime},\bm{\theta}_1)\!-\!\nabla_{\bm{\theta}}\ell(\bm{x}^{1\prime},\bm{\theta}_2)||\!+\!||\nabla_{\bm{\theta}}\ell(\bm{x}^{1\prime},\bm{\theta}_2)\!-\!\nabla_{\bm{\theta}}\ell(\bm{x}^{2\prime},\bm{\theta}_2)||\nonumber
    \\&\leq C_{\bm{\theta}\bm{\theta}}||\bm{\theta}_1-\bm{\theta}_2||+C_{\bm{\theta}\bm{x}}||\bm{x}^{1\prime}-\bm{x}^{2\prime}||.\label{liuEq}    
\end{align}
Since $||\bm{x}-\bm{x}^\prime||\leq \varepsilon$ and $\nabla_{\theta} L_{\varepsilon}(\bm{\theta})$ is $\frac{1}{N}\!\sum_{n}\!\nabla_{\theta}  \ell_{\varepsilon}(\bm{x}_n, \bm{\theta})$, 
we can show $||\nabla_{\bm{\theta}}L_{\varepsilon}(\bm{\theta}_1)\!-\!\nabla_{\bm{\theta}}L_{\varepsilon}(\bm{\theta}_2)||\leq C_{\bm{\theta}\bm{\theta}}||\bm{\theta}_1-\bm{\theta}_2||+2\varepsilon C_{\bm{\theta}\bm{x}}$ from \req{liuEq}.
From Proposition~\ref{liuProp}, \citet{liu2020loss} conclude that the gradient of the adversarial loss is not Lipschitz continuous.
However, they ignore the dependence of adversarial examples on parameters: $\bm{x}^{\prime}$ can be regarded as the function of $\bm{\theta}$ as
$\bm{x}^{\prime}(\bm{\theta})$, i.e., $\bm{x}^{1\prime}=\bm{x}^\prime (\bm{\theta}_1)$.
Using $\bm{x}^{\prime} (\bm{\theta})$,  we can immediately derive the following lemma from \req{liuEq} :
\begin{lemma}\label{AdvBase}
    If the adversarial example $\bm{x}^\prime (\bm{\theta})$ is a $C$-Lipschitz function, 
    the gradient of adversarial loss is $(C_{\bm{\theta}\bm{\theta}}+CC_{\bm{\theta}\bm{x}})$-Lipschitz, that is, adversarial loss is $(C_{\bm{\theta}\bm{\theta}}+CC_{\bm{\theta}\bm{x}})$-smooth.    
\end{lemma}
This lemma indicates that Proposition~\ref{liuProp} does not prove non-smoothness of adversarial loss if adversarial examples are functions of the parameter.
Therefore, we need to analyze the dependence of adversarial examples on parameters.
However, adversarial examples $\bm{x}^{\prime}$ for DNNs cannot be solved in closed form.
Therefore, we first tackle this problem by using linear binary classification problems and next
investigate the Lipschitz continuity for general models using the optimal adversarial examples.
\subsection{Binary linear classification}\label{BNC}
First, we investigate the smoothness of the adversarial loss of the following problem:\\
\textbf{Problem formulation for Sec.~\ref{BNC}} ~
We have a dataset $\{(\bm{x},y)_n\}_{n=1}^{N}$ where $\bm{x}\in\mathbb{R}^d$ is a data point, $y\in \{-1,1\}$ is a binary label, and $\bm{\theta}\in \mathbb{R}^d$ is a parameter vector.
Let $f(\bm{x},\bm{\theta})\!=\!\mathrm{sign}(\bm{\theta}^T\bm{x})$ be a model and $\bm{\delta}$ be an adversarial perturbation 
whose $L_p$ norm is constrained as $||\bm{\delta}||_p\leq \varepsilon$.
We train $f(\bm{x},\bm{\theta})$ by minimizing the following adversarial loss:
\begin{align}
    \textstyle
\!\!\!\!L_{\varepsilon}(\bm{\theta})&\textstyle \!=\!\frac{1}{N}\sum_n \ell_{\varepsilon}(\bm{x}_n,y_n,\bm{\theta}),\nonumber \\\textstyle 
\!\!\!\ell_{\varepsilon}(\bm{x}_n,y_n,\bm{\theta})&\textstyle \!=\!\max_{||\bm{\delta}_n||_p\leq \varepsilon}\mathrm{log}\left(1+\mathrm{exp}(-y_n\bm{\theta}^T (\bm{x}_n+\bm{\delta}_n))\!\right)\!.\nonumber
\end{align}

For this binary linear classification, we can investigate the relationship between $\bm{x}^{\prime}$ and $\bm{\theta}$
because we can solve the optimal adversarial examples $\bm{x}^{\prime}\!=\!\bm{x}+\bm{\delta}$ in closed form.
The following lemma is a result for adversarial training with $L_2$ norm constraints $||\bm{\delta}||_2\leq \varepsilon$:
\begin{lemma}
    \label{l2lem}
    Let $\bm{x}^\prime (\bm{\theta}_1)$ and $\bm{x}^\prime (\bm{\theta}_2)$ be adversarial examples around the data point $\bm{x}$ for $\bm{\theta}_1$ and $\bm{\theta}_2$, respectively.
    For adversarial training with $L_2$ norm constraints $||\bm{\delta}||_2\leq \varepsilon$, 
    if there exists a lower bound $\theta_{min}\in \mathbb{R}$ such as $||\bm{\theta}||_2\!\geq\! \theta_{\min}\!>\!0$, we have the following inequality:
    \begin{align}
     \textstyle   ||\bm{x}^\prime (\bm{\theta}_1)\!-\!\bm{x}^\prime (\bm{\theta}_2)||\!\leq\!\frac{\varepsilon }{\theta_{\min}}||\bm{\theta}_1\!-\!\bm{\theta}_2||.
    \end{align}
    Thus, adversarial examples are $ (\frac{\varepsilon }{\theta_{\min}})$-Lipschitz on a bounded set not including the origin $\bm{\theta}= \bm{0}$.
\end{lemma}
This lemma indicates that adversarial examples with $L_2$ constraints are Lipschitz continuous function of $\bm{\theta}$ in the set excluding the origin.
From Lemmas~\ref{AdvBase} and \ref{l2lem}, we can derive the following theorem:
\begin{theorem}\label{l2Thm}
    For adversarial training with $L_2$ norm constraints $||\bm{\delta}||_2\!\leq\! \varepsilon$,
    if we have $||\bm{\theta}||_2\!\geq\! \theta_{\min}\!>\!0$, the following inequality holds:
    \begin{align}\label{l2c}
        \textstyle  \!\!||\nabla_{\bm{\theta}}L_{\varepsilon}(\bm{\theta}_1)\!-\!\nabla_{\bm{\theta}}L_{\varepsilon}(\bm{\theta}_2)||\!\leq\!(C_{\bm{\theta}\bm{\theta}}\!+\!\frac{\varepsilon C_{\bm{\theta}\bm{x}}}{\theta_{\min}})||\bm{\theta}_1\!-\!\bm{\theta}_2||.
    \end{align}
    Thus, adversarial loss $L_{\varepsilon}$ is $ (C_{\bm{\theta}\bm{\theta}}\!+\!\frac{\varepsilon C_{\bm{\theta}\bm{x}}}{\theta_{\min}})$-smooth on a bounded set that does not include $\bm{\theta}\!=\!\bm{0}$.
\end{theorem}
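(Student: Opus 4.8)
The plan is to obtain \req{l2c} as a direct composition of Lemma~\ref{l2lem} and Lemma~\ref{AdvBase}, since together they already supply every ingredient. The crucial observation is that Lemma~\ref{AdvBase} expresses the smoothness constant of the adversarial loss in terms of \emph{any} Lipschitz constant $C$ of the adversarial-example map $\bm{x}^\prime(\bm{\theta})$, while Lemma~\ref{l2lem} produces precisely such a constant, $C=\varepsilon/\theta_{\min}$, for the $L_2$-constrained problem on a set bounded away from the origin. Substituting this $C$ into $C_{\bm{\theta}\bm{\theta}}+CC_{\bm{\theta}\bm{x}}$ yields exactly the claimed constant $C_{\bm{\theta}\bm{\theta}}+\varepsilon C_{\bm{\theta}\bm{x}}/\theta_{\min}$, so the theorem is essentially a bookkeeping step once the two lemmas are in place.

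Concretely, I would proceed as follows. First, I would fix a bounded set $\Theta$ contained in $\{\bm{\theta}\mid \|\bm{\theta}\|_2\ge\theta_{\min}\}$ and note that, since $\theta_{\min}>0$ by hypothesis, Lemma~\ref{l2lem} applies. Applying it around each training point $\bm{x}_n$ shows that every adversarial-example map $\bm{x}_n^\prime(\bm{\theta})$ is $(\varepsilon/\theta_{\min})$-Lipschitz on $\Theta$. The key point to verify here is \emph{uniformity}: the Lipschitz constant delivered by Lemma~\ref{l2lem} does not depend on the base point $\bm{x}$, so a single $C=\varepsilon/\theta_{\min}$ works simultaneously for all $n$.

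Second, I would invoke Lemma~\ref{AdvBase} with this $C$ to conclude that each per-example adversarial loss $\ell_{\varepsilon}(\bm{x}_n,\bm{\theta})$ has a gradient that is $(C_{\bm{\theta}\bm{\theta}}+\varepsilon C_{\bm{\theta}\bm{x}}/\theta_{\min})$-Lipschitz on $\Theta$. Third, I would pass from the per-example bound to $L_\varepsilon$ by writing $\nabla_{\bm{\theta}}L_\varepsilon=\tfrac1N\sum_n\nabla_{\bm{\theta}}\ell_\varepsilon(\bm{x}_n,\bm{\theta})$ and applying the triangle inequality, which preserves the common Lipschitz constant under averaging and gives \req{l2c} directly.

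The real content has already been discharged inside the two lemmas, so I do not expect a genuine obstacle; the only items requiring care are (i) confirming that Assumption~\ref{assp} indeed holds for the logistic loss of the linear model on the bounded set $\Theta$, so that the constants $C_{\bm{\theta}\bm{\theta}}$ and $C_{\bm{\theta}\bm{x}}$ invoked by Lemma~\ref{AdvBase} actually exist (this follows because the relevant $\bm{\theta}$- and $\bm{x}$-Hessians of the logistic loss involve bounded sigmoid derivatives multiplied by the bounded quantities $\bm{x}$ and $\bm{\theta}$ on $\Theta$), and (ii) ensuring that $\Theta$ is simultaneously bounded and bounded away from $\bm{0}$, which is exactly the exclusion of the origin required by Lemma~\ref{l2lem} and already flagged in the theorem statement.
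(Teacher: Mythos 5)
Your proposal is correct and follows essentially the same route as the paper: the paper likewise substitutes the $(\varepsilon/\theta_{\min})$-Lipschitz bound on $\bm{x}^\prime(\bm{\theta})$ from Lemma~2 into the per-example gradient inequality underlying Lemma~1 and then averages over the $N$ data points via the triangle inequality. Your added remarks on the uniformity of the constant across data points and on verifying Assumption~1 for the logistic loss are sensible bookkeeping but do not change the argument.
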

This theorem shows that the adversarial loss for the binary linear classification with the $L_2$ constraint
is a smooth function for $\bm{\theta}\!\neq\! \bm{0}$.
From \req{LipCond}, if $L_{\varepsilon}(\bm{\theta}^0)\!<\! L_{\varepsilon}(\bm{0})$, a gradient method is effective in this case.
Note that the Lipschitz constant ($ C_{\bm{\theta}\bm{\theta}}\!+\!\frac{\varepsilon C_{\bm{\theta}\bm{x}}}{\theta_{\min}}$)
 is larger than that for standard training ($ C_{\bm{\theta}\bm{\theta}}$).

Next, we provide the same analysis for the $L_\infty$ constraint of adversarial examples as follows:
\begin{lemma}
    \label{linflem}
    For adversarial training with $L_\infty$ norm constraints $||\bm{\delta}||_\infty \!\leq\! \varepsilon$,
    if the sign of one element at least is different between $\bm{\theta}_1$ and $\bm{\theta}_2$ ($\exists i\!:\!\mathrm{sign}(\theta_{1,i})\neq \mathrm{sign}(\theta_{2,i})$),
    adversarial examples are not Lipschitz continuous.
    If all signs are the same ($\forall i\!:\!\mathrm{sign}(\theta_{1,i})= \mathrm{sign}(\theta_{2,i})$),
    we have 
    \begin{align}
        \textstyle  ||\bm{x}^\prime (\bm{\theta}_1)\!-\!\bm{x}^\prime (\bm{\theta}_2)||\!=\!0.
    \end{align}
    Thus, adversarial examples are Lipschitz continuous on a bounded set
    that does not include $\theta_i= 0,\forall i$ and where no signs of elements change.
\end{lemma}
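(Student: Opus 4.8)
The plan is to first compute the optimal $L_\infty$ perturbation in closed form, then read off both claims directly from that formula. Since $\mathrm{log}(1+\mathrm{exp}(-z))$ is strictly decreasing in $z$, maximizing $\ell(\bm{x}+\bm{\delta},y,\bm{\theta})$ over the box $\{||\bm{\delta}||_\infty\!\leq\!\varepsilon\}$ is equivalent to minimizing the linear term $y\,\bm{\theta}^T\bm{\delta}=\sum_i y\theta_i\delta_i$. This objective separates across coordinates, so its minimizer is attained at a vertex of the box with $\delta_i=-\varepsilon\,\mathrm{sign}(y\theta_i)=-\varepsilon y\,\mathrm{sign}(\theta_i)$ (using $y\in\{-1,1\}$). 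Hence the optimal adversarial example is $x_i^\prime(\bm{\theta})=x_i-\varepsilon y\,\mathrm{sign}(\theta_i)$, depending on $\bm{\theta}$ only through the signs of its coordinates.

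With this formula in hand, the equal-sign claim is immediate. If $\mathrm{sign}(\theta_{1,i})=\mathrm{sign}(\theta_{2,i})$ for every $i$, then $x_i^\prime(\bm{\theta}_1)=x_i^\prime(\bm{\theta}_2)$ coordinate-wise, so $||\bm{x}^\prime(\bm{\theta}_1)-\bm{x}^\prime(\bm{\theta}_2)||=0$. In particular, on any bounded set on which no coordinate crosses zero the map $\bm{\theta}\mapsto\bm{x}^\prime(\bm{\theta})$ is locally constant, hence trivially Lipschitz (with constant $0$), which combined with Lemma~\ref{AdvBase} would also give local smoothness of $L_\varepsilon$ there.

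For the non-Lipschitz claim I would exhibit a family of parameter pairs along which the Lipschitz ratio blows up. Fix a coordinate $i$ and a small $t>0$, and take $\bm{\theta}_1,\bm{\theta}_2$ agreeing in every coordinate except the $i$-th, where $\theta_{1,i}=t$ and $\theta_{2,i}=-t$; then the signs differ only in coordinate $i$. The closed form gives $x_i^\prime(\bm{\theta}_1)-x_i^\prime(\bm{\theta}_2)=-\varepsilon y(\mathrm{sign}(t)-\mathrm{sign}(-t))=-2\varepsilon y$ with no change in the other coordinates, so $||\bm{x}^\prime(\bm{\theta}_1)-\bm{x}^\prime(\bm{\theta}_2)||=2\varepsilon$ is a fixed positive constant, while $||\bm{\theta}_1-\bm{\theta}_2||=2t\to 0$. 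The ratio $2\varepsilon/(2t)=\varepsilon/t$ is therefore unbounded as $t\to0$, so no finite constant $C$ can satisfy the Lipschitz inequality, which proves that $\bm{x}^\prime(\bm{\theta})$ is not Lipschitz whenever signs disagree.

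The main obstacle is conceptual rather than computational: it lies in the discontinuity of $\mathrm{sign}$ at $0$. The perturbation jumps by $2\varepsilon$ across the hyperplane $\theta_i=0$ even though $\bm{\theta}$ varies continuously, and this jump is exactly what destroys Lipschitz continuity for the $L_\infty$ constraint, in sharp contrast to the $L_2$ case of Lemma~\ref{l2lem} where $\bm{x}^\prime(\bm{\theta})=\bm{x}-\varepsilon\bm{\theta}/||\bm{\theta}||$ varies smoothly away from the origin. I would therefore take care to (i) fix the convention for $\mathrm{sign}(0)$ and restrict the final Lipschitz claim to bounded sets avoiding $\theta_i=0$ and sign changes, and (ii) phrase ``not Lipschitz'' precisely as unboundedness of the difference quotient, which the construction above establishes.
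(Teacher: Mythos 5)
Your proposal is correct and follows essentially the same route as the paper: derive the closed form $\bm{x}'(\bm{\theta})=\bm{x}-\varepsilon y\,\mathrm{sign}(\bm{\theta})$ by reducing the maximization to minimizing the separable linear term $y\bm{\theta}^T\bm{\delta}$ over the box, observe that the map is locally constant where no sign changes, and attribute the failure of Lipschitz continuity to the jump of $\mathrm{sign}$ at zero. Your explicit family $\theta_{1,i}=t$, $\theta_{2,i}=-t$ with ratio $\varepsilon/t\to\infty$ actually makes the non-Lipschitz claim more rigorous than the paper's proof, which merely asserts it from the discontinuity of $\mathrm{sign}$.
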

This lemma indicates that the set where adversarial examples with $L_\infty$ norm constraints 
are Lipschitz continuous is smaller than the set where those with $L_2$ norm constraints 
are Lipschitz continuous.
From Lemmas~\ref{AdvBase} and \ref{linflem}, we have the following theorem:
\begin{theorem}\label{linfThm}
    For adversarial training with $L_\infty$ norm constraints $||\bm{\delta}||_\infty \!\leq\! \varepsilon$,
    in the set where $\forall i:\mathrm{sign}(\theta_{1,i})\!=\! \mathrm{sign}(\theta_{2,i})$,
    the following inequality holds:
    \begin{align}
        \textstyle    ||\nabla_{\bm{\theta}}L_{\varepsilon}(\bm{\theta}_1)-\nabla_{\bm{\theta}}L_{\varepsilon}(\bm{\theta}_2)||&\leq C_{\bm{\theta}\bm{\theta}}||\bm{\theta}_1-\bm{\theta}_2||.
    \end{align}
    Thus, the loss function $L_{\varepsilon}$ for adversarial training is $C_{\bm{\theta}\bm{\theta}}$-smooth
    on a bounded set that does not include $\theta_i= 0,\forall i$ and where no signs of elements change.
\end{theorem}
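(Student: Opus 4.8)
The plan is to obtain Theorem~\ref{linfThm} as the special case $C=0$ of Lemma~\ref{AdvBase}, using Lemma~\ref{linflem} to certify that the optimal adversarial example is constant — not merely Lipschitz — as a function of $\bm{\theta}$ on the relevant set. Concretely, Lemma~\ref{AdvBase} already converts a $C$-Lipschitz bound on $\bm{x}^\prime(\bm{\theta})$ into $(C_{\bm{\theta}\bm{\theta}}+CC_{\bm{\theta}\bm{x}})$-smoothness of $L_\varepsilon$, so if I can show $C=0$ is admissible on the sign-preserving set, the cross term $CC_{\bm{\theta}\bm{x}}$ vanishes and only $C_{\bm{\theta}\bm{\theta}}$ survives.

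First I would fix a bounded set $\Theta$ lying inside a single open orthant, i.e., on which every coordinate keeps a fixed nonzero sign; this is exactly the hypothesis that $\mathrm{sign}(\theta_{1,i})=\mathrm{sign}(\theta_{2,i})$ for all $i$ and all $\bm{\theta}_1,\bm{\theta}_2\in\Theta$. For each example $(\bm{x}_n,y_n)$ the maximizer of the logistic loss under the $L_\infty$ constraint is available in closed form as $\bm{\delta}_n=-\varepsilon y_n\,\mathrm{sign}(\bm{\theta})$, so $\bm{x}_n^\prime(\bm{\theta})=\bm{x}_n-\varepsilon y_n\,\mathrm{sign}(\bm{\theta})$ depends on $\bm{\theta}$ only through $\mathrm{sign}(\bm{\theta})$; since the latter is fixed on $\Theta$, Lemma~\ref{linflem} gives $||\bm{x}_n^\prime(\bm{\theta}_1)-\bm{x}_n^\prime(\bm{\theta}_2)||=0$ for every $n$ and all $\bm{\theta}_1,\bm{\theta}_2\in\Theta$. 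I would then feed this into the per-example bound \req{liuEq}, namely $||\nabla_{\bm{\theta}}\ell_\varepsilon(\bm{x}_n,\bm{\theta}_1)-\nabla_{\bm{\theta}}\ell_\varepsilon(\bm{x}_n,\bm{\theta}_2)||\le C_{\bm{\theta}\bm{\theta}}||\bm{\theta}_1-\bm{\theta}_2||+C_{\bm{\theta}\bm{x}}||\bm{x}_n^\prime(\bm{\theta}_1)-\bm{x}_n^\prime(\bm{\theta}_2)||$; the vanishing second term leaves $C_{\bm{\theta}\bm{\theta}}||\bm{\theta}_1-\bm{\theta}_2||$. Averaging over $n$ and applying the triangle inequality to $\nabla_{\bm{\theta}}L_\varepsilon=\tfrac1N\sum_n\nabla_{\bm{\theta}}\ell_\varepsilon(\bm{x}_n,\bm{\theta})$ then yields the stated inequality and hence $C_{\bm{\theta}\bm{\theta}}$-smoothness on $\Theta$.

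Analytically there is no real obstacle once the two lemmas are granted; the work is entirely in domain bookkeeping. The point needing care is that $\Theta$ must genuinely avoid every coordinate hyperplane $\{\theta_i=0\}$ and stay within one orthant, because it is precisely on the sign-change locus that $\mathrm{sign}(\bm{\theta})$ — and with it $\bm{x}_n^\prime(\bm{\theta})$ — jumps, so that $\bm{x}_n^\prime$ becomes discontinuous (the non-Lipschitz branch of Lemma~\ref{linflem}) and the choice $C=0$ is no longer valid. Restricting the statement to such a sign-preserving bounded set is exactly what legitimizes the clean constant $C_{\bm{\theta}\bm{\theta}}$, and it also explains why the admissible region here is strictly smaller than in the $L_2$ case of Theorem~\ref{l2Thm}.
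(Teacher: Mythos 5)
Your proposal is correct and follows essentially the same route as the paper: both use the per-example bound from \req{liuEq} together with Lemma~\ref{linflem}'s conclusion that $||\bm{x}^\prime(\bm{\theta}_1)-\bm{x}^\prime(\bm{\theta}_2)||=0$ on a sign-preserving set (equivalently, taking $C=0$ in Lemma~\ref{AdvBase}), then average over the $N$ examples. Your explicit closed form $\bm{\delta}_n=-\varepsilon y_n\,\mathrm{sign}(\bm{\theta})$ and the orthant bookkeeping are just slightly more spelled-out versions of what the paper's proof of Lemma~\ref{linflem} already establishes.
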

Figure~\ref{AdvEx} shows the intuition of Lemmas~\ref{l2lem} and \ref{linflem}.
In the case of the $L_2$ constraint, adversarial examples continuously move on the circle depending on $\bm{\theta}$.
On the other hand, in the case of the $L_\infty$ constraint,
 even if the difference between $\bm{\theta}_1$ and $\bm{\theta}_2$ is only small,
the distance between $\bm{x}^{1\prime}$ and $\bm{x}^{2\prime}$ can be $2\varepsilon$, which is the length of the side of the square.
This is because the adversarial examples are located at the corner of the square.
Thus, the Lipschitz continuity of adversarial examples depend on the types of the constraint for adversarial examples.
\begin{figure}[tb]
    \centering
    \begin{subfigure}[t]{0.37\linewidth}
    \includegraphics[width=1.2\linewidth]{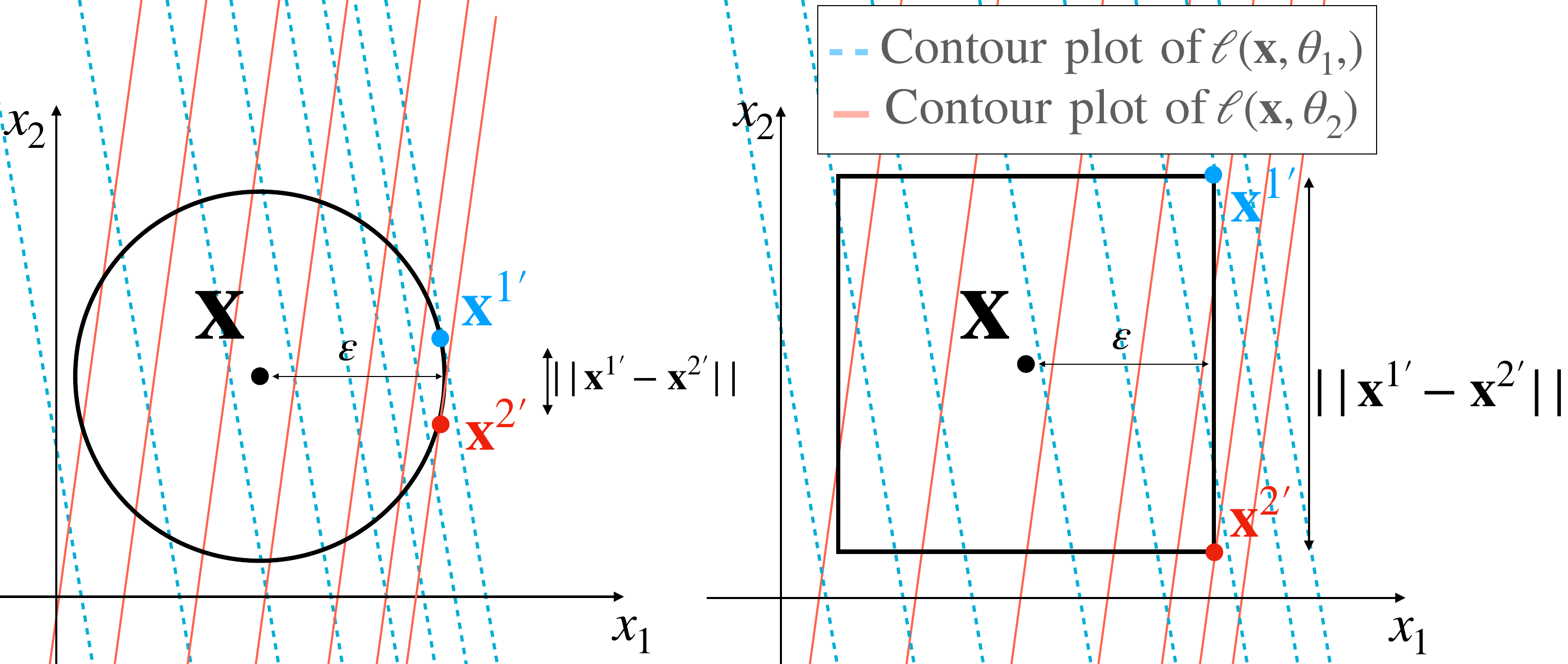}
    \caption{$\bm{x}'$ for the binary linear classification. 
    }\label{AdvEx}
\end{subfigure}
\hspace{\fill}
        \begin{subfigure}[t]{0.3\linewidth}
        \centering
        \hspace{5pt}
        \includegraphics[width=.65\linewidth]{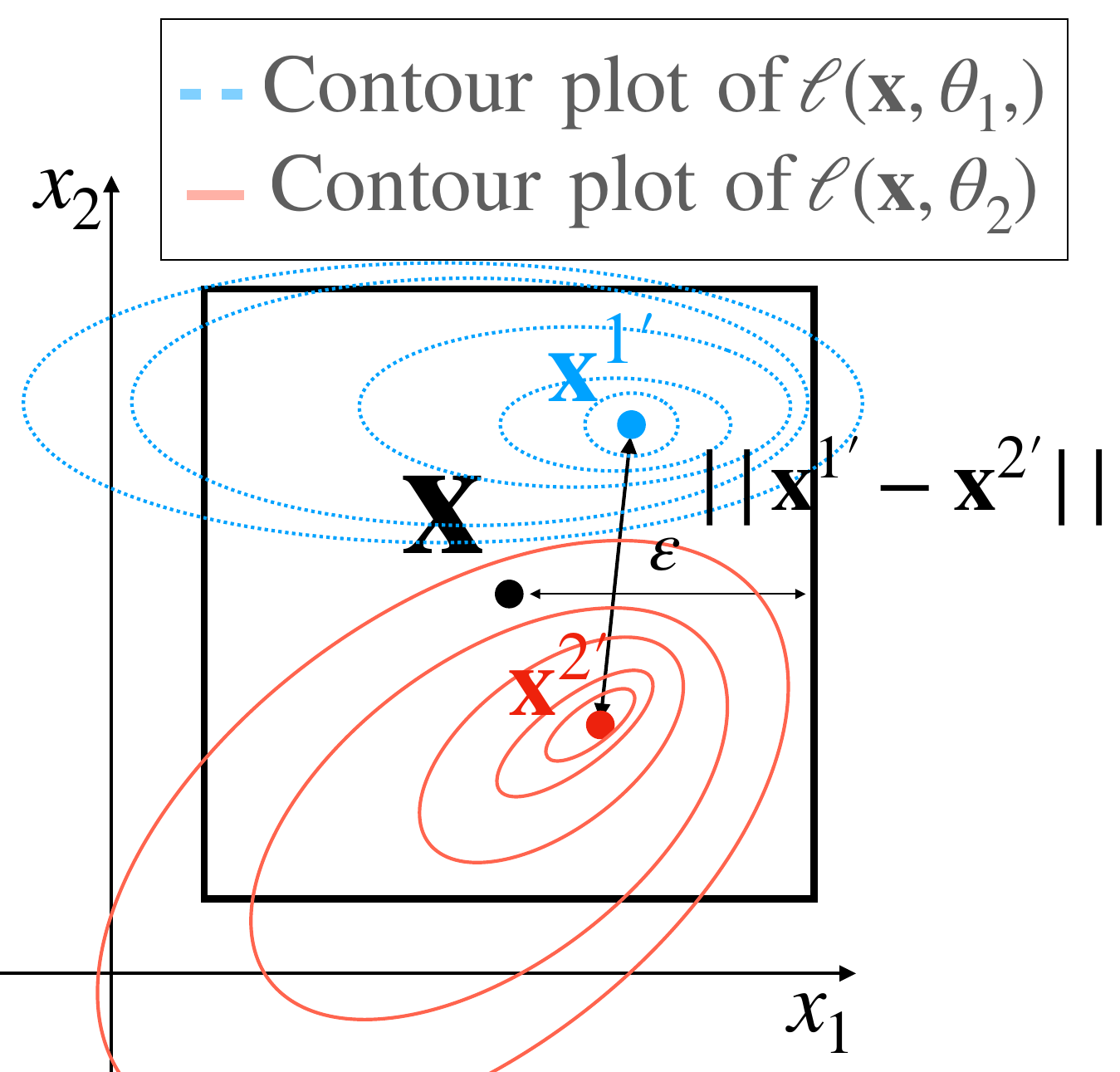}
        \caption{$\bm{x}'$ with the inactive constraint.}
        \label{InSi}
        \end{subfigure}
        \hspace{\fill}
        \begin{subfigure}[t]{0.3\linewidth}
        \centering
        \includegraphics[width=.63\linewidth]{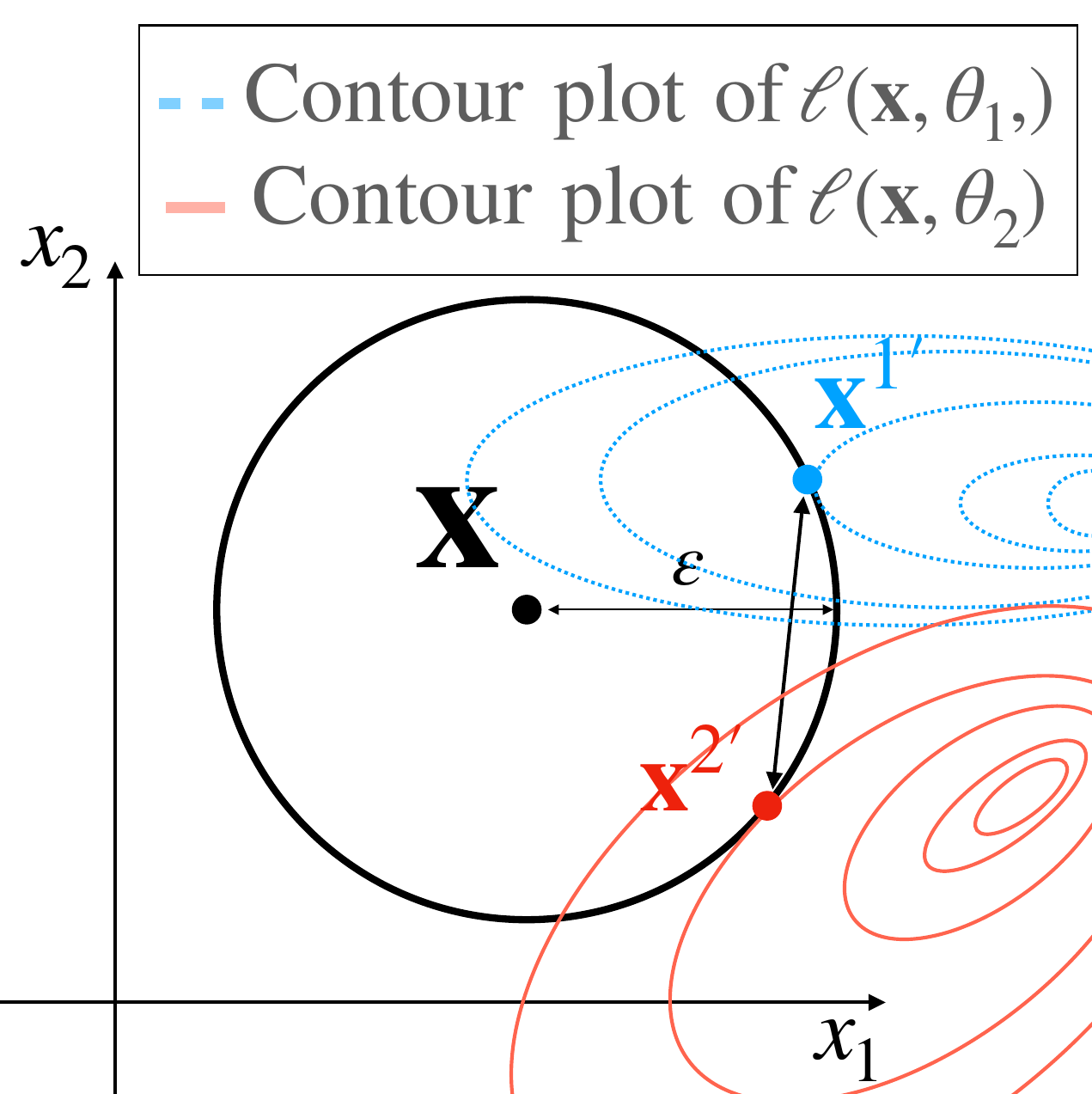}
        \caption{$\bm{x}'$ with the active constraint.}
        \label{OnEd}
        \end{subfigure}
        \caption{Illustrations of adversarial examples for $\bm{x}\!\in\! \mathbb{R}^2$.
        A circle and square correspond to the feasible regions for $L_2$ and $L_\infty$ constraints, respectively.    
        (a) shows adversarial examples for binary linear classification. 
        (b) and (c) show the optimal adversarial examples inside and on the boundary of the feasible region without assumptions about models, respectively. 
        }
\end{figure}
\subsection{General case}
For DNNs and multi-class classification, we could not obtain adversarial examples in closed form.
For this case, 
we investigate the local Lipchitz smoothness of adversarial loss with the local optimal adversarial examples by using the implicit function theorem~\cite{NLProgram}.
\subsubsection{Adversarial examples inside the feasible region}
The optimal adversarial examples for DNNs can be inside the feasible region $\{\bm{\delta}|~||\bm{\delta}||_p\!<\!\varepsilon\}$ (\rfig{InSi})~\cite{kim2020understanding},
while those for linear models are on the boundary of the feasible region (\rfig{AdvEx}).
Since the constraint is inactive in this case, the optimal adversarial examples $\bm{x}^{\prime*}$ satisfy $\nabla_x \ell (\bm{x}^{\prime*},\bm{\theta})\!=\!\bm{0}$,
and we show local Lipschitz continuity of $\bm{x}^{\prime}$ by applying the implicit function theorem to $\nabla_x \ell (\bm{x}^{\prime*},\bm{\theta})\!=\!\bm{0}$:
\begin{lemma}
    \label{InSLem}
    We assume that $\nabla_x \ell(\bm{x},\bm{\theta})$ is a $C^1$ function and $\bm{x}^{\prime*}$ is the local maximum point satisfying $\nabla^2_x \ell (\bm{x}^{\prime*},\bm{\theta})\!\prec\! 0$ inside the feasible region ($||\bm{x}^{\prime*}-\bm{x}||_p\!<\!\varepsilon$).\footnote{$\bm{A}\!\prec\! 0$ represents that $\bm{A}$ is negative definite.}
    If there is a constant $0\!<\!c\!<\!\infty$ such as $\max_i\lambda_i(\nabla^2_x \ell (\bm{x}^{\prime},\bm{\theta}))\!\leq\! -c$ where $\lambda_i$ is the $i$-th eigenvalue,
    the optimal adversarial example $\bm{x}^{\prime*}$ in some neighborhood $U$ of $\bm{\theta}$ is a continuously differentiable function $\bm{x}^\prime(\bm{\theta})$ and we have
    \begin{align}\textstyle
        ||\bm{x}^\prime(\bm{\theta}_1)-\bm{x}^\prime(\bm{\theta}_2)||\leq \frac{C_{\bm{\theta x}}}{c}||\bm{\theta}_1-\bm{\theta}_2||~~\forall\bm{\theta}_1,\bm{\theta}_2\in U. 
    \end{align}
\end{lemma}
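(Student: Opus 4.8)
The plan is to apply the implicit function theorem to the first-order stationarity condition and then bound the operator norm of the resulting Jacobian. Since $\bm{x}^{\prime*}$ is a local maximum lying strictly inside the feasible region, the constraint is inactive, so $\bm{x}^{\prime*}$ satisfies $\nabla_x \ell(\bm{x}^{\prime*},\bm{\theta})=\bm{0}$. I would therefore set $\bm{G}(\bm{x}',\bm{\theta}):=\nabla_x \ell(\bm{x}',\bm{\theta})$, which is $C^1$ by hypothesis, and note that its Jacobian with respect to $\bm{x}'$ is exactly the Hessian $\nabla^2_x \ell(\bm{x}',\bm{\theta})$. Because $\nabla^2_x \ell(\bm{x}^{\prime*},\bm{\theta})\prec 0$, this block is nonsingular, so the implicit function theorem yields a neighborhood $U$ of $\bm{\theta}$ and a unique $C^1$ map $\bm{x}'(\bm{\theta})$ with $\bm{G}(\bm{x}'(\bm{\theta}),\bm{\theta})=\bm{0}$ on $U$ and $\bm{x}'(\bm{\theta})=\bm{x}^{\prime*}$ at the base point. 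By continuity the constraint stays inactive on $U$ (shrinking $U$ if needed), so $\bm{x}'(\bm{\theta})$ is genuinely the interior optimum.

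Next I would extract the Jacobian of the implicit map. Differentiating $\bm{G}(\bm{x}'(\bm{\theta}),\bm{\theta})=\bm{0}$ gives
\begin{align}
    \textstyle\frac{\partial \bm{x}'}{\partial \bm{\theta}}=-[\nabla^2_x \ell]^{-1}\,\nabla_{\bm{\theta}}\nabla_x \ell,\nonumber
\end{align}
evaluated along $\bm{x}'(\bm{\theta})$. Taking spectral norms and using submultiplicativity,
\begin{align}
    \textstyle\left\|\frac{\partial \bm{x}'}{\partial \bm{\theta}}\right\|_2\leq \left\|[\nabla^2_x \ell]^{-1}\right\|_2\,\left\|\nabla_{\bm{\theta}}\nabla_x \ell\right\|_2.\nonumber
\end{align}
The first factor is controlled by the eigenvalue hypothesis: since $\nabla^2_x \ell$ is symmetric with every eigenvalue $\leq -c$, every eigenvalue of its inverse has magnitude $\leq 1/c$, so $\|[\nabla^2_x \ell]^{-1}\|_2\leq 1/c$. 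For the second factor I would invoke the third inequality of Assumption~\ref{assp}: $\nabla_{\bm{\theta}}\ell$ being $C_{\bm{\theta x}}$-Lipschitz in $\bm{x}$ means the mixed block $\nabla_x \nabla_{\bm{\theta}}\ell$ has spectral norm at most $C_{\bm{\theta x}}$, and since the mixed partials coincide ($\nabla_{\bm{\theta}}\nabla_x \ell=(\nabla_x \nabla_{\bm{\theta}}\ell)^T$) and the spectral norm is transpose-invariant, $\|\nabla_{\bm{\theta}}\nabla_x \ell\|_2\leq C_{\bm{\theta x}}$. Combining the two factors gives $\|\partial \bm{x}'/\partial \bm{\theta}\|_2\leq C_{\bm{\theta x}}/c$ at every $\bm{\theta}\in U$.

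Finally, I would close with the mean value inequality: taking $U$ convex (shrinking if necessary), for any $\bm{\theta}_1,\bm{\theta}_2\in U$ integrating the Jacobian along the segment joining them and applying the uniform bound yields $\|\bm{x}'(\bm{\theta}_1)-\bm{x}'(\bm{\theta}_2)\|\leq (C_{\bm{\theta x}}/c)\|\bm{\theta}_1-\bm{\theta}_2\|$, which is the claim.

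The implicit-function bookkeeping is routine; the step needing the most care is the \emph{uniform} control of the two norm factors across all of $U$ rather than merely at the base point. The hypothesis $\max_i\lambda_i(\nabla^2_x \ell)\leq -c$ is precisely what upgrades the pointwise negative-definiteness (which only buys local invertibility at one point) to a uniform $1/c$ bound on $\|[\nabla^2_x \ell]^{-1}\|_2$ throughout $U$; without it the inverse Hessian could blow up as $\bm{\theta}$ varies and the Lipschitz constant would fail to be finite. I would also flag that the identification $\|\nabla_{\bm{\theta}}\nabla_x \ell\|_2\leq C_{\bm{\theta x}}$ relies on symmetry of the mixed second derivatives, which holds once $\ell$ is $C^2$ (consistent with $\nabla_x \ell$ being $C^1$ together with the smoothness assumptions).
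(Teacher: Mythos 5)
Your proposal is correct and follows essentially the same route as the paper's proof: invoke the implicit function theorem on $\nabla_x\ell(\bm{x}^{\prime*},\bm{\theta})=\bm{0}$, bound $\sigma_1$ of the implicit Jacobian $-[\nabla^2_x\ell]^{-1}\nabla_{\bm{\theta}}\nabla_x\ell$ by submultiplicativity using the eigenvalue bound for the inverse Hessian and Assumption~1 (via symmetry of mixed partials) for the cross term. Your explicit remarks about keeping the constraint inactive on $U$ and about needing the \emph{uniform} eigenvalue bound across $U$ are careful touches the paper glosses over, but the argument is the same.
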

From Lemmas~\ref{AdvBase} and \ref{InSLem}, adversarial loss can be locally $(C_{\bm{\theta\theta}}\!+\!C_{\bm{\theta x}}^2/c)$-smooth on the parameter set
such that the local maximum point $\bm{x}^{\prime *}$ exists inside the feasible region.
Note that $\nabla^2_x \ell (\bm{x}^{\prime*},\bm{\theta})\prec 0$ is a sufficient condition for the local maximum points, and thus, $\bm{x}^{\prime*}$ is expected to have a constant $c$ such as $\max_i\lambda_i(\nabla^2_x \ell (\bm{x}^{\prime*},\bm{\theta}))\leq -c$.
This lemma also indicates that non-smoothness of adversarial loss is caused by the constraints of adversarial examples:
if the constraints of adversarial examples are inactive, the adversarial loss tends to be smooth under Assumption~1.
Even so, the condition where the optimal adversarial examples exist inside the feasible region might be easily broken by the change of $\bm{\theta}$.
Next, we investigate the case when the constraints of adversarial attacks are active.
\subsubsection{Adversarial examples on the boundary of the feasible region}
In the previous section, we have shown the continuity of adversarial examples by applying the implicit function theorem to $\nabla_x \ell (\bm{x}^{\prime*},\bm{\theta})\!=\!\bm{0}$.
However, when the constraints of adversarial examples are active (\rfig{OnEd}), we cannot prove it in the same manner because $\nabla_x \ell (\bm{x}^{\prime*}\bm{\theta})\!\neq\! \bm{0}$.
In this case, instead of $\nabla_x \ell (\bm{x}^\prime,\bm{\theta})$, we use the gradient of the Lagrange function of adversarial examples for the implicit function theorem.
The Lagrange function $J(\bm{x}^\prime,\bm{\theta},\mu)$ with the $L_p$ norm constraint is given by
\begin{align}\textstyle
J(\bm{x}^\prime,\bm{\theta},\mu)=\ell (\bm{x}',\bm{\theta})-\mu (||\bm{x}'-\bm{x}||_p-\varepsilon),
\end{align}
where $\mu\!\in\! \mathbb{R}$ is a Lagrange multiplier. 
When we define $\tilde{\bm{x}}\!=\![\mu,\bm{x}^{\prime T}]^T$, 
the local maximum point $\bm{x}^{\prime*}$ of this problem satisfies
$\nabla_{\tilde{x}} J (\tilde{\bm{x}}^*,\bm{\theta})\!=\!\bm{0}$.
By applying the implicit function theorem to $\nabla_{\tilde{\bm{x}}}J(\bm{x}^\prime,\bm{\theta})\!=\!\bm{0}$,
we show the local continuity of $L_2$ attacks:
\begin{lemma}
\label{EdgeLem}
We assume that $\nabla_{\!x}\ell (\bm{x},\bm{\theta})$ is a $C^1$ function and $\tilde{\bm{x}}^{*}\!=\![\mu^*,\bm{x}^{\prime*T}]^T$ is the local maximum point satisfying $\mathrm{det}\!\left(\nabla^2_{\!\tilde{x}} J (\bm{\tilde{x}}^{*},\bm{\theta})\right)\!>\!0$
on the boundary of the feasible region of the $L_2$ constraint ($||\bm{x}^{\prime*}\!-\!\bm{x}||_2\!=\!\varepsilon$).
If there is a constant $0\!<\!c\!<\!\infty$ such as $\min_i\sigma_i(\nabla^2_{\!\tilde{x}} J (\tilde{\bm{x}},\bm{\theta}))\!\geq\! c$,
the local maximum point $\bm{x}^{\prime*}$ in some neighborhood $U$ of $\bm{\theta}$ is a continuously differentiable function $\bm{x}^\prime(\bm{\theta})$ and we have
\begin{align}\textstyle
    ||\bm{x}^\prime(\bm{\theta}_1)-\bm{x}^\prime(\bm{\theta}_2)||\leq \frac{C_{\theta x}}{c}||\bm{\theta}_1-\bm{\theta}_2||~~\forall\bm{\theta}_1,\bm{\theta}_2\in U.
\end{align}
\end{lemma}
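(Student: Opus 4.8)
The plan is to mirror the argument of Lemma~\ref{InSLem}, but to apply the implicit function theorem to the stationarity condition of the Lagrangian rather than to $\nabla_x\ell$ directly. Define $\bm{G}(\tilde{\bm{x}},\bm{\theta})=\nabla_{\tilde{x}}J(\tilde{\bm{x}},\bm{\theta})$. Because the constraint surface $||\bm{x}'-\bm{x}||_2=\varepsilon$ keeps $\bm{x}'$ bounded away from $\bm{x}$, the $L_2$ norm term is smooth there, so $\bm{G}$ is $C^1$ whenever $\nabla_x\ell$ is $C^1$. At the local maximum we have $\bm{G}(\tilde{\bm{x}}^*,\bm{\theta})=\bm{0}$, and the Jacobian of $\bm{G}$ with respect to $\tilde{\bm{x}}$ is exactly the bordered Hessian $\nabla^2_{\tilde{x}}J$. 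The hypothesis $\det(\nabla^2_{\tilde{x}}J(\tilde{\bm{x}}^{*},\bm{\theta}))>0$ guarantees this Jacobian is invertible, so the implicit function theorem yields a neighborhood $U$ of $\bm{\theta}$ and a $C^1$ map $\tilde{\bm{x}}(\bm{\theta})=[\mu(\bm{\theta}),\bm{x}'(\bm{\theta})^T]^T$ solving $\bm{G}(\tilde{\bm{x}}(\bm{\theta}),\bm{\theta})=\bm{0}$, with derivative
\begin{align}
    \nabla_{\bm{\theta}}\tilde{\bm{x}}=-\left(\nabla^2_{\tilde{x}}J\right)^{-1}\nabla_{\bm{\theta}}\nabla_{\tilde{x}}J.\nonumber
\end{align}

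Next I would compute the two factors on the right-hand side. The cross term $\nabla_{\bm{\theta}}\nabla_{\tilde{x}}J$ has a convenient structure: the row corresponding to $\mu$ is $\nabla_{\bm{\theta}}(-(||\bm{x}'-\bm{x}||_2-\varepsilon))=\bm{0}$, because the constraint does not depend on $\bm{\theta}$, while the rows corresponding to $\bm{x}'$ equal $\nabla_{\bm{\theta}}\nabla_{x'}\ell$ (the Lagrange term again drops out). Appending a zero row leaves the spectral norm unchanged, so $||\nabla_{\bm{\theta}}\nabla_{\tilde{x}}J||=||\nabla_{\bm{\theta}}\nabla_{x'}\ell||\leq C_{\bm{\theta}\bm{x}}$, where the bound is just the mixed-Hessian form of the third inequality in Assumption~\ref{assp}. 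For the inverse, since $\nabla^2_{\tilde{x}}J$ is symmetric its singular values are the moduli of its eigenvalues, so the assumption $\min_i\sigma_i(\nabla^2_{\tilde{x}}J)\geq c$ gives $||(\nabla^2_{\tilde{x}}J)^{-1}||\leq 1/c$.

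Combining these, $||\nabla_{\bm{\theta}}\tilde{\bm{x}}||\leq C_{\bm{\theta}\bm{x}}/c$ throughout $U$, and because $\bm{x}'(\bm{\theta})$ is a subvector of $\tilde{\bm{x}}(\bm{\theta})$, the same bound holds for $||\nabla_{\bm{\theta}}\bm{x}'||$. Integrating this derivative bound along the segment joining $\bm{\theta}_2$ to $\bm{\theta}_1$ (shrinking $U$ to a convex neighborhood if necessary) then gives the claimed estimate $||\bm{x}'(\bm{\theta}_1)-\bm{x}'(\bm{\theta}_2)||\leq\frac{C_{\theta x}}{c}||\bm{\theta}_1-\bm{\theta}_2||$.

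I expect the main obstacle to be conceptual rather than computational: recognizing that the correct object for the implicit function theorem in the active-constraint case is the bordered Hessian $\nabla^2_{\tilde{x}}J$, which is indefinite rather than sign-definite, so that invertibility must be supplied by the determinant and singular-value conditions instead of by the negative-definiteness used in Lemma~\ref{InSLem}. A secondary technical point is verifying that the solution branch $\tilde{\bm{x}}(\bm{\theta})$ continues to satisfy the \emph{uniform} lower bound $\min_i\sigma_i\geq c$ on all of $U$, which is what upgrades the pointwise derivative bound into a Lipschitz bound; this is precisely why the singular-value hypothesis is stated at the running point $\tilde{\bm{x}}$ and not only at $\tilde{\bm{x}}^{*}$.
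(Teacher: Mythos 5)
Your proposal is correct and follows essentially the same route as the paper's proof: applying the implicit function theorem to $\nabla_{\tilde{x}}J=\bm{0}$ via the bordered Hessian, bounding the inverse by $1/c$ through the singular-value hypothesis, bounding the cross term by $C_{\bm{\theta}\bm{x}}$ using the zero row coming from the $\mu$-component, and passing from $\tilde{\bm{x}}$ to its subvector $\bm{x}'$. The only cosmetic difference is that you obtain the Lipschitz estimate by integrating the derivative bound along a segment, while the paper states it directly via $\sup_{\bm{\theta}}\sigma_1(D_{\bm{\theta}}\tilde{\bm{g}})$.
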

Lemma~\ref{EdgeLem} uses the bordered Hessian matrix: 
\begin{align}\textstyle
\nabla^2_{\tilde{x}} J (\bm{\tilde{x}},\bm{\theta})=
\begin{bmatrix}\textstyle
    0&(\nabla_{x^\prime} ||\bm{x}'-\bm{x}||_p)^T\\
\nabla_{x^\prime} ||\bm{x}'-\bm{x}||_p&\nabla_{x^\prime}^2\ell (\bm{x}^\prime,\bm{\theta})-\mu\nabla_{x^\prime}^2||\bm{x}'-\bm{x}||_p
\end{bmatrix}.\label{BHes}
\end{align} 
To compute \req{BHes}, $||\bm{x}'-\bm{x}||_p$ should be twice differentiable, which is satisfied when $p\!=\!2$.
The condition $\mathrm{det}\left(\nabla^2_{\tilde{x}} J (\bm{\tilde{x}}^{*},\bm{\theta})\right)\!> \!0$ is a sufficient condition
of the local maximum point for constrained optimization problems~\cite{NLProgram}. 
From Lemmas~\ref{AdvBase} and \ref{EdgeLem},
adversarial loss can be locally $(C_{\bm{\theta\theta}}\!+\!C_{\bm{\theta x}}^2/c)$-smooth under the $L_2$ constraint even if the constraint is active.
On the other hand, if we use the $L_\infty$ constraint, we cannot show the same results 
because $||\bm{x}'\!-\!\bm{x}||_{\infty}$ is not twice differentiable.
Thus, it is difficult to show the Lipschitz continuity of attacks with the $L_{\infty}$ norm constraint.
This result also implies that non-smoothness of adversarial loss is caused by constraints.

Intriguingly,
Lemmas~\ref{InSLem} and \ref{EdgeLem} reveal the relationship between the flatness of the loss function with respect to input data
and the smoothness of the adversarial loss with respect to parameters.
If we flatten the loss function in the input space for robustness as described in Sec.~\ref{rworkad},
the singular values of the Hessian matrix $\nabla_x^2 \ell$ become small: i.e., $c$ can be a small value.
As a result, the Lipschitz constant of gradient $(C_{\bm{\theta\theta}}\!+\!C_{\bm{\theta x}}^2/c)$ increases,
and thus, the Lipschitz constant of the gradient of adversarial loss  with respect to $\bm{\theta}$ increases.
Since large Lipschitz constants decrease the convergence and stability of training~\cite{hardt2016train}, 
this relationship can explain why adversarial training is more difficult than standard training
even if adversarial loss is smooth.

\subsubsection{Limitations of the analysis}\label{limit}
From the above results, the adversarial loss can be locally smooth if
adversarial loss always uses the local optimal attack near the attack in the previous parameter update.
However, there might be several local maximum points of $\bm{x}'$ due to non-convexity of $\ell$ and adversarial loss $\ell_{\varepsilon}$
can use a different local maximum point for each parameter update. In this case, adversarial loss might be non-smooth even if we use the $L_2$ constraint.
In addition, the optimal attacks are difficult to find due to non-convexity, and we empirically use PGD attacks for generating adversarial examples.
We can conjecture that adversarial loss with the $L_\infty$ constraints using PGD does not have globally
Lipschitz continuous gradients because projection $\Pi_{\varepsilon}$ in PGD is not a continuous function.
Although non-singularity of Hessian matrices ($\nabla^2_x \ell (\bm{x},\bm{\theta})\!\prec\! 0$ and $\mathrm{det}\!\left(\nabla^2_{\tilde{x}} J (\bm{\tilde{x}},\bm{\theta})\right)\!>\!0$) is a sufficient condition for the local maximum point in Lemmas~\ref{InSLem} and \ref{EdgeLem}, it can be broken by the change in parameter~$\bm{\theta}$.
From the above limitations, the adversarial loss tends to be non-smooth more often than the clean loss especially using the $L_\infty$ constraint, 
and we should address the non-smoothness of the adversarial loss.
\section{EntropySGD for adversarial training}
In the previous section, we showed that adversarial training increases Lipschitz constants of gradient of adversarial loss and can cause non-smoothness.
If the loss function is non-smooth, the gradient-based optimization is not very effective.
To tackle this problem, we show that EntropySGD smoothens the non-smooth loss and can be used for adversarial training.
We prove the following theorem:
\begin{theorem}\label{EnThm}
    Let $\bm{\Sigma}_{\bm{\theta}^\prime}$ be a variance-covariance matrix of $p_{\bm{\theta}}(\bm{\theta}^\prime )$ in \req{PropEnSGD}.
    If we use EntropySGD for the non-negative loss function $L(\bm{\theta})\!\geq\! 0$, we have 
    \begin{align}\textstyle
        \!\!\!||\nabla_{\bm{\theta}} F(\bm{\theta}_1)\!-\!\nabla_{\bm{\theta}} F(\bm{\theta}_2))||\!\leq \!\left(\gamma \!+\! \gamma^2 \sup_{\bm{\theta}} ||\bm{\Sigma_{\bm{\theta}^\prime}}||_F \right )||\bm{\theta}_1\!-\!\bm{\theta}_2||, 
    \end{align} 
    and $||\bm{\Sigma}_{\bm{\theta}^\prime}||_{F}\!< \!\infty$ for $\bm{\theta}\!\in\! \mathbb{R}^m$.
    Thus, $-\nabla_{\bm{\theta}} F(\bm{\theta})$ is Lipschitz continuous on a bounded set of $\bm{\theta}$.
\end{theorem}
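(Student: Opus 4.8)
The plan is to bound the Lipschitz constant of $\nabla_{\theta} F$ by controlling the spectral norm of the Hessian $\nabla_{\theta}^2 F$ and then integrating along the segment joining $\bm{\theta}_1$ and $\bm{\theta}_2$. Starting from \req{GradEnSGD}, which gives $\nabla_{\theta} F(\bm{\theta}) = \gamma\big(\mathbb{E}_{p_{\bm{\theta}}}[\bm{\theta}'] - \bm{\theta}\big)$, I would differentiate once more in $\bm{\theta}$. The term $-\gamma\bm{\theta}$ contributes $-\gamma\bm{I}$, so the whole problem reduces to computing the Jacobian of the mean map $\bm{m}(\bm{\theta}) = \mathbb{E}_{p_{\bm{\theta}}}[\bm{\theta}']$.

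The key step is the identity $\nabla_{\bm{\theta}}\bm{m}(\bm{\theta}) = \gamma\,\bm{\Sigma}_{\bm{\theta}'}$, the standard cumulant relation for Gibbs measures. To derive it I would write $p_{\bm{\theta}}(\bm{\theta}') = g(\bm{\theta},\bm{\theta}')/Z(\bm{\theta})$ with $g = \exp(-L(\bm{\theta}') - \tfrac{\gamma}{2}\|\bm{\theta}-\bm{\theta}'\|^2)$, observe that $\nabla_{\bm{\theta}}\log p_{\bm{\theta}}(\bm{\theta}') = \gamma(\bm{\theta}' - \bm{m}(\bm{\theta}))$ (the $\nabla_{\bm{\theta}}\log Z$ term exactly cancels the $\bm{\theta}$-part of $\nabla_{\bm{\theta}}\log g$), and then differentiate $\bm{m}(\bm{\theta}) = \int \bm{\theta}'\,p_{\bm{\theta}}(\bm{\theta}')\,d\bm{\theta}'$ under the integral sign. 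This yields $\nabla_{\bm{\theta}}\bm{m} = \gamma\int \bm{\theta}'(\bm{\theta}'-\bm{m})^T p_{\bm{\theta}}\,d\bm{\theta}' = \gamma\,\bm{\Sigma}_{\bm{\theta}'}$, so that $\nabla_{\bm{\theta}}^2 F = -\gamma\bm{I} + \gamma^2\bm{\Sigma}_{\bm{\theta}'}$, whose spectral norm is at most $\gamma + \gamma^2\|\bm{\Sigma}_{\bm{\theta}'}\|_F$ by the triangle inequality together with $\sigma_1(\cdot)\le\|\cdot\|_F$.

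To convert this into the stated bound I would integrate the Hessian along $\bm{\theta}(t) = \bm{\theta}_2 + t(\bm{\theta}_1-\bm{\theta}_2)$, giving $\|\nabla_{\theta} F(\bm{\theta}_1) - \nabla_{\theta} F(\bm{\theta}_2)\| \le \big(\sup_{\bm{\theta}}\sigma_1(\nabla_{\theta}^2 F)\big)\|\bm{\theta}_1-\bm{\theta}_2\|$, which is exactly the claimed constant once the supremum is taken over the relevant bounded set. It remains to prove $\|\bm{\Sigma}_{\bm{\theta}'}\|_F < \infty$, and this is where $L \ge 0$ enters: since $\exp(-L(\bm{\theta}')) \le 1$, both the normalizer $Z(\bm{\theta}) \le \int \exp(-\tfrac{\gamma}{2}\|\bm{\theta}-\bm{\theta}'\|^2)\,d\bm{\theta}' = (2\pi/\gamma)^{m/2}$ and the second moment $\int \|\bm{\theta}'\|^2 p_{\bm{\theta}}\,d\bm{\theta}'$ are dominated by the corresponding Gaussian integrals, which are finite. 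Hence every entry of $\bm{\Sigma}_{\bm{\theta}'}$ is finite, so $\|\bm{\Sigma}_{\bm{\theta}'}\|_F < \infty$; continuity of $\bm{\theta}\mapsto\bm{\Sigma}_{\bm{\theta}'}$ then makes the supremum finite on any bounded set, yielding local Lipschitzness of $-\nabla_{\bm{\theta}} F$.

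The main obstacle is not the algebra of the covariance identity, which is routine for exponential-family Gibbs measures, but the analytic justification of differentiating twice under the integral sign and of the moment finiteness. Both hinge on producing an integrable, $\bm{\theta}$-locally-uniform dominating function, and the Gaussian domination afforded by $L \ge 0$ is precisely what supplies it; I would therefore spell out the dominated-convergence argument carefully, noting that on a compact neighborhood of $\bm{\theta}$ the factors $\|\bm{\theta}'\|^k \exp(-\tfrac{\gamma}{2}\|\bm{\theta}-\bm{\theta}'\|^2)$ admit a fixed integrable envelope. A secondary point to verify is that $Z(\bm{\theta}) > 0$, so that $p_{\bm{\theta}}$ is a genuine density; this holds as soon as $L$ is finite on a set of positive measure.
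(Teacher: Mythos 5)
Your proposal is correct and follows essentially the same route as the paper's proof: differentiate the gradient formula $-\nabla_{\bm{\theta}}F=\gamma(\bm{\theta}-\mathbb{E}_{p_{\bm{\theta}}}[\bm{\theta}'])$ to obtain $-\nabla^2_{\bm{\theta}}F=\gamma\bm{I}-\gamma^2\bm{\Sigma}_{\bm{\theta}'}$, bound the spectral norm by $\gamma+\gamma^2\|\bm{\Sigma}_{\bm{\theta}'}\|_F$, and use $L\ge 0$ (so $e^{-L}\le 1$) to dominate the moment integrals by Gaussian ones. Your presentation via the score identity and a single second-moment bound is slightly cleaner than the paper's entrywise sign-splitting of the improper integrals, and your attention to differentiation under the integral sign and $Z(\bm{\theta})>0$ is more careful than the paper's, but the underlying argument is the same.
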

Theorem~\ref{EnThm} indicates that EntropySGD smoothens non-smooth loss functions. Many non-negative loss functions (e.g., cross-entropy for the classification) are used for training of DNNs.
Thus, we can use EntropySGD for adversarial training whose loss does not necessarily have Lipchitz continuous gradient.
Note that our analysis reveals that the Hessian matrix of EntropySGD is composed of the variance-covariance matrix of $p_{\bm{\theta}}(\bm{\theta}^\prime )$,
and we evaluate an extension of EntropySGD using this matrix in the supplementary materials.
\section{Experiments}
We first visualize the loss surface of adversarial loss and the loss of EntropySGD (EnSGD)
to verify our theoretical results: the smoothness depends on the types of the constraints, and EnSGD smoothens non-smooth functions.
Next, we demonstrate that improvements in smoothness by EnSGD contribute to the performance of adversarial training.
\subsection{Visualization of loss surface}
\begin{figure*}[tb]
    \begin{subfigure}{0.24\linewidth}\centering
        \includegraphics[width=.9\linewidth]{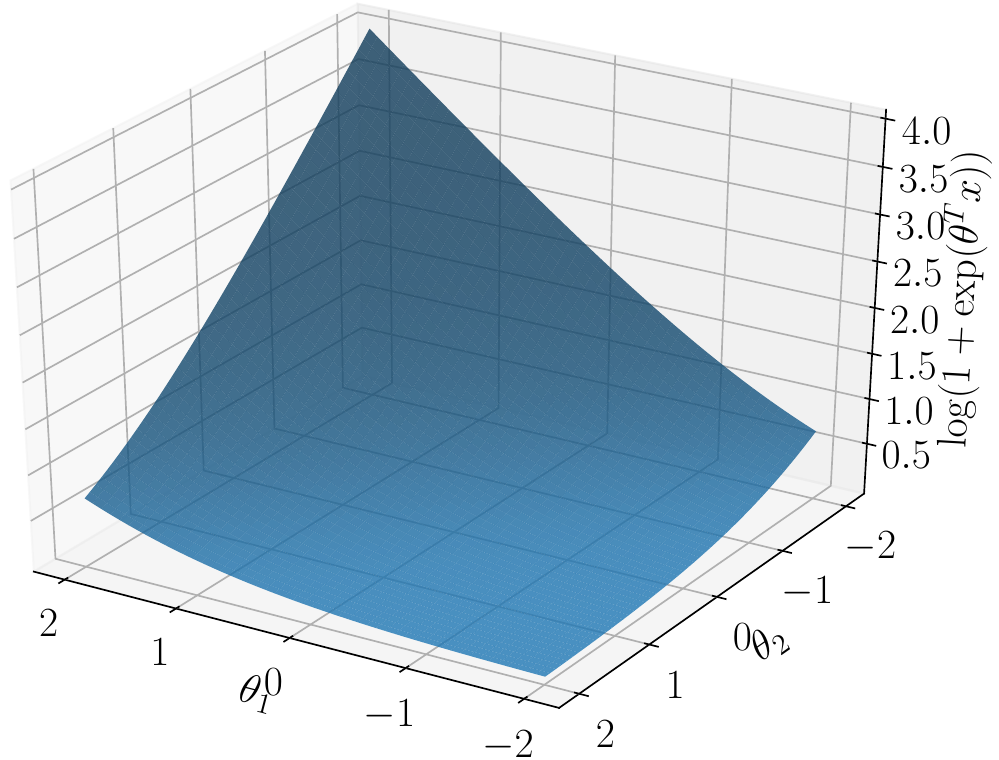}
        \caption{Clean loss}
    \end{subfigure}
    \begin{subfigure}{0.24\linewidth}\centering
        \includegraphics[width=.9\linewidth]{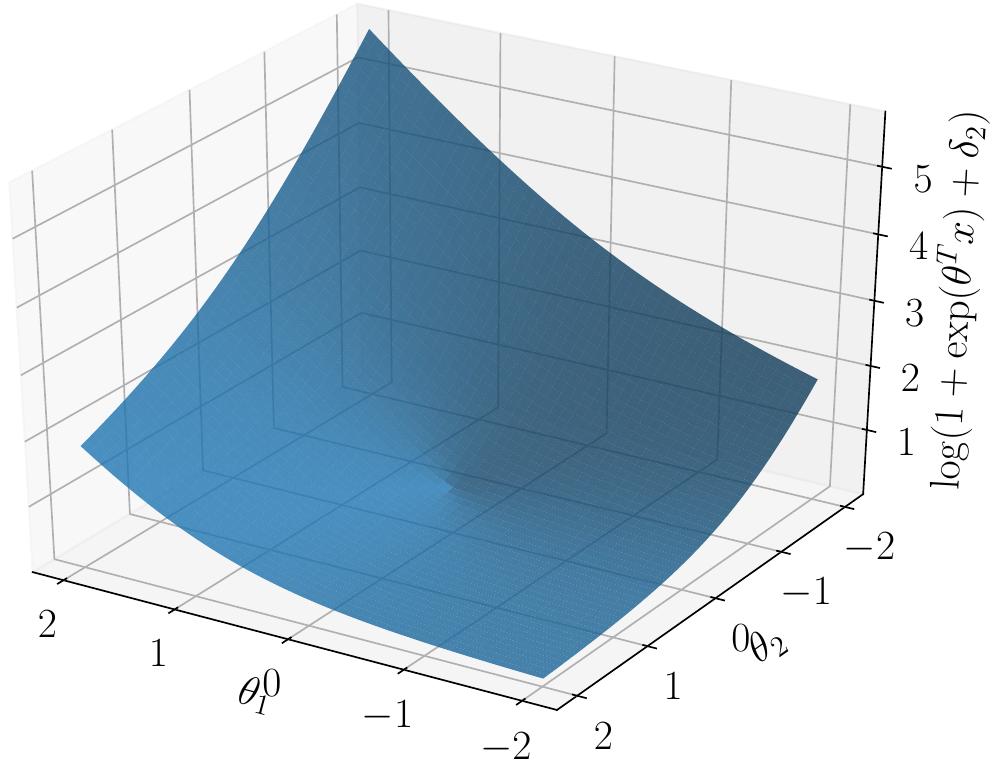}
        \caption{Adv. loss with $L_2$} \label{LinL2P}
    \end{subfigure}
    \begin{subfigure}{0.24\linewidth}\centering
            \includegraphics[width=.9\linewidth]{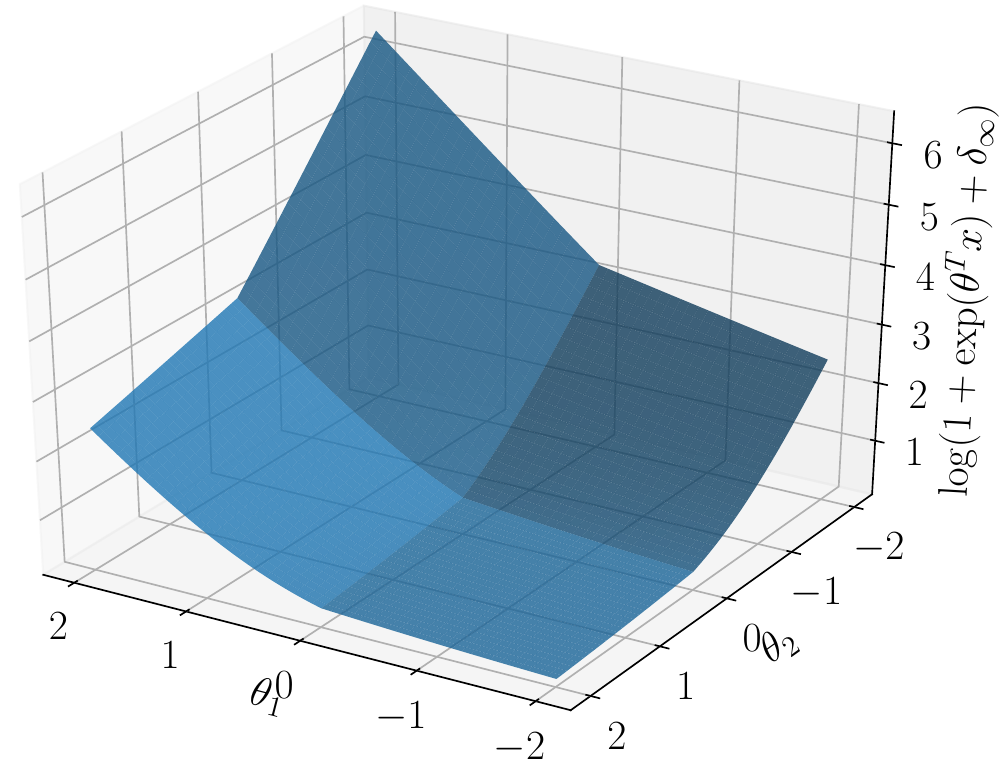}
            \caption{Adv. loss with $L_\infty$} \label{LinLinfP}
    \end{subfigure}    
    \hspace{\fill}
    \begin{subfigure}{0.24\linewidth}\centering
                    \includegraphics[width=.8\linewidth]{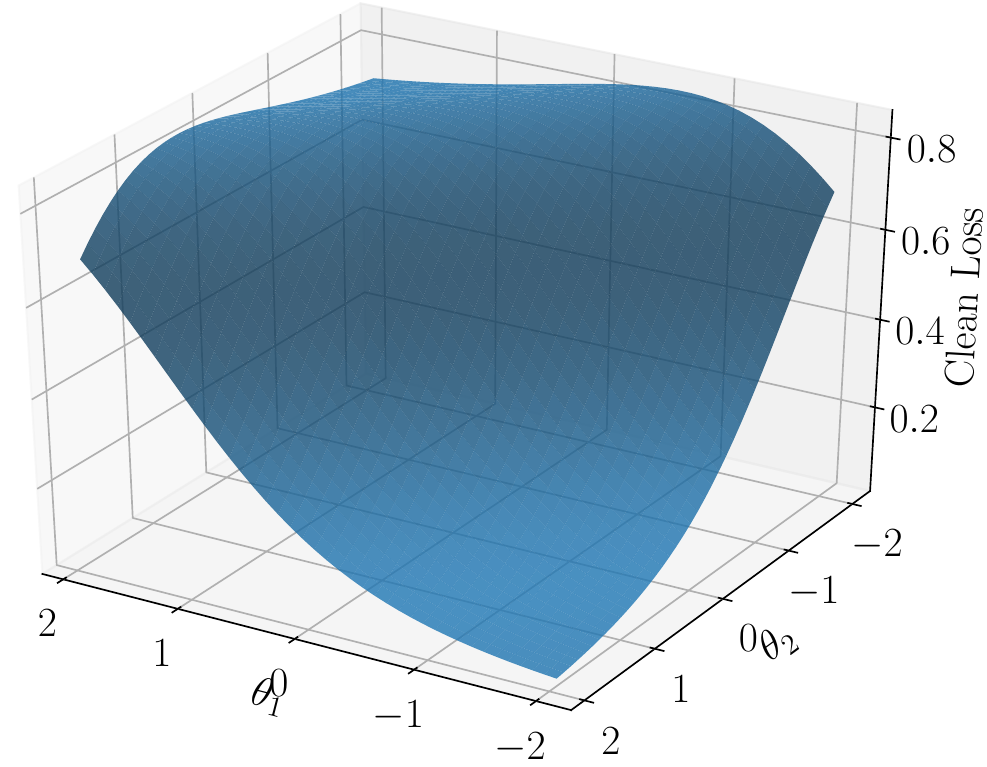}
                    \caption{Clean loss using swish}\label{SwishP}
                \end{subfigure}
                \hspace{\fill}
                \begin{subfigure}{0.24\linewidth}\centering
                    \includegraphics[width=.8\linewidth]{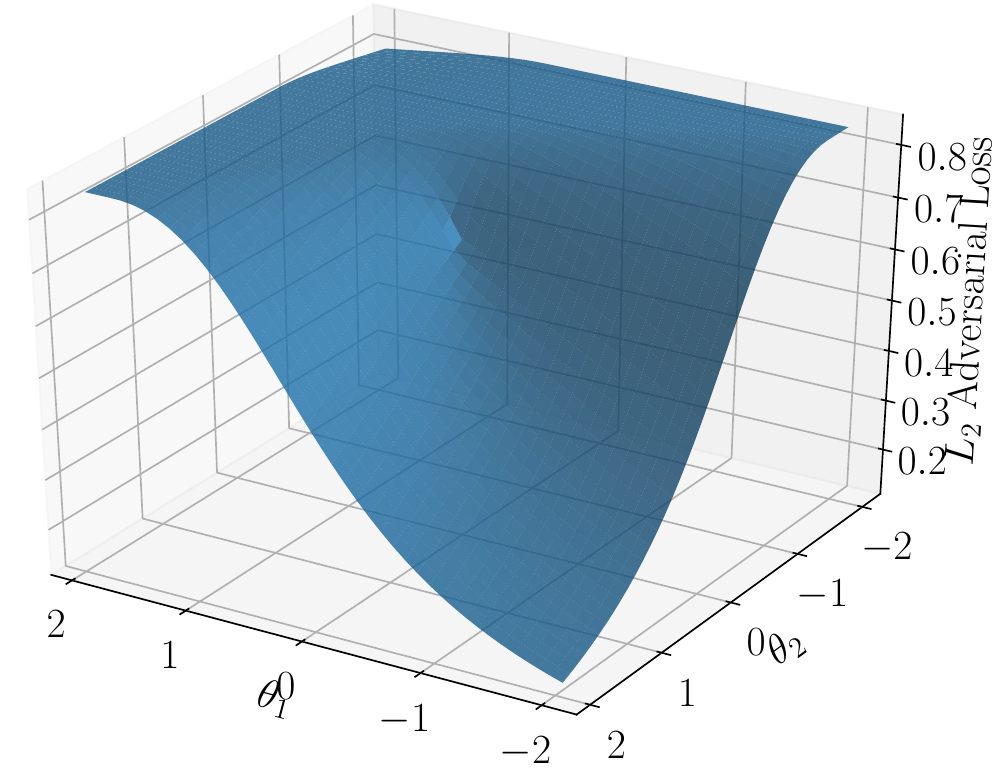}
                    \caption{Adv. loss using swish and the $L_2$ constraint} \label{SwishL2P}
                \end{subfigure}
                \hspace{\fill}
                \begin{subfigure}{0.236\linewidth}\centering
                    \includegraphics[width=.8\linewidth]{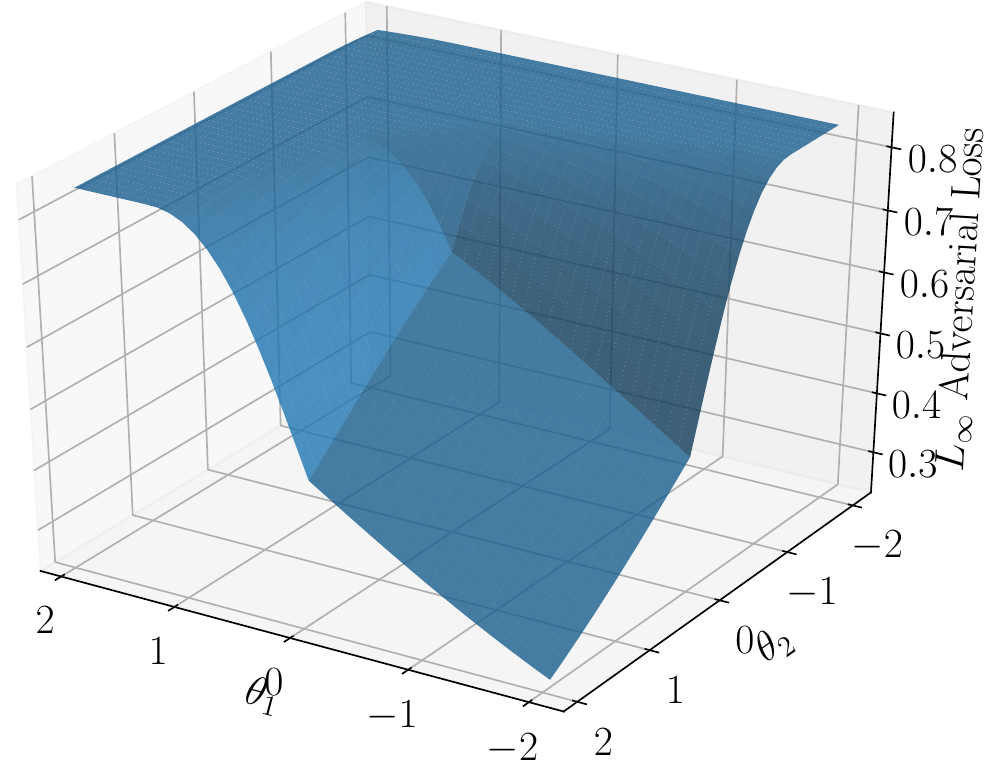}
                    \caption{Adv. loss using swish and the $L_\infty$ constraint} \label{SwishLinfP}
                \end{subfigure}
                \begin{subfigure}{0.25\linewidth}\centering
                        \includegraphics[width=.9\linewidth]{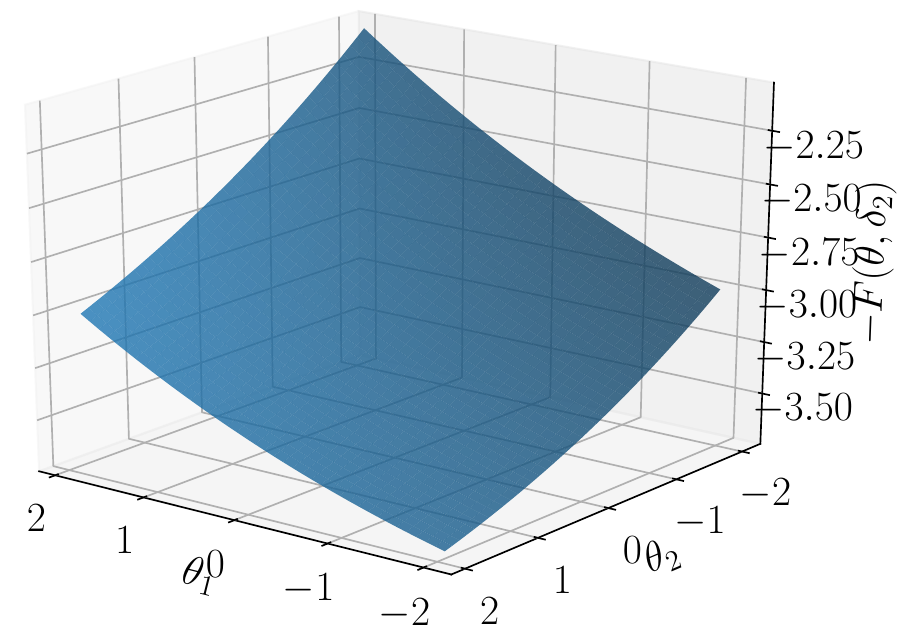}
                        \caption{EnSGD with $L_2$} \label{EnSGDL2P}
                    \end{subfigure}
                        \begin{subfigure}{0.25\linewidth}\centering
                                    \includegraphics[width=.9\linewidth]{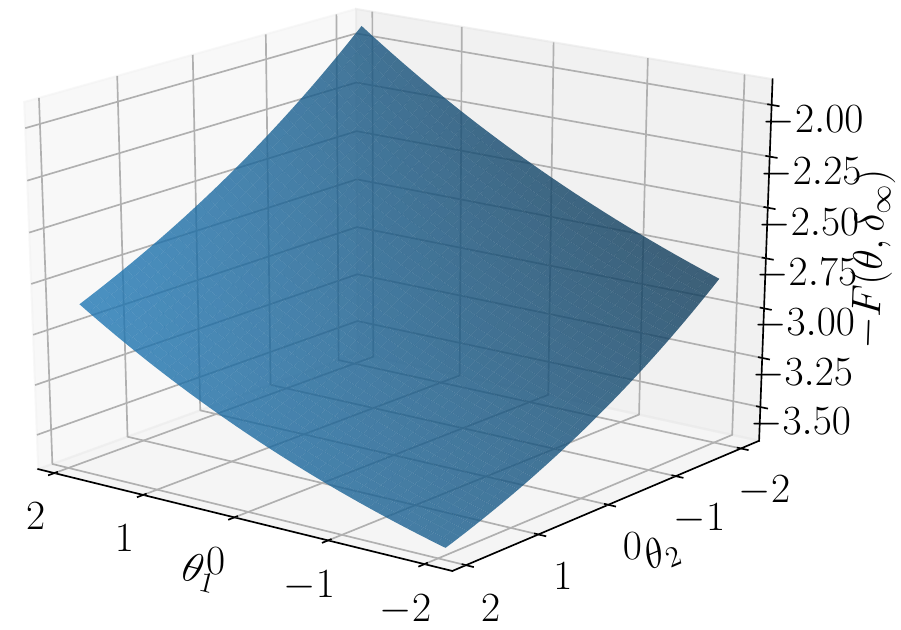}
                                    \caption{EnSGD with $L_\infty$} \label{EnSGDLinfP}
                        \end{subfigure}
                \caption{Loss surface for a linear model ((a)-(c)): $\ell_{\varepsilon}(\bm{\theta})\!=\!\log(1\!+\!\exp(-y\bm{\theta}^T(\bm{x}\!+\!\bm{\delta})))$
            and a nonlinear model ((d)-(f)): $\ell_{\varepsilon} (\bm{\theta})\!=\!\log(1\!+\!\exp(-y z))$ where $z\!=\!\mathrm{swish}(u)$, $u\!=\!\bm{\theta}^T(\bm{x}\!+\!\bm{\delta})$, $\bm{x}\!=\![-1,1]^T$, $y\!=\!1$, and 
    $\theta_{i}\in[-2,2]$, and $||\bm{\delta}||_p\leq 0.6$.
      (g) and (h) plot $-F(\bm{\theta})$ of EnSGD with $\ell_{\varepsilon}(\bm{\theta})$ for a linear model.}
    \label{Plot}
\end{figure*}
Figure~\ref{Plot} plots loss surfaces for one data point $\ell(\bm{\theta})$ in the standard training, adversarial training with the $L_2$ constraint, and
adversarial training with the $L_\infty$ constraint when $d$ is two. 
Adversarial losses of a linear model (Fig.~\ref{LinL2P} and \subref{LinLinfP}) follow Theorems~\ref{l2Thm} and \ref{linfThm}: adversarial loss with the $L_2$ constraint is not continuous at $\bm{\theta}\!=\!\bm{0}$
and  adversarial loss with the $L_\infty$ constraint is not continuous where $\theta_i\!=\!0$.

Figures~\ref{SwishP}-\subref{SwishLinfP} plot the loss surface for the binary classification using a nonlinear model that 
has swish~\cite{swish} before the output of the model.
Since we cannot compute the optimal adversarial examples in closed form, 
we used PGD attacks to generate adversarial examples.
The adversarial loss with the $L_2$ constraint (\rfig{SwishL2P}) have larger sets where the loss is smooth than that with the $L_\infty$
constraint (\rfig{SwishLinfP}), which follows Lemma~\ref{EdgeLem}.
In \rfig{SwishLinfP}, though we use the $L_\infty$ constraint, 
the adversarial loss can be smooth in the region where $\theta_1\!>\!1$ and $\theta_2\!<\!-1$
unlike the linear model (\rfig{LinLinfP}).
This is because the optimal attacks are inside the feasible region and satisfy Lemma~\ref{InSLem}: 
there exists $\bm{x}^{\prime*}$ satisfying $\nabla_x \ell(\bm{x}^{\prime*})\!=\!\frac{\partial \ell }{\partial z}\frac{\partial z}{\partial u}\nabla_x u(\bm{x}^{\prime*})\!=\!\bm{0}$ inside the region of $||\bm{x}-\bm{x}^{\prime}||_\infty\!\leq\! 0.6$
because the optimal input $u^*\!=\!\bm{\theta}^T\bm{x}^{\prime*}$ satisfying  $\frac{\partial z}{\partial u}\!=\!\mathrm{sigmoid}(u)+u\mathrm{sigmoid}(u)(1\!-\mathrm{sigmoid}(u))\!=\!0$ is in the interval of $[-2,-1]$.
Note that though we use a one-layer simple network to visualize the loss surface, Lemmas~\ref{InSLem} and \ref{EdgeLem} are not limited to shallow networks and binary classification problems.

Figures~\ref{EnSGDL2P} and \subref{EnSGDLinfP} show the loss surface of EnSGD $-F(\bm{\theta})$ with the adversarial loss $\ell_{\varepsilon}(\bm{\theta})$
corresponding to Figs.~\ref{LinL2P} and \subref{LinLinfP}, respectively.
We computed the integral in EnSGD by using scipy~\cite{scipy}.
In contrast to Figs.~\ref{LinL2P} and \subref{LinLinfP}, loss functions for EnSGD are smooth.
Especially, although the smoothnesses of $\ell_{\varepsilon}(\bm{\theta})$ with $L_2$ and $L_\infty$ constraints are different,
smoothnesses of their loss functions $-F(\bm{\theta})$ are almost the same.
Thus, EnSGD smoothens non-smooth functions as shown in Theorem~\ref{EnThm}.
\subsection{Experimental setup for evaluation of EntropySGD}
This section gives an outline of the experimental conditions for evaluating EnSGD and 
the details are provided in the supplementary materials.
Our experimental codes are based on source codes provided by \citet{awp},
and our implementations of EnSGD are based on the codes provided by \citet{chaudhari2019entropy}.
Datasets of the experiments were CIFAR10, CIFAR100~\cite{cifar}, and SVHN~\cite{svhn}.
We compared the convergences of adversarial training when using SGD and EnSGD.
In addition, we evaluated the combination of EnSGD and AWP~\cite{awp},
which injects adversarial noise into the parameter to flatten the loss landscape.
We used ResNet-18 (RN18)~\cite{resnet} and WideResNet-34-10 (WRN)~\cite{WRN} following~\cite{awp}.
We used PGD, 
and the hyperparameters for PGD were based on~\cite{awp}.
The $L_\infty$ norm of the perturbation $\varepsilon\!=\!8/255$ at training time.
For EnSGD, we set $\gamma=0.03$, $\varepsilon_{E}=1\times10^{-4}$, and $\eta=0.1$, $\eta^\prime=0.1$ and
tuned an iteration $L$ in $\{20, 30\}$.
The learning rates of SGD and EnSGD are set to 0.1 and divided by 10 at the 100-th and 150-th epoch,
and we used early stopping by evaluating test robust accuracies against PGD (20-iteration).
The hyperparameter of AWP is tuned in $\{0.01, 0.005\}$.
For WRN, we used the same hyperparameters as those of RN18. 
We trained models three times and
show the average and standard deviation of test accuracies.
Note that we evaluated EnSGD for adversarial training since our analysis focuses on adversarial training
and also confirmed the effectiveness of EnSGD in TRADES~\cite{TRADES}, which is also used for adversarial robustness. 
The evaluation is provided in the supplementary materials.
\subsection{Results of EntropySGD}
Figure~\ref{TrainAcc} plots training and test accuracies on CIFAR10 attacked by PGD against epochs and runtime.
This figure shows that EnSGD accelerates the training and achieves higher accuracies than SGD.
This is because EnSGD improves the smoothness of adversarial loss as shown in Theorem~\ref{EnThm}.
In addition, EnSGD alleviates overfitting; i.e., 
training accuracy of AT+EnSGD is smaller than that of AT,
but test accuracy of AT+EnSGD is larger than that of AT.
When using EnSGD with AWP, EnSGD can alleviate underfitting.
In \rfig{TrainAccTime}, the runtime of EnSGD is almost the same as that of SGD.
In addition, AT with EnSGD is faster than AWP with SGD although they achieve similarly robust performance agasint AutoAttack in \rtab{AAtab}.
Table~\ref{AAtab} lists test robust accuracies against AutoAttack on CIFAR10 and against PGD on CIFAR100 and SVHN.
For almost all cases, EnSGD outperforms SGD: i.e., AT + EnSGD outperforms AT, and AWP + EnSGD outperforms AWP.
Though AWP outperforms AWP+EnSGD on CIFAR100, the difference is just 0.05.
\begin{figure}[tb]
    \centering
    \begin{subfigure}{0.63\linewidth}\centering
        \includegraphics[width=.94\linewidth]{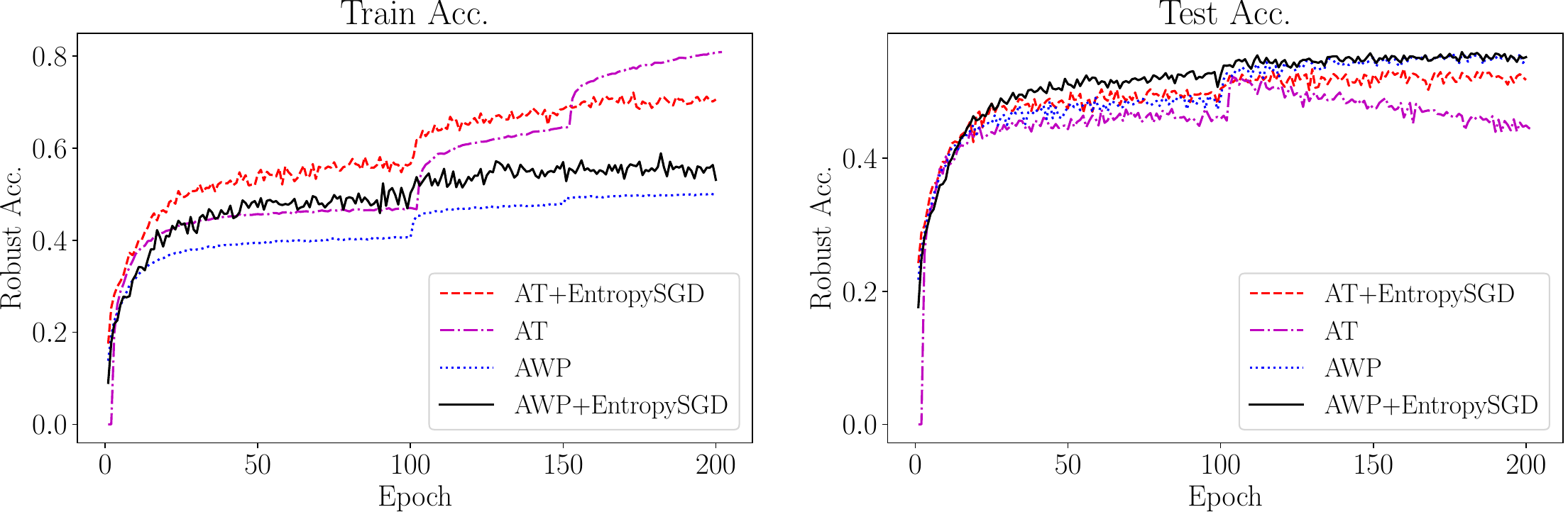}
        \caption{vs Epoch}
    \end{subfigure}
    \begin{subfigure}{0.33\linewidth}\centering
        \includegraphics[width=.94\linewidth]{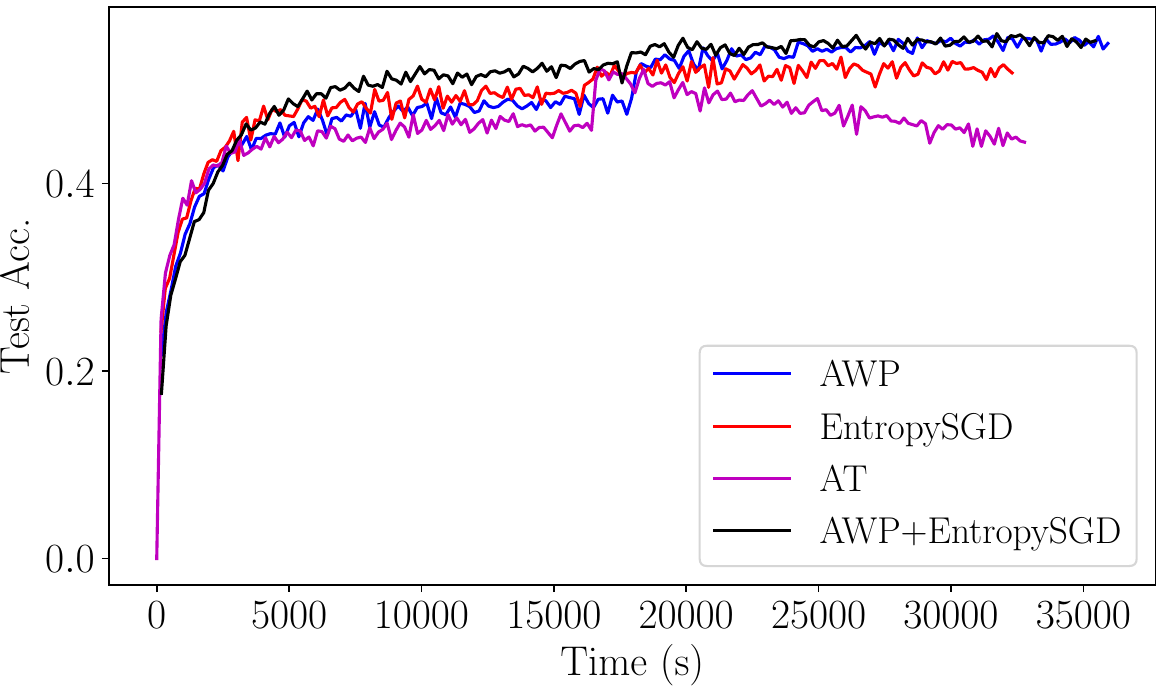}
        \caption{vs Runtime}    \label{TrainAccTime}
\end{subfigure}
    \caption{Robust accuracy against PGD vs (a) epochs and (b) runtime. 
    AT denotes adversarial training.
    Note that we use PGD with 10 iterations for training and PGD with 20 iterations for test.
    }
    \label{TrainAcc}
\end{figure}
\begin{table}[tbp]
    \centering
    \caption{Robust accuracies against AutoAttack (AA) on CIFAR10,
     and Robust accuracies against PGD on CIFAR100 and SVHN.}
    \label{AAtab}
    \begin{tabular}{cccccccc}\toprule
    &AT&AT+EnSGD&AWP&AWP+EnSGD\\\midrule
    CIFAR10 (AA, RN18)&$48.0\!\pm\! 0.2$&$49.2\!\pm\! 0.4$&$49.9 \!\pm\! 0.4$&$\bm{50.5\!\pm\! 0.2}$\\
    CIFAR10 (AA, WRN)&$51.9\!\pm\! 0.5$&$53.3\!\pm\! 0.3$&$53.2\!\pm\! 0.3$&$\bm{54.72\!\pm \!0.05}$\\\midrule
    SVHN (PGD, RN18)&$53.1\!\pm\! 0.6$&$59.1\!\pm\! 0.4$&$59.3\!\pm\! 0.1$&$\bm{59.91\!\pm\!0.09}$\\
    CIFAR100 (PGD, RN18)&$27.66\!\pm\! 0.04$&$28.69\!\pm\! 0.03$&$\bm{30.95\!\pm\! 0.07}$&$30.9 \!\pm\! 0.1$\\
    \bottomrule
    \end{tabular}
\end{table}
\section{Conclusion}
This paper investigated the smoothness of loss for adversarial training.
We proved that the smoothness of adversarial loss depends on the
constraints of adversarial examples, and the gradient of adversarial loss can be non-Lipschitz continuous
for some points.
Since smoothness of loss is an important property for gradient-based optimization, 
we showed that EntropySGD can improve the smoothness of adversarial loss.
Our usage of EntropySGD is still naive and not specialized for adversarial training.
In our future work, we will explore the smoothing methods for adversarial loss.
\bibliography{CameraBib}
\bibliographystyle{plainnat}
\appendix
\section{Proofs}
\setcounter{lemma}{0}
\begin{lemma}\label{AdvBase}
    If the adversarial example $\bm{x}^\prime (\bm{\theta})$ is a $C$-Lipschitz function, 
    the gradient of adversarial loss is $(C_{\bm{\theta}\bm{\theta}}+CC_{\bm{\theta}\bm{x}})$-Lipschitz, that is, adversarial loss is $(C_{\bm{\theta}\bm{\theta}}+CC_{\bm{\theta}\bm{x}})$-smooth.    
\end{lemma}
\begin{proof}
    If the adversarial example $\bm{x}^\prime (\bm{\theta})$ is a $C$-Lipschitz function, we have 
    \begin{align}\label{advC}
        ||\bm{x}^\prime(\bm{\theta}_1)-\bm{x}^\prime(\bm{\theta}_2)||\leq C||\bm{\theta}_1-\bm{\theta}_2||.
    \end{align}
    From Assumption~1, we have
    \begin{align}
        \!\!\!\textstyle||\nabla_{\bm{\theta}}\ell_{\varepsilon}(\bm{x},\bm{\theta}_1)&-\nabla_{\bm{\theta}}\ell_{\varepsilon}(\bm{x},\bm{\theta}_2)||=
        ||\nabla_{\bm{\theta}}\ell(\bm{x}^\prime(\bm{\theta}_1),\bm{\theta}_1)\!-\!\nabla_{\bm{\theta}}\ell(\bm{x}^\prime(\bm{\theta}_2),\bm{\theta}_2)||\nonumber\\
        &\leq 
        ||\nabla_{\bm{\theta}}\ell(\bm{x}^\prime(\bm{\theta}_1),\bm{\theta}_1)\!-\!\nabla_{\bm{\theta}}\ell(\bm{x}^\prime(\bm{\theta}_1),\bm{\theta}_2)||\!+\!||\nabla_{\bm{\theta}}\ell(\bm{x}^\prime(\bm{\theta}_1),\bm{\theta}_2)\!-\!\nabla_{\bm{\theta}}\ell(\bm{x}^\prime(\bm{\theta}_2),\bm{\theta}_2)||\nonumber
        \nonumber\\&\leq C_{\bm{\theta}\bm{\theta}}||\bm{\theta}_1-\bm{\theta}_2||+C_{\bm{\theta}\bm{x}}||\bm{x}^\prime(\bm{\theta}_1)-\bm{x}^\prime(\bm{\theta}_2)|| \nonumber
        \\&\leq C_{\bm{\theta}\bm{\theta}}||\bm{\theta}_1-\bm{\theta}_2||+C_{\bm{\theta}\bm{x}}C||\bm{\theta}_1-\bm{\theta}_2||
        \nonumber\\&\leq (C_{\bm{\theta}\bm{\theta}}+CC_{\bm{\theta}\bm{x}})||\bm{\theta}_1-\bm{\theta}_2||,
    \end{align}
    which completes the proof.
\end{proof}
\begin{lemma}
    \label{l2lem}
    Let $\bm{x}^\prime (\bm{\theta}_1)$ and $\bm{x}^\prime (\bm{\theta}_2)$ be adversarial examples around the data point $\bm{x}$ for $\bm{\theta}_1$ and $\bm{\theta}_2$, respectively.
    For adversarial training with $L_2$ norm constraints $||\bm{\delta}||_2\leq \varepsilon$, 
    if there exists a lower bound $\theta_{min}\in \mathbb{R}$ such as $||\bm{\theta}||_2\!\geq\! \theta_{\min}\!>\!0$, we have the following inequality:
    \begin{align}
     \textstyle   ||\bm{x}^\prime (\bm{\theta}_1)\!-\!\bm{x}^\prime (\bm{\theta}_2)||\!\leq\!\frac{\varepsilon }{\theta_{\min}}||\bm{\theta}_1\!-\!\bm{\theta}_2||.
    \end{align}
    Thus, adversarial examples are $ (\frac{\varepsilon }{\theta_{\min}})$-Lipschitz on a bounded set not including the origin $\bm{\theta}= \bm{0}$.
\end{lemma}
\begin{proof}
    First, we solve the following optimization problem to obtain the adversarial examples for the data point $(\bm{x},y)$:
    \begin{align}\textstyle
        \mathrm{arg}\!\max_{\bm{\delta}} &\textstyle\log \left( 1+\mathrm{exp}(-y \bm{\theta}^T (\bm{x} + \bm{\delta}))\right),\nonumber\\
        \nonumber &\textstyle \mathrm{subject~to~} ||\bm{\delta}||_2\leq \varepsilon.
    \end{align}
    We consider the case of $y=1$. Note that we can be easily derive the same results for $y=-1$.
By using the Lagrange multiplier, we use the following function:
    \begin{align}\textstyle
        J(\bm{\delta})=\mathrm{log}\left(1+\mathrm{exp}(-\bm{\theta}^T (\bm{x}+\bm{\delta}))\right)+\lambda(||\bm{\delta}||_2- \varepsilon),
    \label{L2Obj}
    \end{align}
    and we find the solution satisfying
    \begin{align}\textstyle
        \nabla_{\delta}J(\bm{\delta})=\bm{0}\label{KKT1},\\
        ||\bm{\delta}||_2- \varepsilon\leq 0\label{KKT2},\\
        \lambda\leq 0.\label{KKT3}
    \end{align}
From \req{KKT1}, we have
\begin{align}\textstyle
    \nabla_{\delta}J(\bm{\delta})=\frac{-\mathrm{exp}(-\bm{\theta}^T (\bm{x}+\bm{\delta}))}{1+\mathrm{exp}(-\bm{\theta}^T (\bm{x}+\bm{\delta}))}\bm{\theta}+\lambda \frac{\bm{\delta}}{||\bm{\delta}||_2}.
\end{align}
Since $\frac{-\mathrm{exp}(-\bm{\theta}^T (\bm{x}+\bm{\delta}))}{1+\mathrm{exp}(-\bm{\theta}^T (\bm{x}+\bm{\delta}))}$ and $\frac{\lambda}{||\bm{\theta}_2||}$ are scalar values and $\lambda\leq 0$, 
$\bm{\theta}$ and $\bm{\delta}$ have the opposite direction.
Thus, we can write as $\bm{\delta}=-k \bm{\theta}$ where $k\geq 0$.
Since $J$ monotonically increases in accordance with $k$, 
we have $k=\varepsilon$ from \req{KKT2}.
Therefore, we have $\bm{\delta}=-\varepsilon\frac{\bm{\theta}}{||\bm{\theta}||_2}$
Then, we have $\bm{x}^{\prime}(\bm{\theta})=\bm{x}-\varepsilon\frac{\bm{\theta}}{||\bm{\theta}||_2}$,
and thus, we compute the Lipschitz constants for $\bm{x}^{\prime}(\bm{\theta})=\bm{x}-\varepsilon\frac{\bm{\theta}}{||\bm{\theta}||_2}$.
Since Lipschitz constants for vector-valued continuous functions bound the operator norm of Jacobian,
we compute the Jacobian of $\bm{x}^{\prime}(\bm{\theta})=\bm{x}-\varepsilon\frac{\bm{\theta}}{||\bm{\theta}||_2}$.
We have 
\begin{align}\textstyle
    D_{\bm{\theta}}(\bm{x}^{\prime}(\bm{\theta}))=-\frac{\varepsilon}{||\bm{\theta}||_2}\left(\bm{I}-\frac{\bm{\theta}\bm{\theta}^T}{||\bm{\theta}||_2^2}\right).
\end{align}
The spectral norm of this matrix is $\frac{\varepsilon}{||\bm{\theta}||_2}$
because this matrix is a normal matrix that has eigenvalues of $\lambda_d (D_{\bm{\theta}}\bm{x}^{\prime}(\bm{\theta}))=0$ and
$\lambda_i(D_{\bm{\theta}}\bm{x}^{\prime}(\bm{\theta}))=-\frac{\varepsilon}{||\bm{\theta}||_2}$ for $i=1,\dots,d-1$.
Therefore, if we have $|| \bm{\theta}||_2\geq \theta_{\min}>0$, we have $\sup_{\bm{\theta}}\frac{\varepsilon}{||\bm{\theta}||_2}\leq\frac{\varepsilon}{\theta_{\min}}<\infty $.
Thus, we have
\begin{align}
    \textstyle   ||\bm{x}^\prime (\bm{\theta}_1)\!-\!\bm{x}^\prime (\bm{\theta}_2)||\!\leq\!\frac{\varepsilon }{\theta_{\min}}||\bm{\theta}_1\!-\!\bm{\theta}_2||.
   \end{align}
\end{proof}
\setcounter{theorem}{0}
\begin{theorem}\label{l2Thm}
    For adversarial training with $L_2$ norm constraints $||\bm{\delta}||_2\leq \varepsilon$,
    if we have $||\bm{\theta}||_2\geq \theta_{\min}>0$, the following inequality holds:
    \begin{align}
        \textstyle  \!\!||\nabla_{\bm{\theta}}L_{\varepsilon}(\bm{\theta}_1)\!-\!\nabla_{\bm{\theta}}L_{\varepsilon}(\bm{\theta}_2)||\!\leq\!(C_{\bm{\theta}\bm{\theta}}\!+\!\frac{\varepsilon C_{\bm{\theta}\bm{x}}}{\theta_{\min}})||\bm{\theta}_1\!-\!\bm{\theta}_2||.
    \end{align}
    Thus, adversarial loss $L_{\varepsilon}$ is $ (C_{\bm{\theta}\bm{\theta}}\!+\!\frac{\varepsilon C_{\bm{\theta}\bm{x}}}{\theta_{\min}})$-smooth on a bounded set that does not include $\bm{\theta}\!=\!\bm{0}$.
\end{theorem}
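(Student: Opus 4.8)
The plan is to obtain Theorem~\ref{l2Thm} directly by chaining Lemmas~\ref{l2lem} and \ref{AdvBase}, with the only genuine addition being the passage from the per-example adversarial loss to the empirical average $L_{\varepsilon}(\bm{\theta})=\frac{1}{N}\sum_n \ell_{\varepsilon}(\bm{x}_n,\bm{\theta})$.

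First I would apply Lemma~\ref{l2lem} separately to each data point $\bm{x}_n$. Under the $L_2$ constraint the optimal perturbation is $\bm{\delta}_n = -\varepsilon\,\bm{\theta}/||\bm{\theta}||_2$ (with the sign adjusted for $y_n$), so $\bm{x}_n^\prime(\bm{\theta})$ is $(\varepsilon/\theta_{\min})$-Lipschitz whenever $||\bm{\theta}||_2 \geq \theta_{\min} > 0$. The crucial observation is that this Lipschitz constant is the same for every $n$: the Jacobian bound $\varepsilon/||\bm{\theta}||_2$ does not depend on $\bm{x}_n$, so a single $\theta_{\min}$ controls all data points simultaneously.

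Next I would feed $C = \varepsilon/\theta_{\min}$ into Lemma~\ref{AdvBase}, which yields that each term $\ell_{\varepsilon}(\bm{x}_n,\bm{\theta})$ is $(C_{\bm{\theta}\bm{\theta}} + \frac{\varepsilon C_{\bm{\theta}\bm{x}}}{\theta_{\min}})$-smooth in $\bm{\theta}$. Because $\nabla_{\bm{\theta}} L_{\varepsilon} = \frac{1}{N}\sum_n \nabla_{\bm{\theta}}\ell_{\varepsilon}(\bm{x}_n,\cdot)$ is linear in the summands, the triangle inequality gives $||\nabla_{\bm{\theta}} L_{\varepsilon}(\bm{\theta}_1) - \nabla_{\bm{\theta}} L_{\varepsilon}(\bm{\theta}_2)|| \leq \frac{1}{N}\sum_n ||\nabla_{\bm{\theta}}\ell_{\varepsilon}(\bm{x}_n,\bm{\theta}_1) - \nabla_{\bm{\theta}}\ell_{\varepsilon}(\bm{x}_n,\bm{\theta}_2)||$, and bounding each summand by the common per-example constant collapses the average back to that same constant, producing exactly \req{l2c}.

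There is no real obstacle here; the statement is a corollary, and the work is bookkeeping. The two points I would be careful about are (i) that the per-example Lipschitz constant from Lemma~\ref{l2lem} is uniform in $n$, so that averaging does not worsen it, and (ii) that restricting to a bounded set excluding $\bm{\theta}=\bm{0}$ is precisely what furnishes the lower bound $\theta_{\min}$ the argument relies on, which is why the conclusion is stated for such sets rather than globally.
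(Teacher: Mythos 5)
Your proposal is correct and follows essentially the same route as the paper's proof: combine the closed-form $L_2$ adversarial example from Lemma~\ref{l2lem} with the generic bound of Lemma~\ref{AdvBase} to get the per-example smoothness constant, then average over the dataset via the triangle inequality. Your explicit remarks that the constant $\varepsilon/\theta_{\min}$ is uniform in $n$ and that the bounded set excluding the origin is what supplies $\theta_{\min}$ are sound and match the paper's (more terse) argument.
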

\begin{proof}
    We have
    $||\nabla_{\bm{\theta}}\ell_{\varepsilon}(\bm{x},\bm{\theta}_1)\!-\!\nabla_{\bm{\theta}}\ell_{\varepsilon}(\bm{x},\bm{\theta}_2)||\!\leq\!C_{\bm{\theta}\bm{\theta}}||\bm{\theta}_1\!-\!\bm{\theta}_2||\!+\!C_{\bm{\theta}\bm{x}}||\bm{x}^\prime(\bm{\theta}_1)\!-\!\bm{x}^\prime(\bm{\theta}_2)||$ from  $||\nabla_{\bm{\theta}}\ell_{\varepsilon}(\bm{x},\bm{\theta}_1)-\nabla_{\bm{\theta}}\ell_{\varepsilon}(\bm{x},\bm{\theta}_2)||\leq C_{\bm{\theta}\bm{\theta}}||\bm{\theta}_1-\bm{\theta}_2||+C_{\bm{\theta}\bm{x}}||\bm{x}^{1\prime}-\bm{x}^{2\prime}||$.
    From Lemmas~\ref{l2lem} and \ref{AdvBase}, we have 
    $||\bm{x}^\prime (\bm{\theta}_1)\!-\!\bm{x}^\prime (\bm{\theta}_2)||\!\leq\!\frac{\varepsilon }{\theta_{\min}}||\bm{\theta}_1\!-\!\bm{\theta}_2||$, and thus,
    \begin{align}\textstyle
        \!\!\!\!\!\!||\nabla_{\bm{\theta}}\ell_{\varepsilon}\textstyle(\bm{x},\bm{\theta}_1)\!-\!\nabla_{\bm{\theta}}\ell_{\varepsilon}(\bm{\bm{x},\theta}_2)||
        &\textstyle\leq \!C_{\bm{\theta}\bm{\theta}}||\bm{\theta}_1\!-\!\bm{\theta}_2||\!+\!C_{\bm{\theta}\bm{x}}\frac{\varepsilon }{\theta_{\min}}||\bm{\theta}_1\!-\!\bm{\theta}_2||
        \textstyle\!\leq\!(C_{\bm{\theta}\bm{\theta}}\!+\!\frac{\varepsilon C_{\bm{\theta}\bm{x}}}{\theta_{\min}})||\bm{\theta}_1\!-\!\bm{\theta}_2||.\nonumber
        \end{align}
Since $||\nabla_{\bm{\theta}} L_{\varepsilon}(\bm{\theta}_1)\!-\!\nabla_{\bm{\theta}} L_{\varepsilon}(\bm{\theta}_1)||\!=\!||\frac{1}{N}\sum_n (\nabla_{\bm{\theta}}\ell_{\varepsilon}(\bm{x}_n,\bm{\theta}_1)\!-\!
\nabla_{\bm{\theta}}\ell_{\varepsilon}(\bm{x}_n,\bm{\theta}_2))||\!=\!
\frac{1}{N}\sum_n ||\nabla_{\bm{\theta}}\ell_{\varepsilon}(\bm{x}_n,\bm{\theta}_n)\!-\!
\nabla_{\bm{\theta}}\ell_{\varepsilon}(\bm{x}_n,\bm{\theta}_2)||\leq (C_{\bm{\theta}\bm{\theta}}\!+\!\frac{\varepsilon C_{\bm{\theta}\bm{x}}}{\theta_{\min}})||\bm{\theta}_1\!-\!\bm{\theta}_2||$,
        which completes the proof.
\end{proof}
\begin{lemma}
    \label{linflem}
    The magnitude of adversarial perturbation is measured by the $L_\infty$ norm as $||\bm{\delta}||_\infty \leq \varepsilon$.
    If the sign of one element at least is different between $\bm{\theta}_1$ and $\bm{\theta}_2$ ($\exists i\!:\!\mathrm{sign}(\theta_{1,i})\neq \mathrm{sign}(\theta_{2,i})$),
    adversarial examples are not Lipschitz continuous.
    If all signs are the same as $\forall i\!:\!\mathrm{sign}(\theta_{1,i})= \mathrm{sign}(\theta_{2,i})$,
    we have the following equation:
    \begin{align}
        \textstyle  ||\bm{x}^\prime (\bm{\theta}_1)\!-\!\bm{x}^\prime (\bm{\theta}_2)||\!=\!0.
    \end{align}
    Thus, adversarial examples are Lipschitz continuous on a bounded set
    that does not include $\theta_i= 0,\forall i$ and where no signs of elements change.
\end{lemma}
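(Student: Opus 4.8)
The plan is to first solve the inner maximization in closed form, then read off how the optimal attack depends on $\bm{\theta}$ and treat the two cases separately. Since the scalar map $t\mapsto\log(1+\exp(t))$ is strictly increasing, maximizing $\ell(\bm{x}+\bm{\delta},y,\bm{\theta})$ over $\|\bm{\delta}\|_\infty\le\varepsilon$ is equivalent to maximizing the linear objective $-y\,\bm{\theta}^T\bm{\delta}=-\sum_i y\theta_i\delta_i$ over the box $\{|\delta_i|\le\varepsilon\}$. This objective separates coordinatewise, and each term $-y\theta_i\delta_i$ is maximized (for $\theta_i\neq 0$) at $\delta_i=-\varepsilon\,\mathrm{sign}(y\theta_i)$. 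Using $\mathrm{sign}(y)=y$ for $y\in\{-1,1\}$, the optimal perturbation is
\[
\delta_i=-\varepsilon\,\mathrm{sign}(y\theta_i)=-\varepsilon y\,\mathrm{sign}(\theta_i),\qquad \bm{x}'(\bm{\theta})=\bm{x}-\varepsilon y\,\mathrm{sign}(\bm{\theta}),
\]
with $\mathrm{sign}(\cdot)$ applied elementwise. The key structural observation is that, for fixed $\bm{x}$ and $y$, the map $\bm{x}'(\bm{\theta})$ depends on $\bm{\theta}$ \emph{only} through its sign pattern $(\mathrm{sign}(\theta_1),\dots,\mathrm{sign}(\theta_d))$.

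The matching-sign case is then immediate. If $\mathrm{sign}(\theta_{1,i})=\mathrm{sign}(\theta_{2,i})$ for every $i$, the two perturbations coincide coordinatewise, so $\bm{x}'(\bm{\theta}_1)=\bm{x}'(\bm{\theta}_2)$ and $\|\bm{x}'(\bm{\theta}_1)-\bm{x}'(\bm{\theta}_2)\|=0$. On any bounded region avoiding the hyperplanes $\theta_i=0$ and containing no sign change, $\bm{x}'(\bm{\theta})$ is therefore constant, hence Lipschitz (with constant $0$), which gives the final assertion of the lemma.

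For the non-Lipschitz claim I would argue by exhibiting an explicit family that forces the Lipschitz ratio to blow up. Suppose some coordinate $i$ has $\mathrm{sign}(\theta_{1,i})\neq\mathrm{sign}(\theta_{2,i})$; then $\mathrm{sign}(\theta_{1,i})-\mathrm{sign}(\theta_{2,i})=\pm 2$, so the $i$-th entries of $\bm{x}'(\bm{\theta}_1)$ and $\bm{x}'(\bm{\theta}_2)$ differ by exactly $2\varepsilon$ in absolute value, giving $\|\bm{x}'(\bm{\theta}_1)-\bm{x}'(\bm{\theta}_2)\|\ge 2\varepsilon$ no matter how close $\bm{\theta}_1$ and $\bm{\theta}_2$ are. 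Concretely, fix all coordinates equal except coordinate $i$ and set $\theta_{1,i}=\alpha$, $\theta_{2,i}=-\alpha$ for small $\alpha>0$; then $\|\bm{\theta}_1-\bm{\theta}_2\|=2\alpha$ while $\|\bm{x}'(\bm{\theta}_1)-\bm{x}'(\bm{\theta}_2)\|=2\varepsilon$, so the ratio $\varepsilon/\alpha\to\infty$ as $\alpha\to0^+$. No finite constant $C$ can satisfy $\|\bm{x}'(\bm{\theta}_1)-\bm{x}'(\bm{\theta}_2)\|\le C\|\bm{\theta}_1-\bm{\theta}_2\|$, so $\bm{x}'(\bm{\theta})$ fails to be Lipschitz continuous whenever a sign may flip.

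The argument is essentially mechanical, so I do not anticipate a genuine obstacle. The only points requiring care are the coordinatewise solution of the linear maximization and the bookkeeping of the sign of $y$ (which is why I would first record $\delta_i=-\varepsilon y\,\mathrm{sign}(\theta_i)$ and then note that, for fixed $y$, only $\mathrm{sign}(\bm{\theta})$ matters), together with observing that $\mathrm{sign}(\cdot)$ is well defined precisely away from the excluded hyperplanes $\theta_i=0$. The entire content is the closed-form attack plus the fact that a sign flip produces a discontinuous jump of size $2\varepsilon$ independent of $\|\bm{\theta}_1-\bm{\theta}_2\|$.
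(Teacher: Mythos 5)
Your proposal is correct and follows essentially the same route as the paper: reduce the inner maximization to a coordinatewise linear program over the box to get $\bm{x}'(\bm{\theta})=\bm{x}-\varepsilon y\,\mathrm{sign}(\bm{\theta})$, observe that this depends on $\bm{\theta}$ only through its sign pattern (so the difference vanishes when all signs agree), and note that a sign flip forces a jump of size $2\varepsilon$. Your write-up is in fact slightly more complete than the paper's, which treats only $y=1$ and asserts the non-Lipschitz claim from the discontinuity of $\mathrm{sign}$ without exhibiting the explicit family $\theta_{1,i}=\alpha$, $\theta_{2,i}=-\alpha$ that makes the Lipschitz ratio blow up.
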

\begin{proof}
    First, we solve the following optimization problem to obtain the adversarial examples for the data point $(\bm{x},y)$:
    \begin{align}\textstyle
        \mathrm{arg}\!\max_{\bm{\delta}} &\log \left( 1+\mathrm{exp}(-y \bm{\theta}^T (\bm{x} + \bm{\delta}))\right),\nonumber\\
        \nonumber &\mathrm{subject~to~} ||\bm{\delta}||_\infty\leq \varepsilon.
    \end{align}
    We consider the case of $y=1$. Note that we can easily derive the same results for $y=-1$.
    Since $\log(1+\mathrm{exp}(-\bm{\theta}^T\bm{x}-\bm{\theta}^T\bm{\delta}))$ is a monotonically decreasing
    function for $\bm{\theta}^T\bm{\delta}$,
    the solution minimizes $\bm{\theta}^T\bm{\delta}$ subject to $||\bm{\delta}||_{\infty}\leq \varepsilon$.
    The solution is obtained as $\bm{\delta}=-\varepsilon\mathrm{sign}(\bm{\theta})$ 
    where $\mathrm{sign}$ is an element-wise sign function \cite{fgsm}.
    Therefore, the optimal adversarial examples are 
    $\bm{x}^{\prime}=\bm{x}-\varepsilon\mathrm{sign}(\bm{\theta})$, and we investigate their Lipschitz continuity.
    Since $\mathrm{sign}(\theta_i)$ for $\theta_i>0$ or $\theta_i<0$ is a constant function,
    its derivative is zero for $\theta_i\neq 0$.
    At $\theta_i=0$, $\mathrm{sign}(\theta_i)$ is not continuous.
    Therefore, we have
    \begin{align}
        \textstyle  ||\bm{x}^\prime (\bm{\theta}_1)\!-\!\bm{x}^\prime (\bm{\theta}_2)||\!=\!0,
    \end{align}
    for the interval satisfying $\forall i\!:\!\mathrm{sign}(\theta_{1,i})= \mathrm{sign}(\theta_{2,i})$.
    If the signs of $\theta_i$ can change, adversarial examples are not Lipschitz continuous.
\end{proof}
\begin{theorem}\label{linfThm}
    For adversarial training with $L_\infty$ norm constraints $||\bm{\delta}||_\infty \leq \varepsilon$,
    in the interval where $\forall i:\mathrm{sign}(\theta_{1,i})= \mathrm{sign}(\theta_{2,i})$,
    the following inequality holds:
    \begin{align}
        \textstyle    ||\nabla_{\bm{\theta}}L_{\varepsilon}(\bm{\theta}_1)-\nabla_{\bm{\theta}}L_{\varepsilon}(\bm{\theta}_2)||&\leq C_{\bm{\theta}\bm{\theta}}||\bm{\theta}_1-\bm{\theta}_2||.
    \end{align}
    Thus, the loss function $L_{\varepsilon}$ for adversarial training is $ (C_{\bm{\theta}\bm{\theta}})$-smooth
    on a bounded set that does not include $\theta_i= 0,\forall i$ and where no signs of elements change.
\end{theorem}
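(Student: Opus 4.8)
The plan is to combine Lemma~\ref{linflem} and Lemma~\ref{AdvBase} in exactly the way Theorem~\ref{l2Thm} was assembled, but now exploiting that the relevant Lipschitz constant of the attack vanishes. First I would restrict attention to the set on which $\forall i:\mathrm{sign}(\theta_{1,i})=\mathrm{sign}(\theta_{2,i})$ and $\theta_i\neq 0$, i.e.\ a single open orthant. On such an orthant the optimal $L_\infty$ perturbation computed in the proof of Lemma~\ref{linflem}, namely $\bm{x}^\prime(\bm{\theta})=\bm{x}-\varepsilon\,\mathrm{sign}(\bm{\theta})$, is a constant function of $\bm{\theta}$ because $\mathrm{sign}(\bm{\theta})$ does not change there. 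Hence Lemma~\ref{linflem} gives $||\bm{x}^\prime(\bm{\theta}_1)-\bm{x}^\prime(\bm{\theta}_2)||=0$, which is precisely the statement that $\bm{x}^\prime(\bm{\theta})$ is $C$-Lipschitz with $C=0$.

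Next I would feed this into Lemma~\ref{AdvBase}, which asserts that a $C$-Lipschitz attack produces a $(C_{\bm{\theta}\bm{\theta}}+C\,C_{\bm{\theta}\bm{x}})$-smooth per-example adversarial loss. Substituting $C=0$ collapses the cross term and leaves
\begin{align}
||\nabla_{\bm{\theta}}\ell_{\varepsilon}(\bm{x},\bm{\theta}_1)-\nabla_{\bm{\theta}}\ell_{\varepsilon}(\bm{x},\bm{\theta}_2)||\leq C_{\bm{\theta}\bm{\theta}}||\bm{\theta}_1-\bm{\theta}_2||.\nonumber
\end{align}
Finally I would average over the $N$ data points and invoke the triangle inequality, exactly as in the closing line of the proof of Theorem~\ref{l2Thm}, to lift the per-example bound to $L_{\varepsilon}=\frac{1}{N}\sum_n\ell_{\varepsilon}(\bm{x}_n,\bm{\theta})$, yielding the claimed $C_{\bm{\theta}\bm{\theta}}$-smoothness on the stated bounded set.

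There is essentially no analytic obstacle: the substance is already carried by the two lemmas, and the only point requiring care is confirming that the restriction to a fixed orthant (no sign change, $\theta_i\neq 0$) is precisely the regime in which Lemma~\ref{linflem} delivers $C=0$ rather than the non-Lipschitz behaviour that occurs across a sign flip. The one conceptual remark worth flagging is the contrast with Theorem~\ref{l2Thm}: because the $L_\infty$ attack sits at a corner of the feasible box and is locally independent of $\bm{\theta}$ within an orthant, the term $\frac{\varepsilon C_{\bm{\theta}\bm{x}}}{\theta_{\min}}$ appearing in the $L_2$ bound disappears entirely, so here the adversarial-loss smoothness constant coincides with the clean-loss constant $C_{\bm{\theta}\bm{\theta}}$. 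The price is that the admissible set is strictly smaller, being bounded away from every coordinate hyperplane $\theta_i=0$, across which the attack, and therefore the gradient, jumps discontinuously.
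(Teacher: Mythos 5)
Your proposal is correct and follows essentially the same route as the paper: the paper's proof likewise plugs the conclusion of Lemma~\ref{linflem} (that $||\bm{x}^\prime(\bm{\theta}_1)-\bm{x}^\prime(\bm{\theta}_2)||=0$ on a fixed orthant) into the decomposition $||\nabla_{\bm{\theta}}\ell_{\varepsilon}(\bm{x},\bm{\theta}_1)-\nabla_{\bm{\theta}}\ell_{\varepsilon}(\bm{x},\bm{\theta}_2)||\leq C_{\bm{\theta}\bm{\theta}}||\bm{\theta}_1-\bm{\theta}_2||+C_{\bm{\theta}\bm{x}}||\bm{x}^\prime(\bm{\theta}_1)-\bm{x}^\prime(\bm{\theta}_2)||$ underlying Lemma~\ref{AdvBase}, and then averages over the $N$ data points. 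Your reading of the cross term vanishing because $C=0$ is exactly the mechanism the paper uses.
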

\begin{proof}
    We have 
    $\!||\nabla_{\bm{\theta}}\ell_{\varepsilon}(\bm{\bm{x},\theta}_1)\!-\!\nabla_{\bm{\theta}}\ell_{\varepsilon}(\bm{x},\bm{\theta}_2)||\!\leq\!C_{\bm{\theta}\bm{\theta}}||\bm{\theta}_1\!-\!\bm{\theta}_2||\!+\!C_{\bm{\theta}\bm{x}}||\bm{x}^\prime\!(\bm{\theta}_1)\!-\!\bm{x}^\prime\!(\bm{\theta}_2)||$
     from $||\nabla_{\bm{\theta}}\ell_{\varepsilon}(\bm{x},\bm{\theta}_1)-\nabla_{\bm{\theta}}\ell_{\varepsilon}(\bm{x},\bm{\theta}_2)||\leq C_{\bm{\theta}\bm{\theta}}||\bm{\theta}_1-\bm{\theta}_2||+C_{\bm{\theta}\bm{x}}||\bm{x}^{1\prime}-\bm{x}^{2\prime}||$.
    From Lemmas~\ref{linflem} and \ref{AdvBase}, if all signs are the same as $\forall i\!:\!\mathrm{sign}(\theta_{1,i})\!=\!\mathrm{sign}(\theta_{2,i})$, we have 
    $||\bm{x}^\prime \!(\bm{\theta}_1)\!-\!\bm{x}^\prime\! (\bm{\theta}_2)||\!=\!0$, and thus,
        $||\nabla_{\bm{\theta}}\ell_{\varepsilon}(\bm{x},\bm{\theta}_1)\!-\!\nabla_{\bm{\theta}}\ell_{\varepsilon}(\bm{x},\bm{\theta}_2)||
        \leq \!C_{\bm{\theta}\bm{\theta}}||\bm{\theta}_1\!-\!\bm{\theta}_2||.$
        Otherwise, $||\bm{x}^\prime(\bm{\theta}_1)\!-\!\bm{x}^\prime(\bm{\theta}_2)||\leq 2\varepsilon$ because $||\bm{x}'(\bm{\theta})-\bm{x}||\leq \varepsilon$,
        and thus, the gradient of adversarial loss is not Lipschitz continuous, which completes the proof.
\end{proof}
\begin{lemma}
    We assume that $\nabla_x \ell(\bm{x},\bm{\theta})$ is a $C^1$ function and $\bm{x}^{\prime*}$ is the local maximum point satisfying $\nabla^2_x \ell (\bm{x}^{\prime*},\bm{\theta})\prec 0$\footnote{$\bm{A}\prec 0$ represents that $\bm{A}$ is negative definite.} inside the feasible region ($||\bm{x}^{\prime*}-\bm{x}||_p<\varepsilon$).
    If there is a constant $0<c<\infty$ such as $\max_i\lambda_i(\nabla^2_x \ell (\bm{x}^{\prime},\bm{\theta}))\leq -c$ where $\lambda_i$ is the $i$-th eigenvalue,
    the optimal adversarial example $\bm{x}^{\prime*}$ in some neighborhood $U$ of $\bm{\theta}$ is a continuously differentiable function $\bm{x}^\prime(\bm{\theta})$ and we have
    \begin{align}\textstyle
        ||\bm{x}^\prime(\bm{\theta}_1)-\bm{x}^\prime(\bm{\theta}_2)||\leq \frac{C_{\theta x}}{c}||\bm{\theta}_1-\bm{\theta}_2||~~\forall\bm{\theta}_1,\bm{\theta}_2\in U. 
    \end{align}
\end{lemma}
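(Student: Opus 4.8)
The plan is to exploit the fact that, because the optimal attack $\bm{x}^{\prime*}$ lies strictly inside the feasible region, the norm constraint is inactive and $\bm{x}^{\prime*}$ is an \emph{unconstrained} local maximizer of $\ell(\cdot,\bm{\theta})$. Hence it satisfies the first-order stationarity condition $\nabla_x \ell(\bm{x}^{\prime*},\bm{\theta})=\bm{0}$. I would define $G(\bm{x}^\prime,\bm{\theta}):=\nabla_x \ell(\bm{x}^\prime,\bm{\theta})$ and treat this as an implicit equation $G=\bm{0}$ relating $\bm{x}^\prime$ to $\bm{\theta}$. The whole argument then reduces to an application of the implicit function theorem to $G$.

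First I would verify the hypotheses of the implicit function theorem. Since $\nabla_x \ell$ is assumed to be a $C^1$ function, $G$ is continuously differentiable. The partial Jacobian of $G$ with respect to $\bm{x}^\prime$ is exactly the Hessian $\nabla^2_x \ell(\bm{x}^{\prime*},\bm{\theta})$, which is negative definite by assumption and therefore invertible. The implicit function theorem then yields a neighborhood $U$ of $\bm{\theta}$ and a unique $C^1$ map $\bm{x}^\prime(\bm{\theta})$ with $G(\bm{x}^\prime(\bm{\theta}),\bm{\theta})=\bm{0}$, together with the Jacobian formula
\begin{align}
D_{\bm{\theta}}\bm{x}^\prime(\bm{\theta})=-\left(\nabla^2_x \ell\right)^{-1}\nabla_{\bm{\theta}}\nabla_x \ell. \nonumber
\end{align}

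Next I would bound the spectral norm of this Jacobian factor by factor. The eigenvalues of $\nabla^2_x \ell$ satisfy $\lambda_i\leq -c<0$, so the eigenvalues of the inverse are $1/\lambda_i$ with $|1/\lambda_i|\leq 1/c$; hence $||(\nabla^2_x \ell)^{-1}||\leq 1/c$. For the mixed term, Assumption~1 states that $\nabla_{\bm{\theta}}\ell$ is $C_{\bm{\theta x}}$-Lipschitz in $\bm{x}$, which bounds the operator norm of $\nabla_x\nabla_{\bm{\theta}}\ell$, and hence of its transpose $\nabla_{\bm{\theta}}\nabla_x\ell$, by $C_{\bm{\theta x}}$. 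Submultiplicativity of the spectral norm then gives $||D_{\bm{\theta}}\bm{x}^\prime(\bm{\theta})||\leq C_{\bm{\theta x}}/c$ throughout $U$, where I rely on the eigenvalue bound holding uniformly so that the factor $1/c$ is valid at every point of $U$, not merely at $\bm{x}^{\prime*}$.

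Finally I would convert the uniform Jacobian bound into the stated Lipschitz estimate, exactly as in the proof of Lemma~\ref{l2lem}: for a continuously differentiable vector-valued function whose Jacobian has operator norm at most $C_{\bm{\theta x}}/c$ on a convex set, the mean value inequality gives $||\bm{x}^\prime(\bm{\theta}_1)-\bm{x}^\prime(\bm{\theta}_2)||\leq (C_{\bm{\theta x}}/c)||\bm{\theta}_1-\bm{\theta}_2||$. The main subtlety — and the one place that needs care — is that the implicit function theorem delivers only a local neighborhood $U$; to integrate the pointwise Jacobian bound into the Lipschitz statement I would shrink $U$ to a convex ball, on which the mean value inequality applies directly.
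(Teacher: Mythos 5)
Your proposal is correct and follows essentially the same route as the paper's proof: inactive constraint implies first-order stationarity, the implicit function theorem applied to $\nabla_x\ell=\bm{0}$ gives the $C^1$ dependence and the Jacobian formula, and the two spectral-norm bounds ($1/c$ for the inverse Hessian via the uniform eigenvalue assumption, $C_{\bm{\theta x}}$ for the mixed derivative via Assumption~1) combine by submultiplicativity. Your added remark about shrinking $U$ to a convex set before invoking the mean value inequality is a care the paper's own proof omits.
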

\begin{proof}
Since we assume that $\bm{x}^{\prime*}$ is the local maximum point inside the feasible regions, 
the constraints do not affect $\bm{x}^{\prime*}$. Thus, we have $\nabla_x \ell (\bm{x}^{\prime*} ,\bm{\theta})=\bm{0}$.
In addition, since we assume $\nabla^2_x \ell (\bm{x}^{\prime*},\bm{\theta})\prec 0$, we have $\mathrm{det}\nabla^2_x \ell (\bm{x}',\bm{\theta})\neq 0$.
From implicit function theorem, since $\mathrm{det}\nabla^2_x \ell (\bm{x}',\bm{\theta})\neq 0$,
there exists an open set $U$ containing $\bm{\theta}$ such that
there exists a unique continuously differentiable function $\bm{g}$ such that $\bm{x}^\prime=\bm{g}(\bm{\theta})$,
and $\nabla_x \ell (\bm{g}(\bm{\theta}) ,\bm{\theta})=\bm{0}$ for all $\bm{\theta}\in U$.
In addition, by using the implicit function theorem, its Jacobian is given by
\begin{align}\textstyle
    D_{\bm{\theta}}\bm{g}(\bm{\theta})=-\left(\nabla_x^2\ell(\bm{g}(\bm{\theta}),\bm{\theta}))\right)^{-1}\nabla_{\bm{\theta}} \nabla_{\bm{x}}\ell(\bm{g}(\bm{\theta}),\bm{\theta}).      
\end{align}
Since the upper bound of the operator norm of the Jacobian matrix becomes a Lipschitz constant,
we compute $\sigma_1(D_{\bm{\theta}}\bm{g}(\bm{\theta}))$.
From submultiplicativity, we have $\sigma_1(D_{\bm{\theta}}\bm{g}(\bm{\theta}))\leq \sigma_1(-\left(\nabla_x^2\ell(\bm{g}(\bm{\theta}),\bm{\theta})\right)^{-1})\sigma_1(\nabla_{\bm{\theta}} \nabla_{\bm{x}}\ell(\bm{g}(\bm{\theta}),\bm{\theta}))$.
Since the Hessian matrix is a normal matrix and we assume $\max_i\lambda_i(\nabla^2_x \ell (\bm{x}^{\prime},\bm{\theta}))\leq -c$, 
$\sigma_1(-\left(\nabla_x^2\ell(\bm{g}(\bm{\theta}),\bm{\theta})\right)^{-1})=\frac{1}{\min_i \sigma_i(\nabla_x^2\ell(\bm{g}(\bm{\theta}),\bm{\theta}))}=\frac{1}{\min_i |\lambda_i(\nabla_x^2\ell(\bm{g}(\bm{\theta}),\bm{\theta}))|}\leq \frac{1}{c}$.
On the other hand, we have $\sigma_1(\nabla_{\bm{\theta}} \nabla_{\bm{x}}\ell(\bm{g}(\bm{\theta}),\bm{\theta}))=\sigma_1(\nabla_{\bm{x}} \nabla_{\bm{\theta}}\ell(\bm{g}(\bm{\theta}),\bm{\theta}))$ since $\nabla_x \ell(\bm{x},\bm{\theta})$ is a $C^1$ function.
We have $\sigma_1(\nabla_{\bm{x}} \nabla_{\bm{\theta}}\ell(\bm{g}(\bm{\theta}),\bm{\theta}))\leq C_{\theta x}$ from Assumption~1 since we have
\begin{align}
    \sup_{\bm{\theta}} ||D_{\bm{\theta}} f(\bm{\theta})||&\textstyle\leq C_l,\label{JacoUp}
\end{align}
for a differentiable $C_l$-Lipschitz function $f$.
Therefore, the spectral norm of $D_{\bm{\theta}}\bm{g}(\bm{\theta})$ is bounded above by $C_{\theta x}/c$, and thus, we have
\begin{align}\textstyle
   ||\bm{x}^\prime(\bm{\theta}_1)-\bm{x}^\prime(\bm{\theta}_2)||\leq \sup_{\theta} \sigma_1(D_{\bm{\theta}}\bm{g}(\bm{\theta}))||\bm{\theta}_1-\bm{\theta}_2||\leq \frac{C_{\theta x}}{c}||\bm{\theta}_1-\bm{\theta}_2||,
\end{align}
which completes the proof.
\end{proof}
\begin{lemma}
    We assume that $\nabla_{x}\ell (\bm{x},\bm{\theta})$ is a $C^1$ function and $\tilde{\bm{x}}^{*}=[\mu^*,\bm{x}^{\prime*T}]^T$ is the local maximum point satisfying $\mathrm{det}\left(\nabla^2_{\tilde{x}} J (\bm{\tilde{x}}^{*},\bm{\theta})\right)>  0$
    on the boundary of the feasible regions of the $L_2$ constraint ($||\bm{x}^{\prime*}-\bm{x}||_2=\varepsilon$).
    If there is a constant $0<c<\infty$ such as $\min_i\sigma_i(\nabla^2_{\tilde{x}} J (\tilde{\bm{x}},\bm{\theta}))\geq c$ where $\sigma_i$ is the $i$-th singular value,
    the local maximum point $\bm{x}^{\prime*}$ in some neighborhood $U$ of $\bm{\theta}$ is a continuously differentiable function $\bm{x}^\prime(\bm{\theta})$ and we have
    \begin{align}
        ||\bm{x}^\prime(\bm{\theta}_1)-\bm{x}^\prime(\bm{\theta}_2)||\leq \frac{C_{\theta x}}{c}||\bm{\theta}_1-\bm{\theta}_2||~~\forall\bm{\theta}_1,\bm{\theta}_2\in U.
    \end{align}
    \end{lemma}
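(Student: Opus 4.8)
The plan is to mirror the argument of Lemma~\ref{InSLem}, applying the implicit function theorem to the stationarity condition $\nabla_{\tilde{x}} J(\tilde{\bm{x}},\bm{\theta})=\bm{0}$ of the Lagrange function rather than to $\nabla_x\ell=\bm{0}$. Because the constraint is active we have $\nabla_x\ell\neq\bm{0}$, so we work with $J$ instead of $\ell$ to recover a first-order optimality condition of the form (gradient)$=\bm{0}$. First I would note that the hypothesis $\mathrm{det}\!\left(\nabla^2_{\tilde{x}} J(\tilde{\bm{x}}^{*},\bm{\theta})\right)>0$ makes the bordered Hessian nonsingular, so the implicit function theorem supplies a neighborhood $U$ of $\bm{\theta}$ and a unique $C^1$ map $\tilde{\bm{x}}=\tilde{\bm{g}}(\bm{\theta})$ with $\nabla_{\tilde{x}} J(\tilde{\bm{g}}(\bm{\theta}),\bm{\theta})=\bm{0}$ on $U$, whose Jacobian is
\begin{align}
D_{\bm{\theta}}\tilde{\bm{g}}(\bm{\theta})=-\left(\nabla^2_{\tilde{x}} J\right)^{-1}\nabla_{\bm{\theta}}\nabla_{\tilde{x}} J.\nonumber
\end{align}

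Next I would bound the operator norm of this Jacobian by submultiplicativity, $\sigma_1(D_{\bm{\theta}}\tilde{\bm{g}})\leq\sigma_1\!\left((\nabla^2_{\tilde{x}} J)^{-1}\right)\sigma_1\!\left(\nabla_{\bm{\theta}}\nabla_{\tilde{x}} J\right)$, and control the two factors separately. For the inverse factor, the hypothesis $\min_i\sigma_i(\nabla^2_{\tilde{x}} J)\geq c$ gives $\sigma_1\!\left((\nabla^2_{\tilde{x}} J)^{-1}\right)=1/\min_i\sigma_i(\nabla^2_{\tilde{x}} J)\leq 1/c$. Here is the one real departure from Lemma~\ref{InSLem}: the bordered Hessian has a zero $(\mu,\mu)$ entry and is indefinite (saddle-type), so I cannot pass through the eigenvalues of a definite matrix and must use singular values directly, which is precisely why the assumption is stated in terms of $\min_i\sigma_i$. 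For the cross factor, the key structural observation is that the constraint term $\mu(||\bm{x}^\prime-\bm{x}||_2-\varepsilon)$ does not depend on $\bm{\theta}$; hence $\nabla_{\bm{\theta}}\nabla_{\tilde{x}} J$ has a vanishing $\mu$-row and its $\bm{x}^\prime$-block equals $\nabla_{\bm{\theta}}\nabla_{\bm{x}^\prime}\ell$, giving $\sigma_1(\nabla_{\bm{\theta}}\nabla_{\tilde{x}} J)=\sigma_1(\nabla_{\bm{x}^\prime}\nabla_{\bm{\theta}}\ell)\leq C_{\theta x}$ by Assumption~\ref{assp} together with the Jacobian bound~(\ref{JacoUp}).

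Finally I would transfer the bound from $\tilde{\bm{g}}$ to $\bm{x}^\prime$. Since $\bm{x}^\prime(\bm{\theta})$ is the sub-vector of $\tilde{\bm{g}}(\bm{\theta})$ obtained by dropping the multiplier coordinate, its Jacobian $D_{\bm{\theta}}\bm{x}^\prime(\bm{\theta})$ is a row-subblock of $D_{\bm{\theta}}\tilde{\bm{g}}(\bm{\theta})$, and deleting rows does not increase the spectral norm, so $\sigma_1(D_{\bm{\theta}}\bm{x}^\prime)\leq\sigma_1(D_{\bm{\theta}}\tilde{\bm{g}})\leq C_{\theta x}/c$. Integrating this Jacobian bound along segments in the convex neighborhood $U$ (as in the proof of Lemma~\ref{InSLem}) then yields $||\bm{x}^\prime(\bm{\theta}_1)-\bm{x}^\prime(\bm{\theta}_2)||\leq (C_{\theta x}/c)||\bm{\theta}_1-\bm{\theta}_2||$ for $\bm{\theta}_1,\bm{\theta}_2\in U$. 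I expect the main obstacle to be the careful treatment of the bordered-Hessian structure: confirming that the positive-determinant condition is exactly what licenses the implicit function theorem, and justifying the switch from the eigenvalue argument of Lemma~\ref{InSLem} to a singular-value argument, since the Lagrangian Hessian is an indefinite saddle matrix rather than a negative-definite one.
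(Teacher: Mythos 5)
Your proposal is correct and follows essentially the same route as the paper's proof: apply the implicit function theorem to $\nabla_{\tilde{x}}J=\bm{0}$, bound the Jacobian by submultiplicativity with $\sigma_1((\nabla^2_{\tilde{x}}J)^{-1})\leq 1/c$ and $\sigma_1(\nabla_{\bm{\theta}}\nabla_{\tilde{x}}J)\leq C_{\theta x}$ via the vanishing $\mu$-row. The only cosmetic difference is at the last step, where you drop the multiplier row of the Jacobian before taking norms while the paper instead bounds $||\bm{x}^\prime(\bm{\theta}_1)-\bm{x}^\prime(\bm{\theta}_2)||$ by $||\tilde{\bm{x}}(\bm{\theta}_1)-\tilde{\bm{x}}(\bm{\theta}_2)||$; both are valid and equivalent.
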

    \begin{proof}
    For adversarial loss with the $L_2$ constraint, if $\ell$ is twice differentiable, we can compute the bordered Hessian as
    \begin{align}\textstyle
        \nabla^2_{\tilde{x}} J (\bm{\tilde{x}},\bm{\theta})&\textstyle =
        \begin{bmatrix}
            0&\frac{(\bm{x}'-\bm{x})^T}{||\bm{x}'-\bm{x}||_2}\\
            \frac{\bm{x}'-\bm{x}}{||\bm{x}'-\bm{x}||_2}&\nabla_x^2\ell (\bm{x}^\prime,\bm{\theta})-\mu\frac{\bm{I}}{||\bm{x}'-\bm{x}||_2}+\mu\frac{(\bm{x}'-\bm{x})(\bm{x}'-\bm{x})^T}{||\bm{x}'-\bm{x}||_2^3}
        \end{bmatrix}.
    \end{align} 
    Since we assume that $\bm{x}^{\prime*}$ is the local maximum point on the boundary of the feasible regions, 
    $\tilde{\bm{x}}^{*}=[\mu^*,\bm{x}^{\prime*}]$ satisfies $\nabla_{\tilde{x}} J (\tilde{\bm{x}},\bm{\theta})=\bm{0}$.
    In addition, $\mathrm{det}\left(\nabla^2_{\tilde{x}} J (\bm{\tilde{x}}^*,\bm{\theta})\right)\neq 0$ since we assume $\mathrm{det}\left(\nabla^2_{\tilde{x}} J (\bm{\tilde{x}}^*,\bm{\theta})\right)>  0$.
    From implicit function theorem, since $\mathrm{det}\left(\nabla^2_{\tilde{x}} J (\bm{\tilde{x}}^*,\bm{\theta})\right)\neq 0$,
    there exists an open set $U$ containing $\bm{\theta}$ such that
    there exists a unique continuously differentiable function $\bm{\tilde{g}}$ such that $\tilde{\bm{x}}=\bm{\tilde{g}}(\bm{\theta})$,
    and $\nabla_{\tilde{x}} J(\tilde{\bm{g}}(\bm{\theta}) ,\bm{\theta})=\bm{0}$ for all $\bm{\theta}\in U$.
    In addition, by using the implicit function theorem, its Jacobian is given by
    \begin{align}\textstyle
       D_{\bm{\theta}}\tilde{\bm{g}}(\bm{\theta})=-\left(\nabla_{\tilde{x}}^2J (\tilde{\bm{g}}(\bm{\theta}),\bm{\theta}))\right)^{-1}\nabla_{\bm{\theta}} \nabla_{\tilde{\bm{x}}}J(\tilde{\bm{g}}(\bm{\theta}),\bm{\theta}).      
    \end{align}
    Since the upper bound of the operator norm of the Jacobian matrix becomes a Lipschitz constant,
    we compute $\sigma_1(D_{\bm{\theta}}\tilde{\bm{g}}(\bm{\theta}))$.
    From submultiplicativity, we have $\sigma_1(D_{\bm{\theta}}\tilde{\bm{g}}(\bm{\theta}))\leq \sigma_1(-\left(\nabla_x^2J (\tilde{\bm{g}}(\bm{\theta}),\bm{\theta})\right)^{-1})\sigma_1(\nabla_{\bm{\theta}} \nabla_{\tilde{\bm{x}}}J(\tilde{\bm{g}}(\bm{\theta}),\bm{\theta}))$.
    Since we assume $\min_i\sigma_i(\nabla^2_{\tilde{x}} J (\tilde{\bm{x}},\bm{\theta}))\geq c$, 
    $\sigma_1(-\left(\nabla_x^2J(\tilde{\bm{g}}(\bm{\theta}),\bm{\theta})\right)^{-1})=\frac{1}{\min_i\sigma_{i}(\nabla_{\tilde{x}}^2J(\tilde{\bm{g}}(\bm{\theta}),\bm{\theta}))}\leq \frac{1}{c}$.
    On the other hand, we have $\sigma_1(\nabla_{\bm{\theta}} \nabla_{\tilde{\bm{x}}} J(\tilde{\bm{g}}(\bm{\theta}),\bm{\theta}))=\sigma_1( \nabla_{\tilde{\bm{x}}}\nabla_{\bm{\theta}} J(\tilde{\bm{g}}(\bm{\theta}),\bm{\theta}))$  since $\nabla_{x}\ell (\bm{x},\bm{\theta})$ is a $C^1$ function.
    From Assumption~1 and \req{JacoUp}, we have $
    \sigma_1(\nabla_{\tilde{\bm{x}}} \nabla_{\bm{\theta}}J(\tilde{\bm{g}}(\bm{\theta}),\bm{\theta}))=\sigma_1([\bm{0},\nabla_x \nabla_{\bm{\theta}}\ell (\bm{x}^{\prime},\bm{\theta})]^T )\leq C_{\theta x}$.
    Therefore, the spectral norm of $D_{\bm{\theta}}\tilde{\bm{g}}(\bm{\theta})$ is bounded above by $C_{\theta x}/c$, and thus, we have
    \begin{align}\textstyle
        ||\bm{x}^\prime(\bm{\theta}_1)-\bm{x}^\prime(\bm{\theta}_2)||&\textstyle=\sqrt{\sum_i (x_i^\prime(\bm{\theta}_1)-x_i^\prime(\bm{\theta}_2))^2}\nonumber\\
        &\textstyle\leq\sqrt{\sum_i (x_i^\prime(\bm{\theta}_1)-x_i^\prime (\bm{\theta}_2))^2+(\mu(\bm{\theta}_1)-\mu(\bm{\theta}_2))^2 } \nonumber\\
        &\textstyle= ||\tilde{\bm{x}}(\bm{\theta}_1)-\tilde{\bm{x}}(\bm{\theta}_2)||\nonumber\\
        &\textstyle\leq \sup_{\theta} \sigma_1(D_{\bm{\theta}}\tilde{\bm{g}}(\bm{\theta}))||\bm{\theta}_1-\bm{\theta}_2||\leq \frac{C_{\theta x}}{c}||\bm{\theta}_1-\bm{\theta}_2||,
    \end{align}
    which completes the proof.
    \end{proof}
    \begin{theorem}\label{EnThm}
        Let $\bm{\Sigma}_{\bm{\theta}^\prime}$ be a variance-covariance matrix of $p_{\bm{\theta}}(\bm{\theta}^\prime )$ in \req{PropEnSGD}.
        If we use EntropySGD for the non-negative loss function $L(\bm{\theta})\!\geq\! 0$, we have 
        \begin{align}\textstyle
            \!\!\!||\nabla_{\bm{\theta}} F(\bm{\theta}_1)\!-\!\nabla_{\bm{\theta}} F(\bm{\theta}_2))||\!\leq \!\left(\gamma \!+\! \gamma^2 \sup_{\bm{\theta}} ||\bm{\Sigma_{\bm{\theta}^\prime}}||_F \right )||\bm{\theta}_1\!-\!\bm{\theta}_2||, 
        \end{align} 
        and $||\bm{\Sigma}_{\bm{\theta}^\prime}||_{F}\!< \!\infty$ for $\bm{\theta}\!\in\! \mathbb{R}^m$.
        Thus, $-\nabla_{\bm{\theta}} F(\bm{\theta})$ is Lipschitz continuous on a bounded set of $\bm{\theta}$.
    \end{theorem}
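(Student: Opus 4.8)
The plan is to prove Lipschitz continuity of the vector field $-\nabla_{\bm{\theta}}F$ by computing its Jacobian, which equals the negative Hessian $-\nabla^2_{\bm{\theta}}F(\bm{\theta})$, and by bounding the spectral norm of that Jacobian uniformly. I start from the closed-form gradient in \req{GradEnSGD}, $-\nabla_{\bm{\theta}}F(\bm{\theta})=\gamma(\bm{\theta}-\mathbb{E}_{p_{\bm{\theta}}}[\bm{\theta}'])$, so that
\begin{align}
-\nabla^2_{\bm{\theta}}F(\bm{\theta})=\gamma\bm{I}-\gamma\,\nabla_{\bm{\theta}}\mathbb{E}_{p_{\bm{\theta}}}[\bm{\theta}'],\nonumber
\end{align}
and the whole argument reduces to identifying the Jacobian $\nabla_{\bm{\theta}}\mathbb{E}_{p_{\bm{\theta}}}[\bm{\theta}']$ and controlling it.

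First I would differentiate the density $p_{\bm{\theta}}$ of \req{PropEnSGD}. Because the $\bm{\theta}$-dependence of $\log p_{\bm{\theta}}(\bm{\theta}')$ enters only through the quadratic penalty $-\frac{\gamma}{2}||\bm{\theta}-\bm{\theta}'||^2$ and the log-normalizer $F(\bm{\theta})$, a short computation using $\nabla_{\bm{\theta}}F=-\gamma(\bm{\theta}-\mathbb{E}_{p_{\bm{\theta}}}[\bm{\theta}'])$ gives
\begin{align}
\nabla_{\bm{\theta}}p_{\bm{\theta}}(\bm{\theta}')=\gamma\,p_{\bm{\theta}}(\bm{\theta}')\bigl(\bm{\theta}'-\mathbb{E}_{p_{\bm{\theta}}}[\bm{\theta}']\bigr).\nonumber
\end{align}
Differentiating $\mathbb{E}_{p_{\bm{\theta}}}[\bm{\theta}']=\int\bm{\theta}'\,p_{\bm{\theta}}(\bm{\theta}')\,d\bm{\theta}'$ under the integral sign and substituting this identity yields the score-covariance relation $\nabla_{\bm{\theta}}\mathbb{E}_{p_{\bm{\theta}}}[\bm{\theta}']=\gamma\bm{\Sigma}_{\bm{\theta}'}$, where $\bm{\Sigma}_{\bm{\theta}'}$ is precisely the covariance matrix of $p_{\bm{\theta}}$. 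This is the familiar fact that the Hessian of a log-partition function equals the covariance of the sufficient statistics: indeed $-F$ is, up to the additive term $\frac{\gamma}{2}||\bm{\theta}||^2$, the log-partition function of an exponential family with natural parameter $\gamma\bm{\theta}$ and sufficient statistic $\bm{\theta}'$. Hence $-\nabla^2_{\bm{\theta}}F(\bm{\theta})=\gamma\bm{I}-\gamma^2\bm{\Sigma}_{\bm{\theta}'}$.

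With the Jacobian in hand, I would bound its spectral norm by the triangle inequality, $\sigma_1(\gamma\bm{I}-\gamma^2\bm{\Sigma}_{\bm{\theta}'})\leq\gamma+\gamma^2\sigma_1(\bm{\Sigma}_{\bm{\theta}'})$, and then pass to the Frobenius norm via $\sigma_1(\bm{\Sigma}_{\bm{\theta}'})\leq||\bm{\Sigma}_{\bm{\theta}'}||_F$. Since the Lipschitz constant of a continuously differentiable map is the supremum over $\bm{\theta}$ of the spectral norm of its Jacobian (the same principle used in \req{JacoUp} and in the proof of Lemma~\ref{l2lem}), this immediately gives the claimed constant $\gamma+\gamma^2\sup_{\bm{\theta}}||\bm{\Sigma}_{\bm{\theta}'}||_F$.

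The remaining and most delicate step is to verify $||\bm{\Sigma}_{\bm{\theta}'}||_F<\infty$, and this is exactly where non-negativity of $L$ is used. Since $L(\bm{\theta}')\geq0$ implies $\exp(-L(\bm{\theta}'))\leq1$, the unnormalized density $\exp(-L(\bm{\theta}')-\frac{\gamma}{2}||\bm{\theta}-\bm{\theta}'||^2)$ is dominated by the Gaussian kernel $\exp(-\frac{\gamma}{2}||\bm{\theta}-\bm{\theta}'||^2)$ of covariance $\gamma^{-1}\bm{I}$. Consequently every second moment of $p_{\bm{\theta}}$ is finite—the numerator $\int\theta_i'\theta_j'\exp(-L(\bm{\theta}')-\frac{\gamma}{2}||\bm{\theta}-\bm{\theta}'||^2)\,d\bm{\theta}'$ is bounded by a polynomial-times-Gaussian integral while the normalizer is strictly positive—so each entry of $\bm{\Sigma}_{\bm{\theta}'}$, and hence $||\bm{\Sigma}_{\bm{\theta}'}||_F$, is finite for every $\bm{\theta}\in\mathbb{R}^m$. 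I expect the main obstacle to be the rigorous justification of differentiating under the integral sign in the two computations above; fortunately the same Gaussian majorant supplies the integrable dominating function required by dominated convergence, so the non-negativity hypothesis does double duty. Finally, because $\sup_{\bm{\theta}\in\mathbb{R}^m}||\bm{\Sigma}_{\bm{\theta}'}||_F$ may fail to be finite, one restricts to a bounded set of $\bm{\theta}$, over which the supremum is finite, to conclude Lipschitz continuity of $-\nabla_{\bm{\theta}}F$.
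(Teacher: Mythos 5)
Your proposal is correct and follows essentially the same route as the paper's proof: compute $-\nabla^2_{\bm{\theta}}F=\gamma\bm{I}-\gamma^2\bm{\Sigma}_{\bm{\theta}'}$ by differentiating the gradient under the integral sign, bound the spectral norm by $\gamma+\gamma^2\|\bm{\Sigma}_{\bm{\theta}'}\|_F$, and use $L\geq 0$ so that $\exp(-L)\leq 1$ dominates the moment integrals by Gaussian ones. The only cosmetic difference is that you package the covariance identity via the score/log-partition viewpoint where the paper grinds through the coordinate-wise derivative, and you bound the improper integrals directly where the paper splits them into positive and negative parts; both are equivalent.
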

    \begin{proof}
From \req{HesUp}, the loss of EntropySGD is smooth if the spectral norm of the Hessian matrix of loss are bounded above by finite values.
Thus, we investigate the Hessian matrix of EntropySGD.
The gradient of EntropySGD is
\begin{align}\textstyle
    -\nabla_{\bm{\theta}} F(\bm{\theta},\gamma)=\gamma(\bm{\theta}-\mathbb{E}_{p_{\bm{\theta}}(\bm{\theta}')}[\bm{\theta}']),\label{GradEnSGD}
\end{align}
where $p_{\bm{\theta}}$ is a probability density function as
\begin{align}\textstyle
    p_{\bm{\theta}}(\bm{\theta}')=\frac{1}{e^{F(\bm{\theta},\gamma)}}\mathrm{exp} \left(- L(\bm{\theta}')-\frac{\gamma}{2}||\bm{\theta}\!-\!\bm{\theta}'||_2^2 \right).
   \label{PropEnSGD}
\end{align}
Note that we use the following equality for the derivation of the gradient:
\begin{align}
    \label{Average}
    \mathbb{E}_{p_{\bm{\theta}}}[f(\bm{\theta}^{\prime})]\!=\!\frac{1}{e^F}\!\int\!\!f(\bm{\theta}^\prime) \!\exp(-L(\bm{\theta}^{\prime})\!-\!\frac{\gamma}{2}||\bm{\theta}\!-\!\bm{\theta}^{\prime}||^2_2)d\bm{\theta}^{\prime}.
\end{align}
To compute the Hessian $-\nabla^2_{\bm{\theta}} \! F$, we compute the derivative of \req{GradEnSGD} as
\begin{align}
    -\frac{\partial^2\!F}{\partial \theta_j\partial \theta_i }&\!=\! \frac{\partial}{\partial \theta_j}\gamma (\theta_{i} -\mathbb{E}_{p_{\bm{\theta}}}[\theta^{\prime}_i])zw
    \!=\!\gamma \delta_{ij}\!-\!\gamma\frac{\partial}{\partial \theta_j}\mathbb{E}_{p_{\bm{\theta}}}[\theta^{\prime}_i]\\
    &\!\!\!\!\!\!=\!\gamma \delta_{ij}\!-\!\gamma\frac{\partial}{\partial \theta_j}\!\frac{1}{e^F}\!\!\!\int\!\!\theta^{\prime}_i\!\exp(\!-\!L(\bm{\theta}^{\prime})\!-\!\frac{\gamma}{2}||\bm{\theta}\!-\!\bm{\theta}^{\prime}||^2_2)d\bm{\theta}^{\prime}.
\end{align}
By using \req{Average}, the second term becomes 
\begin{align}
    &\frac{\partial}{\partial \theta_j} \frac{1}{e^F} \int \theta^{\prime}_i\exp(-L(\bm{\theta}^{\prime})-\frac{\gamma}{2}||\bm{\theta}-\bm{\theta}^{\prime}||^2_2)d\bm{\theta}^{\prime}\nonumber\\
    = &-\frac{1}{e^{2F}} (\frac{\partial F}{\partial \theta_j} )e^F \int \theta^{\prime}_i\exp(-L(\bm{\theta}^{\prime})-\frac{\gamma}{2}||\bm{\theta}-\bm{\theta}^{\prime}||^2_2)d\bm{\theta}^{\prime}\nonumber\\
&+   \frac{1}{e^F} \int \frac{\partial}{\partial \theta_j}  \theta^{\prime}_i\exp(-L(\bm{\theta}^{\prime})-\frac{\gamma}{2}||\bm{\theta}-\bm{\theta}^{\prime}||^2_2)d\bm{\theta}^{\prime}\nonumber\\
=& \gamma (\theta_j \!-\!\mathbb{E}_{p_{\bm{\theta}}}[\theta^{\prime}_j] )\frac{1}{e^{F}}\!\!\int\!\theta^{\prime}_i\exp(-L(\bm{\theta}^{\prime})\!-\!\frac{\gamma}{2}||\bm{\theta}-\bm{\theta}^{\prime}||^2_2)d\bm{\theta}^{\prime}\nonumber\\
&+\frac{1}{e^F}\!\!\int\!\!-\gamma(\theta_j-\theta_j^{\prime}) \theta^{\prime}_i\exp(-L(\bm{\theta}^{\prime})\!-\!\frac{\gamma}{2}||\bm{\theta}-\bm{\theta}^{\prime}||^2_2)d\bm{\theta}^{\prime}\nonumber\\
=& \gamma (\theta_j -\mathbb{E}_{p_{\bm{\theta}}}[\theta^{\prime}_j] )\mathbb{E}_{p_{\bm{\theta}}}[\theta^{\prime}_i]
-\gamma\mathbb{E}_{p_{\bm{\theta}}}[(\theta_j-\theta_j^{\prime}) \theta^{\prime}_i]\nonumber\\
=& \gamma\theta_j\mathbb{E}_{p_{\bm{\theta}}}[\theta^{\prime}_i]\!-\!\gamma \mathbb{E}_{p_{\bm{\theta}}}[\theta^{\prime}_j] \mathbb{E}_{p_{\bm{\theta}}}[\theta^{\prime}_i]
\!-\!\gamma\theta_j\mathbb{E}_{p_{\bm{\theta}}}[\theta^{\prime}_i]\!+\!\gamma\mathbb{E}_{p_{\bm{\theta}}}[\theta_j^{\prime}\theta^{\prime}_i ]\nonumber\\
=& -\gamma \mathbb{E}_{p_{\bm{\theta}}}[\theta^{\prime}_j] \mathbb{E}_{p_{\bm{\theta}}}[\theta^{\prime}_i]+\gamma\mathbb{E}_{p_{\bm{\theta}}}[\theta_j^{\prime}\theta^{\prime}_i ]\nonumber\\
=& \gamma (\mathbb{E}_{p_{\bm{\theta}}}[\theta_j^{\prime}\theta^{\prime}_i ]-\mathbb{E}_{p_{\bm{\theta}}}[\theta^{\prime}_j] \mathbb{E}_{p_{\bm{\theta}}}[\theta^{\prime}_i]).
\end{align}
Therefore, we have
\begin{align}
    -\frac{\partial^2 F}{\partial \theta_j \partial \theta_i }&=\gamma \delta_{ij} -\gamma^2 (\mathbb{E}_{p_{\bm{\theta}}}[\theta_j^{\prime}\theta^{\prime}_i ]-\mathbb{E}_{p_{\bm{\theta}}}[\theta^{\prime}_j] \mathbb{E}_{p_{\bm{\theta}}}[\theta^{\prime}_i]).\nonumber
\end{align}
Since $\mathbb{E}_{p_{\bm{\theta}}}[\bm{\theta}'\bm{\theta}'^T]-\mathbb{E}_{p_{\bm{\theta}}}[\bm{\theta}']\mathbb{E}_{p_{\bm{\theta}}}[\bm{\theta}'^T]$
is a variance-covariance matrix of $p_{\bm{\theta}}(\bm{\theta}^{\prime})$, we have
\begin{align}
    -\nabla^2_{\bm{\theta}} F=\gamma \bm{I}-\gamma^2 \bm{\Sigma}_{\bm{\theta}}.
\end{align}
Thus, for the spectral norm of $-\nabla^2_{\bm{\theta}} F$, we have 
\begin{align}
    \sigma_1(-\nabla^2_{\bm{\theta}} F)= \sigma_1(\gamma \bm{I}-\gamma^2 \bm{\Sigma}_{\bm{\theta}})\leq \gamma \sigma_1(\bm{I})+\sigma_1(-\gamma^2 \bm{\Sigma}_{\bm{\theta}})=\gamma +\gamma^2 \sigma_1(\bm{\Sigma}_{\bm{\theta}}).
\end{align}
Since spectral norms of matrices are smaller than or equal to Frobenius norms, we have
\begin{align}
    \gamma +\gamma^2\sigma_1( \bm{\Sigma}_{\bm{\theta}}) \leq \gamma+\gamma^2||\bm{\Sigma}_{\bm{\theta}}||_F.
\end{align}
Thus, if all elements of $\bm{\Sigma}_{\bm{\theta}}$ are finite values, $\gamma+\gamma^2||\bm{\Sigma}_{\bm{\theta}}||_F$
becomes a Lipschitz constant.
Therefore, we will show 
\begin{align}\label{bound}
    -\infty<\left[\bm{\Sigma}_{\bm{\theta}}\right]_{i,j} =\mathbb{E}_p[\theta^{\prime}_i\theta^{\prime}_j]-\mathbb{E}_p[\theta^{\prime}_i]\mathbb{E}_p[\theta^{\prime}_j]<\infty.
\end{align}
If we have $|\mathbb{E}_p[\theta^{\prime}_i\theta^{\prime}_j]|<\infty$ and $|\mathbb{E}_p[\theta^{\prime}_i]\mathbb{E}_p[\theta^{\prime}_j]|<\infty$,
\req{bound} is satisfied.
From \req{Average}, $|\mathbb{E}_p[\theta^{\prime}_i\theta^{\prime}_j]|<\infty$ and $|\mathbb{E}_p[\theta^{\prime}_i]\mathbb{E}_p[\theta^{\prime}_j]|<\infty$ are satisfied
if the following improper integrals converge:
\begin{align}
    \int  \theta^{\prime}_i\theta^{\prime}_j\mathrm{exp}(-L(\bm{\theta^{\prime}})-\frac{\gamma}{2}||\bm{\theta}-\bm{\theta^{\prime}}||_2^2)d\bm{\theta^{\prime}},\label{Integ1}\\
    \int  \theta^{\prime}_i\mathrm{exp}(-L(\bm{\theta^{\prime}})-\frac{\gamma}{2}||\bm{\theta}-\bm{\theta^{\prime}}||_2^2)d\bm{\theta^{\prime}}.\label{Integ2}
\end{align}
Note that we ignore the division by $e^F$ because $e^F$ is bounded as $0<e^F<\infty$.
First, we consider \req{Integ1}.
Since $\theta^{\prime}_i\theta^{\prime}_j$
 becomes negative and is difficult to compute in the improper integral, 
we split the integral into two integrals: 
\begin{align}
    \!\!\int  \theta^{\prime}_i\theta^{\prime}_j\mathrm{exp}(-L(\bm{\theta^{\prime}})-\frac{\gamma}{2}||\bm{\theta}-\bm{\theta^{\prime}}||_2^2)d\bm{\theta^{\prime}}
    &=\int_{\bm{\Theta}_+}   \theta^{\prime}_i\theta^{\prime}_j\mathrm{exp}(-L(\bm{\theta^{\prime}})-\frac{\gamma}{2}||\bm{\theta}-\bm{\theta^{\prime}}||_2^2)d\bm{\theta^{\prime}}\label{Posi1}\\
    &-\int_{\bm{\Theta}_{-}} -\theta^{\prime}_i\theta^{\prime}_j\mathrm{exp}(-L(\bm{\theta^{\prime}})-\frac{\gamma}{2}||\bm{\theta}-\bm{\theta^{\prime}}||_2^2)d\bm{\theta^{\prime}}\label{Nega1},
\end{align}
where $\bm{\Theta}_{+}$ is the positive part: $\bm{\Theta}_{+}=\{\bm{\theta}| 0\!\leq\!\theta^\prime_i\!<\!\infty, 0\!\leq\!\theta^\prime_j\!<\!\infty, -\infty\!<\!\theta^\prime_k\!<\!\infty~\mathrm{for}~k\neq i,j\}\cup\{\bm{\theta}| -\infty\!<\!\theta^\prime_i\!<\!0, -\infty\!<\!\theta^\prime_j\!<\!0, -\infty\!<\!\theta^\prime_k\!<\!\infty~\mathrm{for}~k\neq i,j\}$
and $\bm{\Theta}_{-}$ is the negative part:  $\bm{\Theta}_{-}=\{\bm{\theta}| 0\!\leq\!\theta^\prime_i\!<\!\infty, -\infty\!<\!\theta^\prime_j\!<\!0, -\infty\!<\!\theta^\prime_k\!<\!\infty~\mathrm{for}~k\neq i,j\}\cup\{\bm{\theta}| -\infty\!<\!\theta^\prime_i\!<\!0, 0\!\leq\!\theta^\prime_j\!<\!\infty, -\infty\!<\!\theta^\prime_k\!<\!\infty~\mathrm{for}~k\neq i,j\}$.
For the positive part \req{Posi1}, we have 
\begin{align}
    \int_{\bm{\Theta}_+}   \theta^{\prime}_i\theta^{\prime}_j\mathrm{exp}(-L(\bm{\theta^{\prime}})-\frac{\gamma}{2}||\bm{\theta}-\bm{\theta^{\prime}}||_2^2)d\bm{\theta^{\prime}}
    \leq \int_{\bm{\Theta}_+}   \theta^{\prime}_i\theta^{\prime}_j\mathrm{exp}(-\frac{\gamma}{2}||\bm{\theta}-\bm{\theta^{\prime}}||_2^2)d\bm{\theta^{\prime}},\label{UpBoundPosi1}
\end{align}
because $0\leq L(\bm{\theta^{\prime}})$, $0\leq \mathrm{exp}(-L(\bm{\theta^{\prime}}))\leq 1$, and $\theta_i\theta_j\geq 0$ for $\bm{\theta}\in \bm{\Theta}_{+}$.
Since the right hand side of \req{UpBoundPosi1} is a part of the computation of the covariance (variance) of Gaussian distributions $\mathcal{N}(\bm{\theta},\gamma\bm{I})$,
we have 
\begin{align}
    \int_{\bm{\Theta}_+}   \theta^{\prime}_i\theta^{\prime}_j\mathrm{exp}(-L(\bm{\theta^{\prime}})-\frac{\gamma}{2}||\bm{\theta}-\bm{\theta^{\prime}}||_2^2)d\bm{\theta^{\prime}}
    \leq \int_{\bm{\Theta}_+}   \theta^{\prime}_i\theta^{\prime}_j\mathrm{exp}(-\frac{\gamma}{2}||\bm{\theta}-\bm{\theta^{\prime}}||_2^2)d\bm{\theta^{\prime}}<\infty,
\end{align}
for $\forall \bm{\theta}\in \mathbb{R}^m$.
Similarly, for negative part \req{Nega1}, we have 
\begin{align}
    \int_{\bm{\Theta}_-}   -\theta^{\prime}_i\theta^{\prime}_j\mathrm{exp}(-L(\bm{\theta^{\prime}})-\frac{\gamma}{2}||\bm{\theta}-\bm{\theta^{\prime}}||_2^2)d\bm{\theta^{\prime}}
    \leq \int_{\bm{\Theta}_-}   -\theta^{\prime}_i\theta^{\prime}_j\mathrm{exp}(-\frac{\gamma}{2}||\bm{\theta}-\bm{\theta^{\prime}}||_2^2)d\bm{\theta^{\prime}}<\infty,\label{UpBoundNega1}
\end{align}
because $\theta_i\theta_j\leq 0$ for $\bm{\theta}\in \bm{\Theta}_{-}$. Therefore,
we have 
\begin{align}
    -\infty<\int  \theta^{\prime}_i\theta^{\prime}_j\mathrm{exp}(-L(\bm{\theta^{\prime}})-\frac{\gamma}{2}||\bm{\theta}-\bm{\theta^{\prime}}||_2^2)d\bm{\theta^{\prime}}< \infty,
\end{align}
and thus, $\mathbb{E}_p[\theta^{\prime}_i\theta^{\prime}_j]$ converges to a finite value.
Next, we consider \req{Integ2}.
Similarly, we split the integral into two integrals: 
\begin{align}
    \!\!\int  \theta^{\prime}_i\mathrm{exp}(-L(\bm{\theta^{\prime}})-\frac{\gamma}{2}||\bm{\theta}-\bm{\theta^{\prime}}||_2^2)d\bm{\theta^{\prime}}
    &=\int_{\bm{\Theta}_+}   \theta^{\prime}_i\mathrm{exp}(-L(\bm{\theta^{\prime}})-\frac{\gamma}{2}||\bm{\theta}-\bm{\theta^{\prime}}||_2^2)d\bm{\theta^{\prime}}\label{Posi2}\\
    &-\int_{\bm{\Theta}_{-}} -\theta^{\prime}_i\mathrm{exp}(-L(\bm{\theta^{\prime}})-\frac{\gamma}{2}||\bm{\theta}-\bm{\theta^{\prime}}||_2^2)d\bm{\theta^{\prime}}\label{Nega2},
\end{align}
where $\bm{\Theta}_{+}$ is the positive part: $\bm{\Theta}_{+}=\{\bm{\theta}| 0\!\leq\!\theta^\prime_i\!<\!\infty, -\infty\!<\!\theta^\prime_k\!<\!\infty~\mathrm{for}~k\neq i\}$
and $\bm{\Theta}_{-}$ is the negative part:  $\bm{\Theta}_{-}=\{\bm{\theta}| -\infty\!\leq\!\theta^\prime_i\!<\!0, -\infty\!<\!\theta^\prime_k\!<\!\infty~\mathrm{for}~k\neq i\}$.
In the same manner, we have 
\begin{align}
    \int_{\bm{\Theta}_+}   \theta^{\prime}_i\mathrm{exp}(-L(\bm{\theta^{\prime}})-\frac{\gamma}{2}||\bm{\theta}-\bm{\theta^{\prime}}||_2^2)d\bm{\theta^{\prime}}
    &\leq \int_{\bm{\Theta}_+}   \theta^{\prime}_i\mathrm{exp}(-\frac{\gamma}{2}||\bm{\theta}-\bm{\theta^{\prime}}||_2^2)d\bm{\theta^{\prime}}<\infty\label{UpBoundPosi2}\\
    \int_{\bm{\Theta}_-}   -\theta^{\prime}_i\mathrm{exp}(-L(\bm{\theta^{\prime}})-\frac{\gamma}{2}||\bm{\theta}-\bm{\theta^{\prime}}||_2^2)d\bm{\theta^{\prime}}
    &\leq \int_{\bm{\Theta}_-}   -\theta^{\prime}_i\mathrm{exp}(-\frac{\gamma}{2}||\bm{\theta}-\bm{\theta^{\prime}}||_2^2)d\bm{\theta^{\prime}}<\infty\label{UpBoundNega2}
\end{align}
because $\int_{\bm{\Theta}_*}   \theta^{\prime}_i\mathrm{exp}(-\frac{\gamma}{2}||\bm{\theta}-\bm{\theta^{\prime}}||_2^2)d\bm{\theta^{\prime}}$ 
is a part of the computation of the mean of Gaussian distributions $\mathcal{N}(\bm{\theta},\gamma\bm{I})$.
From Eqs.~(\ref{UpBoundPosi2}) and (\ref{UpBoundNega2}), $\mathbb{E}_p[\theta^{\prime}_i]$ also converges to a finite value.
From the above, we have
\begin{align}
    |\mathbb{E}_p[\theta^{\prime}_i\theta^{\prime}_j]|<\infty,\\
    |\mathbb{E}_p[\theta^{\prime}_i]\mathbb{E}_p[\theta^{\prime}_j]|<\infty,
\end{align}
 and thus, for a bounded set of $\bm{\theta }$, we have
\begin{align}
    \sigma_1(\bm{\Sigma}_{\bm{\theta}})\leq ||\bm{\Sigma}_{\bm{\theta}}||_F<\infty.
\end{align}
Finally, we have 
\begin{align}
    \sigma_1(-\nabla^2_{\bm{\theta}} F)\leq \gamma+\gamma^2||\bm{\Sigma}_{\bm{\theta}}||_F<\infty,
\end{align}
which completes the proof.
\end{proof}
    \section{Relation between smoothness and flatness}
    Lipschitz continuity of the gradient of an objective function 
    is required for effective optimization by gradient-based methods \cite{NLProgram}.
    On the other hand, the flatness of loss landscape is related to generalization performance of deep learning \cite{keskar2016large,awp,SHAM}.
    Sharpness of loss function is related to the spectral norm of Hessian matrices:
    \citet{dinh2017sharp} and \cite{zhang2021flatness} defined the sharpness as follows:
    \setcounter{definition}{1}
    \begin{definition}
        Let $B_2(\varepsilon,\bm{\theta})$ be an $L_2$ ball centered on a minimum $\bm{\theta}$ with radius $\varepsilon$.
        Then, for a non-negative valued loss function $L$, the $\varepsilon$-sharpness will be defined as proportional to
        \begin{align}
            \textstyle \frac{\max_{\bm{\theta}^\prime\in B_2(\varepsilon,\bm{\theta})}(L(\bm{\theta}^\prime)-L(\bm{\theta}))}{1+L(\bm{\theta})}.\label{sharp}
        \end{align}
        For small $\varepsilon$, \req{sharp} can be approximated by using the spectral norm of the Hessian matrix:
        \begin{align}
            \textstyle \frac{||\nabla_{\bm{\theta}}^2L(\bm{\theta})||_2\varepsilon^2}{2(1+L(\bm{\theta}))}.        
        \end{align}
    \end{definition}
    \citet{chaudhari2019entropy} also used the spectral norm of the Hessian matrix as a measure of flatness.
    The $C_s$-smooth function $L(\bm{\theta})$ with small $C_s$ tends to have smaller spectral norms of Hessian matrices 
    because the largest spectral norm of Hessian matrices is bounded by a Lipschitz constant of the gradient 
    if the gradient is an everywhere differentiable function
    \begin{align}\textstyle\label{HesUp}
        \textstyle\sup_{\bm{\theta}} \sigma_1 (\nabla^2_{\bm{\theta}} L(\bm{\theta}))&\textstyle\leq C_s.
    \end{align}
    Therefore, the smooth function tends to have flat loss landscapes.
    
    \citet{awp} have investigated the relationship between loss landscape and generalization performance of adversarial training.
    They have shown that adversarial training can sharpen the loss landscape, which degrades generalization performance.
    To improve the flatness of loss landscape, they have proposed adversarial weight perturbation (AWP).
    AWP trains models by minimizing the following objective function:
    \begin{align}
        \textstyle
    \!\!\!\!\max _{\bm{v} \in \mathcal{V}} L(\bm{\theta}\!+\!\mathbf{v})\!=\!\max _{\bm{v} \in \mathcal{V}}\! \frac{1}{N}\! \sum_i \ell_{\varepsilon}(\bm{x}_i,y_i,\bm{\theta}\!+\!\bm{v}),
    \end{align}
    where $\mathcal{V}$ is a feasible region for $\bm{v}$, and $\mathcal{V}_l\!=\!\{\bm{v}_l|~||\bm{v}_l||=\gamma_A||\bm{\theta}_l||\}$ for the $l$-th layer parameter $\bm{\theta}_l$.
    AWP can flatten the loss landscape and achieve good generative performance for adversarial training.
    However, since $\bm{v}$ is iteratively updated like PGD, AWP requires about 8\% training time overhead.
    \section{Additional toy examples for visualization of loss surfaces.}
 \rfig{Plot2} shows the loss surface for the dataset $L(\bm{\theta})=1/N \sum_n \ell(\bm{x}_n,y_n,\bm{\theta}) $ in the standard training, adversarial training with $L_2$ constraint, and
adversarial training with $L_\infty$ constraint when $d$ is set to two. 
This figure also shows that adversarial loss with the $L_2$ constraint is not continuous at $\bm{\theta}=\bm{0}$
and adversarial loss with the $L_\infty$ constraint is not continuous at $\theta_i=0$; i.e., the point where the sign of $\theta_i$ changes.
Therefore, the result follows Theorems~\ref{l2Thm} and \ref{linfThm}.
In \rfig{Plot2}, we set the label as $y_n=\mathrm{sign}(x_{n,1})$, and thus, the optimal $\theta_2^*$ for adversarial loss is 0
since the second feature of adversarial examples $x_{n,2}+\delta_{i,2}$ only degrades the classification performance.
In this case, adversarial training with $L_\infty$ constraints can suffer from the non-smoothness of the loss function
because the Lipschitz condition does not hold.
    \begin{figure*}[tb]
        \centering
        \subfloat[][Clean loss ]{\includegraphics[width=0.325\linewidth]{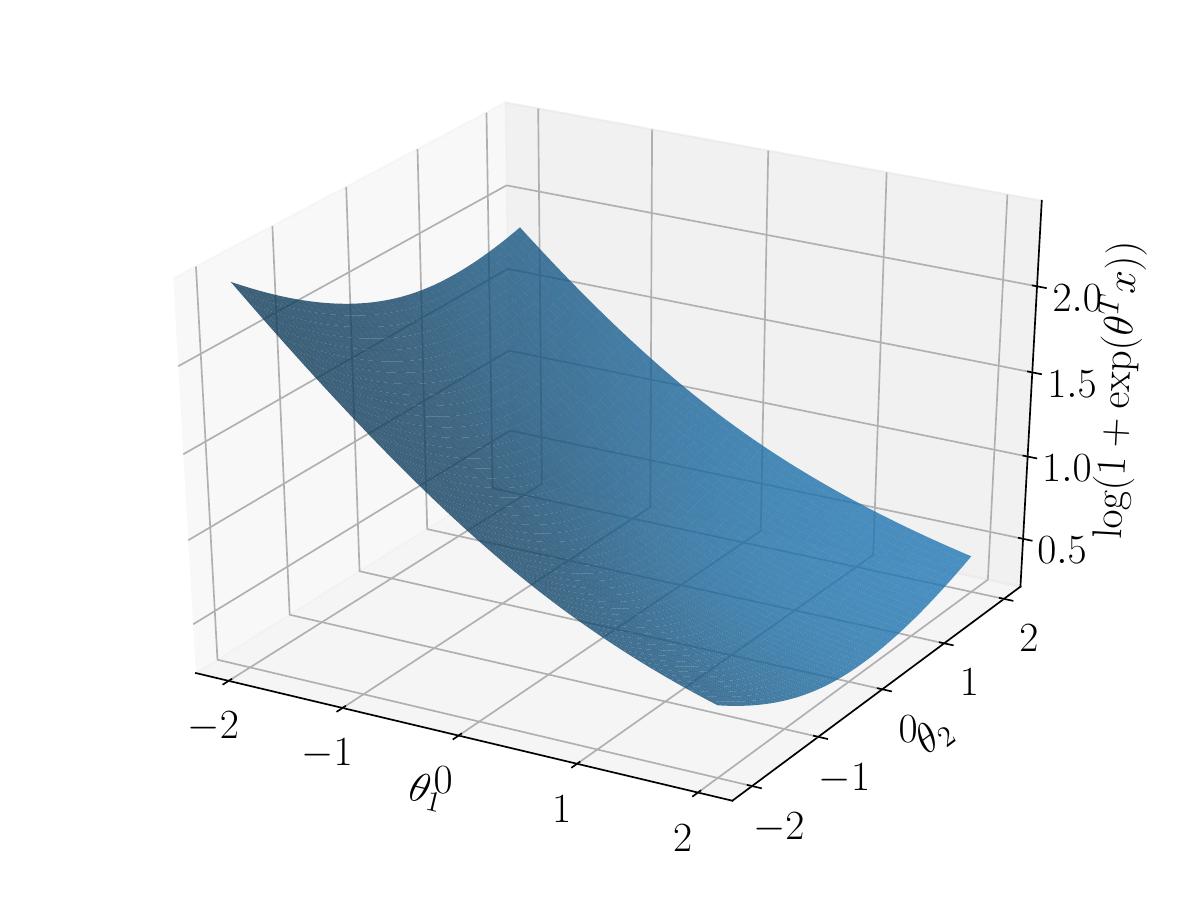}}
        \subfloat[][Adversarial loss with $L_2$]{\includegraphics[width=0.325\linewidth]{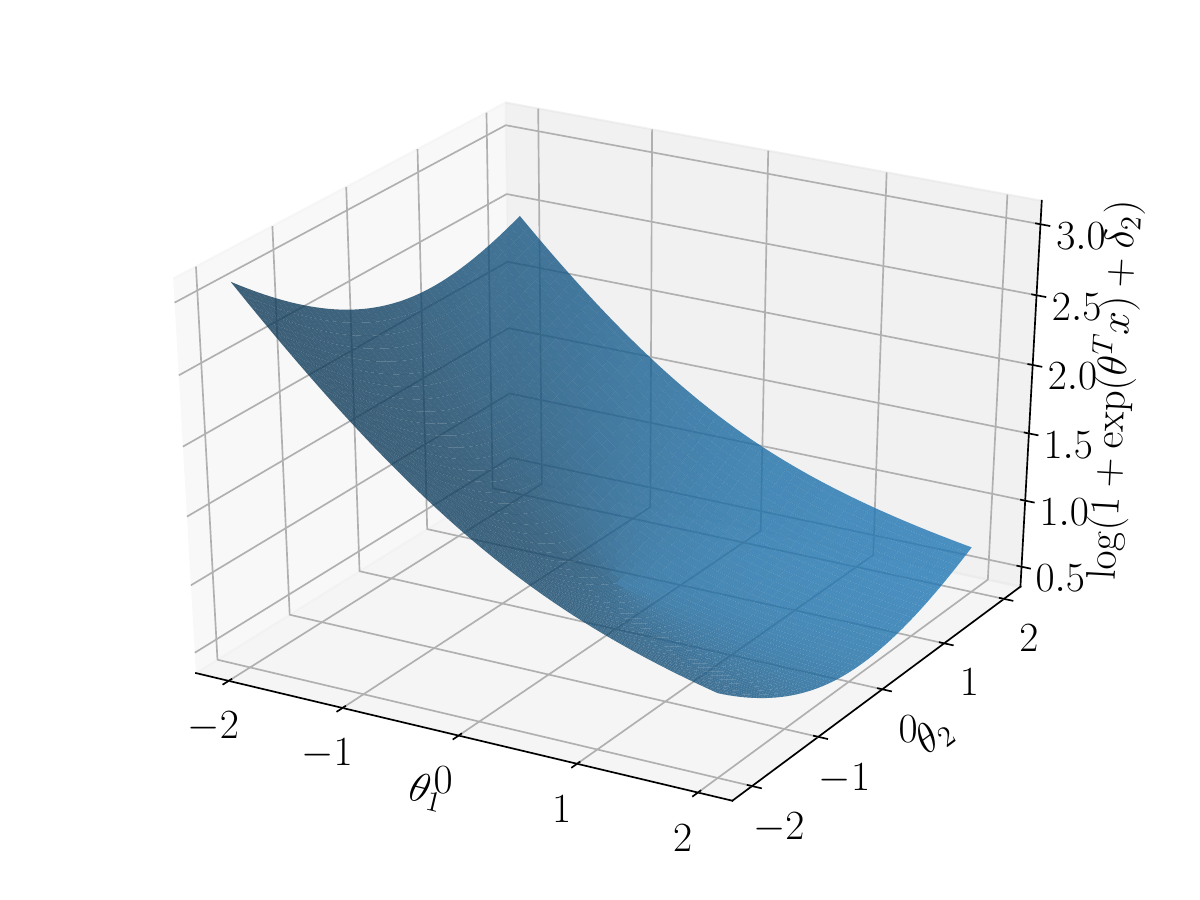}}
        \subfloat[][Adversarial loss with $L_\infty$]{\includegraphics[width=0.325\linewidth]{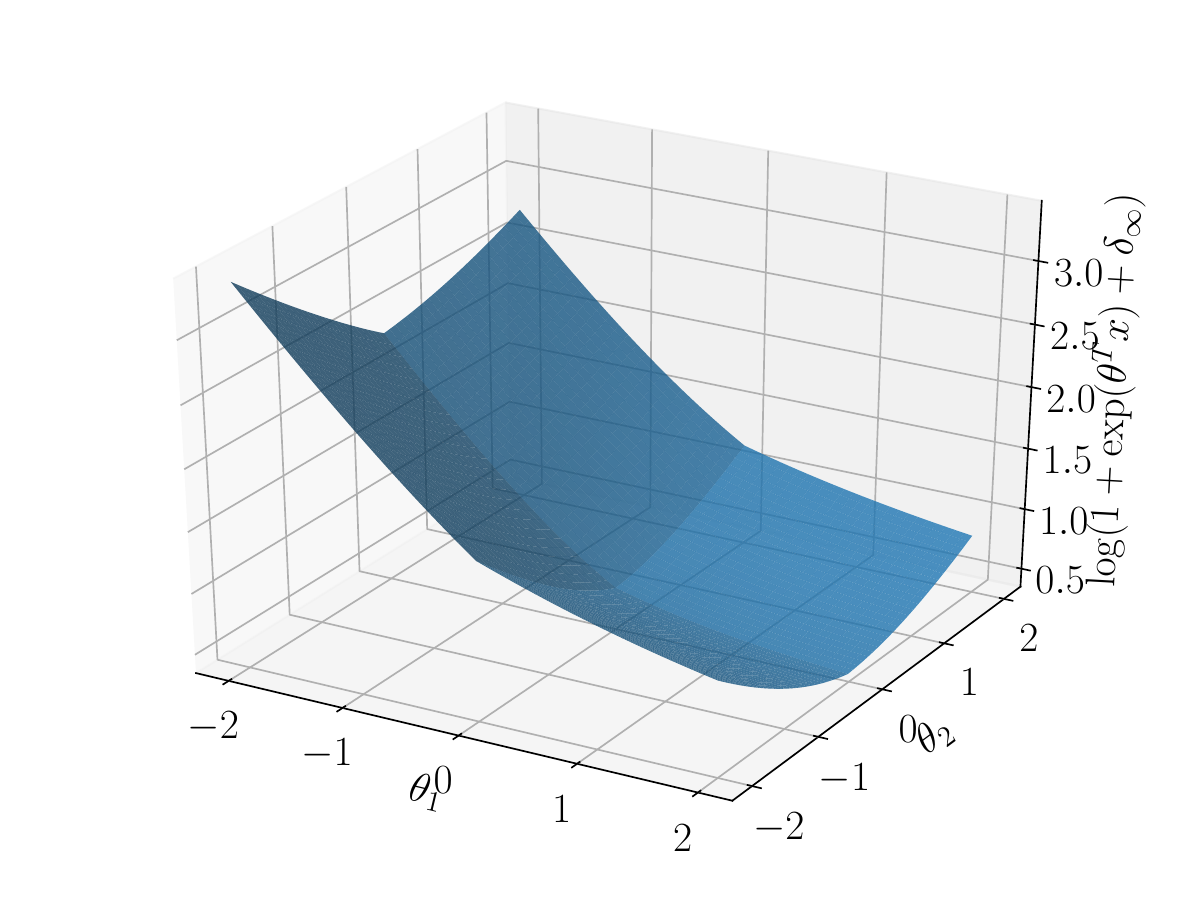}}
        \caption{Loss surface for $L(\bm{\theta})=1/N\sum_n \log(1+\exp(-y_n\bm{\theta}^T(\bm{x}_n+\bm{\delta})))$ where $\bm{x}_n \sim \mathcal{N}(\bm{0},\bm{I}_2)$, $y_n=\mathrm{sign}(x_{n,1})$, $N=100$, and 
        $\theta_{i}\in[-2,2]$ for $i=1,2$. Adversarial examples are constrained as $||\bm{\delta}||_p\leq 0.6$.
        Clean loss (a) has the Lipschitz continuous gradient while the gradient of adversarial loss with $L_2$ 
        (b) is not smooth at $\bm{\theta}=\bm{0}$, and the gradient of adversarial loss with $L_\infty$ 
        (c) is not smooth at $\theta_i=0$.}
        \label{Plot2}
    \end{figure*}
    We applied EntropySGD to the adversarial loss functions in Fig.~\ref{Plot2}.
    We computed the integral in the loss functions of EntropySGD by using scipy.
    Figure~\ref{PlotEnSGD2} plots the loss surface of EntropySGD in $\theta_*\in [-2,2]$.
    Figures~\ref{PlotEnSGD2} (a) and (b) show the loss functions $-F(\bm{\theta})$ when $L(\bm{\theta})=1/N\sum_i \ell (\bm{x}_i+\bm{\delta},\bm{\theta}_i,y)$
    where $\bm{x}_i\sim \mathcal{N}(\bm{0},\bm{I}_2)$ and $y_i=\mathrm{sign}(x_{i,1})$.
    In contrast to Fig.~\ref{Plot2}, loss functions for EntropySGD are smooth.
    Especially, even though the smoothnesses of $L(\bm{\theta})$ for $L_2$ and $L_\infty$ adversarial attacks are different,
    smoothnesses of their loss functions $-F(\bm{\theta})$ are almost the same.
    Note that the optimal points of $-F(\bm{\theta})$ are the same as the original loss functions $L(\bm{\theta})$.
    Thus, EntropySGD smoothens non-smooth functions as Theorem~\ref{EnThm}.
    \begin{figure*}[tb]
        \centering
        \subfloat[][$L_2$ adversarial attacks]{\includegraphics[width=0.24\linewidth]{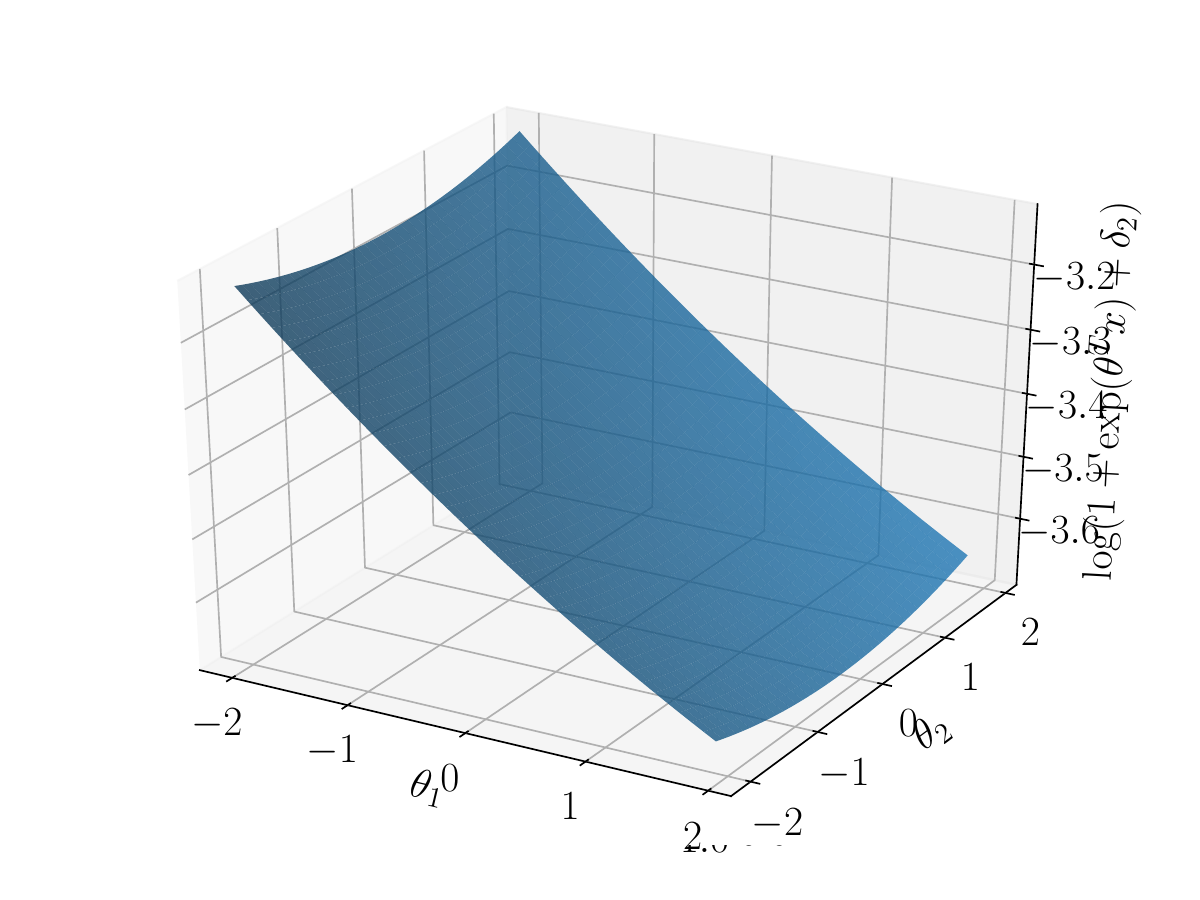}}
        \subfloat[][$L_\infty$ adversarial attacks]{\includegraphics[width=0.24\linewidth]{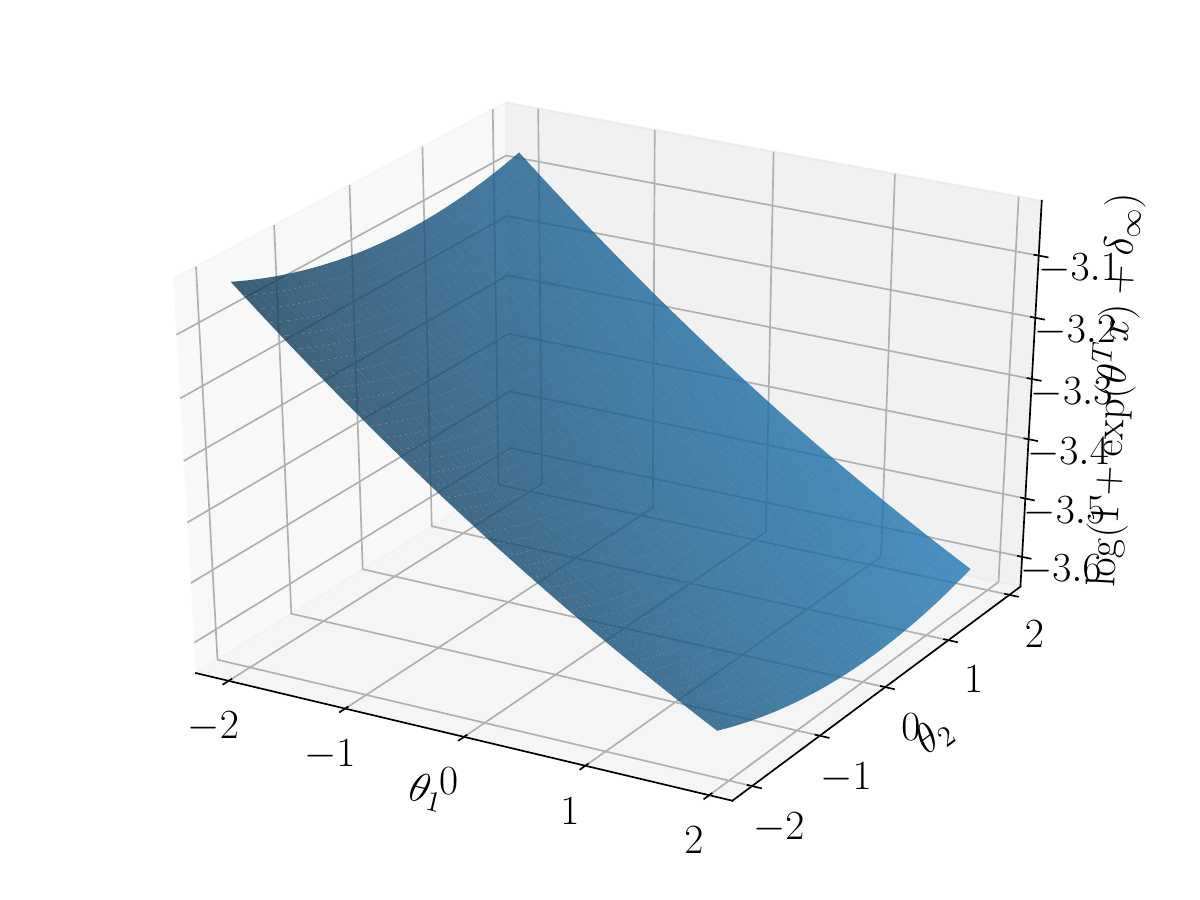}}
        \caption{We apply EntropySGD to problems in Fig.~\ref{Plot2}.
        (a) and (b) correspond to Fig.~\ref{Plot2} (b) and (c) respectively.}
        \label{PlotEnSGD2}
    \end{figure*}
    \section{Extension for the second order method}\label{2ndEnSGDsec}
    We show that the Hessian matrix of EntropySGD is composed of a variance-covariance matrix $\bm{\Sigma}_{\bm{\theta}^{\prime}}$
    of $p_{\bm{\theta}}(\bm{\theta}^{\prime})$.
    Since we can estimate the variance-covariance matrix by using stochastic gradient Langevin dynamics (SGLD),
    it is easy to extend EntropySGD into Newton method-based EntropySGD.
    Since the computation of the inverse matrix of the Hessian matrix is high,
    we assume that the variance-covariance matrix is a diagonal matrix composed of
    only the variance $\bm{\Sigma}_{\bm{\theta}^\prime}\!\approx\!\mathrm{diag}(\bm{\sigma}_{\bm{\theta}^\prime})$
    where $\mathrm{diag}(\bm{v})$ is a diagonal matrix whose $i$-th diagonal element is $v_i$.
    Then, we have $(-\nabla^2F)\!=\!\mathrm{diag}(\gamma\bm{1}-\gamma^2\bm{\sigma}_{\bm{\theta}^\prime})$
     and derive the following update:
    \begin{align}
        \textstyle
        \bm{\theta}&\textstyle\leftarrow\bm{\theta}-\eta\gamma \bm{h}\odot(\bm{\theta}-\bar{\bm{\theta}})\\
        \textstyle
        h_j&=\textstyle\frac{1}{\gamma -\gamma^2\bar{\sigma}_{j,\bm{\theta}^{\prime}}}\nonumber
    \end{align}
    where $\odot$ is an element-wise product.
     $\bar{\sigma}_{j,\bm{\theta}^{\prime}}$ is the estimated variance of the $j$-th parameter $\theta_j^{\prime}$ following $p_{\bm{\theta}}(\bm{\theta}^{\prime})$.
    This update rule corresponds to the Newton method under the diagonal approximation of $\bm{\Sigma}_{\bm{\theta}^\prime}$:
    i.e., element-wise product of $\bm{h}$ corresponds to the matrix product of the inverse Hessian matrix.
    \subsection{Algorithm}
Algorithm~\ref{Alg} describes the training procedure for adversarial training with second order EntropySGD.
We also show the combination of AWP in this algorithm.
$\bar{\bm{\xi}}$ is estimated $\mathbb{E}_{p(\bm{\theta}^\prime)}[\bm{\theta}^{\prime}\odot \bm{\theta}^{\prime}]$
by using SGLD, and the variance is computed at line 16 as $\bar{\xi}_j-\bar{\theta}_j^2$.
We just compute PGD attacks and AWP perturbations in the loop of EntropySGD.
Note that the computation cost of EntropySGD and 2nd order EntropySGD are almost the same as SGD.
\begin{algorithm}[tb]
    \caption{AT (+AWP) + 2nd order EntropySGD}
    \label{Alg}
    \begin{algorithmic}[1]
        \REQUIRE Current weights $\bm{\theta}$, Langevin iterations $L$
        \STATE Hyperparameters: scope $\gamma$, learning rate $\eta$, SGLD step size $\eta '$
        \STATE $\bm{\theta}^{\prime}, \bar{\bm{\theta}}\leftarrow \bm{\theta}$
        \STATE $\bar{\bm{\xi}} \leftarrow \bm{\theta}\odot \bm{\theta}$
        \FOR{$l \leq L$}
        \STATE Sample a minibatch $D_{l}=\{(\bm{x}_i,y_i) \}_{i=1}^{|D_l|}$
        \FOR{$\tau \leq T$}
            \STATE $\bm{\delta }_i \leftarrow \Pi_{\varepsilon}\left(\bm{\delta}_i+\eta_{\mathrm{P}}\mathrm{sgn}\left(\nabla_{\bm{\delta}} \ell\left(\bm{x}_i+\bm{\delta}, y_i, \bm{\theta}^{\prime}\right)\right)\right)$ for $i=1,\dots,|D_l|$
        \ENDFOR
        \STATE Update $\bm{v}\in \mathcal{V}$ to maximize $\frac{1}{|D_l|}\sum_i \ell(\bm{x}_i+\bm{\delta}_i,y_i,\bm{\theta}^{\prime}+\bm{v})$ \cite{awp} if using AWP
        \STATE $d\bm{\theta}^{\prime} \leftarrow -\frac{1}{|D_l|}\sum_i \nabla_{\bm{\theta}'}\ell(\bm{x}_i+\bm{\delta}_i,y_i,\bm{\theta}'+\bm{v})-\gamma(\bm{\theta}-\bm{\theta}^{\prime})$
        \STATE $\bm{\theta}^{\prime}\leftarrow \bm{\theta}^{\prime}-\eta ' d\bm{\theta}^{\prime}+\sqrt{\eta '}\varepsilon_E \mathcal{N}(\bm{0},\bm{I})$
        \STATE $\bar{\bm{\theta}}\leftarrow (1-\alpha)\bar{\bm{\theta}}+\alpha \bm{\theta}^{\prime}$
        \STATE $\bar{\bm{\xi}}\leftarrow (1-\alpha)\bar{\bm{\xi}}+\alpha \bm{\theta}^{\prime}\odot \bm{\theta}^{\prime}$
    \ENDFOR
    \FOR{$j\leq d$}
    \STATE $h_j=\frac{1}{\gamma-\gamma^2 (\bar{\xi}_j-\bar{\theta}_j^2)}$
    \ENDFOR
    \STATE $\bm{\theta}\leftarrow \bm{\theta}-\eta \gamma\bm{h}\odot(\bm{\theta}-\bar{\bm{\theta}})$
    \end{algorithmic}
\end{algorithm}
\section{Experimental setup}
This section gives the experimental conditions.
Our experimental codes are based on source codes provided by \citet{awp},
and our implementations of EntropySGD are based on the codes provided by \citet{chaudhari2019entropy}.
Datasets of the experiments were CIFAR10, CIFAR100 \cite{cifar}, and SVHN \cite{svhn}.
We compared the convergences of adversarial training when using SGD and EntropySGD.
In addition, we evaluate the combination of EntropySGD and AWP \cite{awp},
which injects adversarial noise into the parameter to flatten the loss landscape.

\subsection{CIFAR10\cite{cifar}}
We used ResNet-18 (RN18) \cite{resnet} and WideResNet-34-10 (WRN) \cite{WRN} following \cite{awp}.
We used untargeted projected gradient descent (PGD), which is the most popular white box attack.
The hyperparameters for PGD were based on \cite{awp}.
The $L_\infty$ norm of the perturbation $\varepsilon\!=\!8/255$ at training time.
For PGD, we randomly initialized the perturbation and updated for 10 iterations with a step size of 2/255
at training time for CIFAR10.
At evaluation time, we use AutoAttack \cite{AutoAttack} for CIFAR10.
In addition, we used PGD with 20 iterations and a step size of 2/255 for CIFAR10
for visualization of loss land scape and the evaluation of the convergences of adversarial training.
For AT+EntropySGD, we set $\gamma=0.03$, $\varepsilon_{E}=1\times10^{-4}$, $\eta=0.1$, $\eta^\prime=0.1$, and 
$L=20$ for RN18, and $L=30$ for WRN.
Note that we coarsely tuned $\gamma$ and $\eta^\prime$ and found that the effect of
these parameters tuning is less than that of $L$, and thus, we use the settings of \cite{chaudhari2019entropy} for these parameters. 
For AWP+EntropySGD, we set $\gamma=0.03$, $\varepsilon_{E}=1\times10^{-4}$, $\eta=0.1$, $\eta^\prime=0.1$, and 
$L=30$, and $\gamma_{A}=0.005$ of AWP.
For AWP, we set $\gamma_{A}=0.01$ following \cite{awp}.
Note that we found that AWP with $\gamma_{A}=0.005$ does not outperform AWP with $\gamma_{A}=0.01$
when using SGD.
Since the training accuracy of AWP contains the effect of the adversarial weight perturbation,
improvement of convergence of training accuracy of AWP ($\gamma_{A}=0.005$) + EntropySGD
might be caused by the small weight perturbation.
Thus, we plot AWP ($\gamma_{A}=0.005$) with SGD in \rfig{TrainAcc2}, in which other results are the same as those in Figure~1 of the main paper.
This figure shows that AWP ($\gamma_{A}=0.005$) + EntropySGD outperforms AWP ($\gamma_{A}=0.005$) with SGD in terms of the convergence.
In addition, the test accuracy of AWP ($\gamma_{A}=0.005$) with SGD is lower than that of AWP ($\gamma_{A}=0.01$), which is plotted in Figure~1 of the main paper.
\begin{figure}[tb]
    \centering
    \includegraphics[width=.5\linewidth]{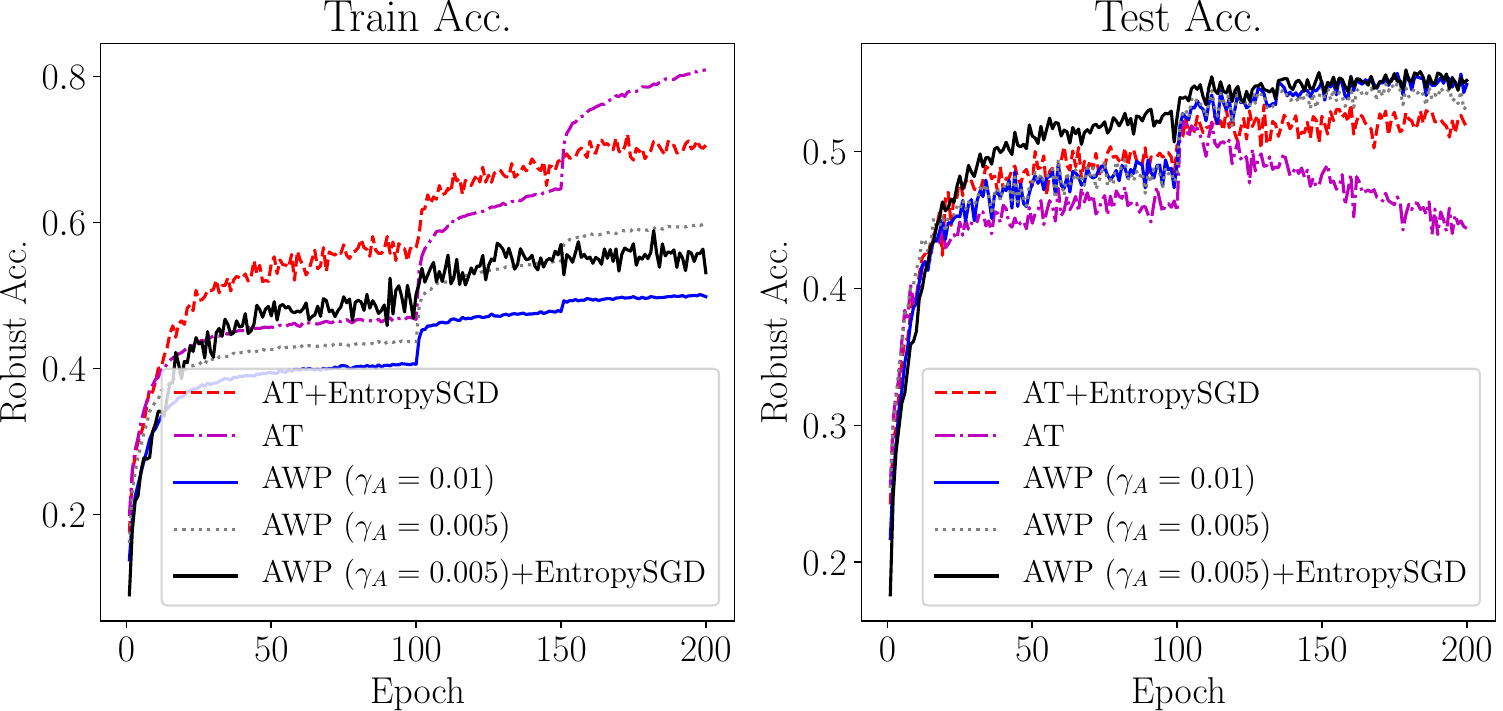}
    \caption{Robust Accuracy against PGD vs Epochs. Left figure is training accuracy, and right figure is test accuracy.
    AT and AT+EntropySGD denote adversarial training using SGD and adversarial training using EntropySGD, respectively.
    AWP and AWP+EntropySGD denote adversarial training with AWP using SGD and adversarial training using EntropySGD, respectively.
    Note that we use PGD with 10 iterations for training accuracy and PGD with 20 iterations for test accuracy.
    }
    \label{TrainAcc2}
\end{figure}

For the preprocessing, we standardized data by using the mean of [0.4914, 0.4822, 0.4465],
and standard deviations of [0.2471, 0.2435, 0.2616].
The gradient of the preprocessing is considered in the generation of PGD.

The learning rates of SGD and EntropySGD are set to 0.1 and divided by 10 at the 100-th and 150-th epoch,
and we used early stopping by evaluating test accuracies.
We used momentum of 0.9 and weight decay of 0.0005.
We trained the models three times and show the average and standard deviation of test accuracies.
For evaluating test accuracies against runtime,
we used NVIDIA Tesla V100 SXM2 32GB GPUs and Intel(R) Xeon(R) Silver 4110 CPU @ 2.10GHz.
\subsection{CIFAR100}
We used RN18 \cite{resnet} and untargeted projected PGD.
The hyper-parameters for PGD were based on \cite{awp}.
The $L_\infty$ norm of the perturbation $\varepsilon\!=\!8/255$ at training time.
For PGD, we randomly initialized the perturbation and updated it for 10 iterations with a step size of 2/255
at training time for CIFAR100.
At evaluation time, we use PGD with 20 iterations and a step size of 2/255 for CIFAR100.
For EntropySGD, we set $\gamma=0.03$,
$\varepsilon_{E}=1\times10^{-4}$, $\eta=0.1$, $\eta^\prime=0.1$, and $L=30$.
The learning rates of SGD and EntropySGD are set to 0.1 and divided by 10 at the 100-th and 150-th epoch,
and we used early stopping by evaluating test accuracies.
We used momentum of 0.9 and weight decay of 0.0005.
For the preprocessing,
we standardized data by using the mean of 
[0.5070751592371323, 0.48654887331495095, 0.4409178433670343],
and standard deviations of [0.2673342858792401, 0.2564384629170883, 0.27615047132568404].
The gradient of the preprocessing is considered in the generation of PGD.
The hyperparameter $\gamma_A$ of AWP is set in 0.01 for AWP, and  $\gamma_A$ of AWP is set in 0.007 for AWP+EntropySGD.
We trained models for three times and
show the average and standard deviation of test accuracies.
\subsection{SVHN~\cite{svhn}}
We used RN18 and untargeted PGD.
The hyperparameters for PGD were based on \cite{awp}.
The $L_\infty$ norm of the perturbation $\varepsilon\!=\!8/255$ at training time.
For PGD, we randomly initialized the perturbation and updated it for 10 iterations with a step size of 1/255
at training time for SVHN.
At evaluation time, we use PGD with 20 iterations and a step size of 1/255 for SVHN.
For training of SVHN, we did not apply EntropySGD and AWP for 5 epochs following~\cite{awp}.
For EntropySGD, we set $\gamma=0.03$, $\varepsilon_{E}=1\times10^{-4}$, $\eta=0.1$, $\eta^\prime=0.1$, and
$L=30$.
For the preprocessing,
we standardized data by using the mean of 
[0.5, 0.5, 0.5],
and standard deviations of [0.5, 0.5, 0.5].
The gradient of the preprocessing is considered in the generation of PGD.
The learning rates of SGD and EntropySGD are set to 0.01 and divided by 10 at the 100-th and 150-th epoch,
and we used early stopping by evaluating test accuracies.
We used momentum of 0.9 and weight decay of 0.0005.
The hyperparameter $\gamma_A$ of AWP is set to 0.01 for AWP, and 0.005 for AWP+EntropySGD.
We trained models three times and
show the average and standard deviation of test accuracies.
\section{Additional experimental results}
\subsection{Visualization of loss landscape}
To investigate the cause of improvements of EntropySGD,
we visualize the loss landscape in parameter space. 
Figure~\ref{LossSurf} shows adversarial loss against perturbations of the parameter by using Filter Normalization \cite{VisW}
 $\bm{w}+\alpha \frac{\bm{d}}{||\bm{d}||}||\bm{w}|| $ following \cite{awp}.
 In this figure, we use the models that achieves the most test robust accuracy.
 We show loss landscapes for training data and test data.
 Since we use early stopping for all methods, no loss landscapes are very sharp, which coincides with the results of early stopping in \cite{awp}. 
 This figure shows that EntropySGD does not necessarily flatten loss landscapes while training loss of EntropySGD is smaller than those of other methods. 
 Compared with EntropySGD, AT and AWP lead to a slight underfit to achieve good generalization performance.
Therefore, EntropySGD achieves a good trade-off between training accuracy and test accuracy.
Note that we show loss of $L_{\varepsilon}(\bm{\theta})$, not $F(\bm{\theta})$ for EntropySGD,
and thus, this figure does not evaluate the smoothness of $F(\bm{\theta})$.
We could not evaluate $F(\bm{\theta})$ because the computation cost of $F(\bm{\theta})$ is too high for DNNs.
\begin{figure}[tb]
    \centering
    \begin{tabular}{cc}
    \subfloat[][ResNet18]{
    \includegraphics[width=.4\linewidth]{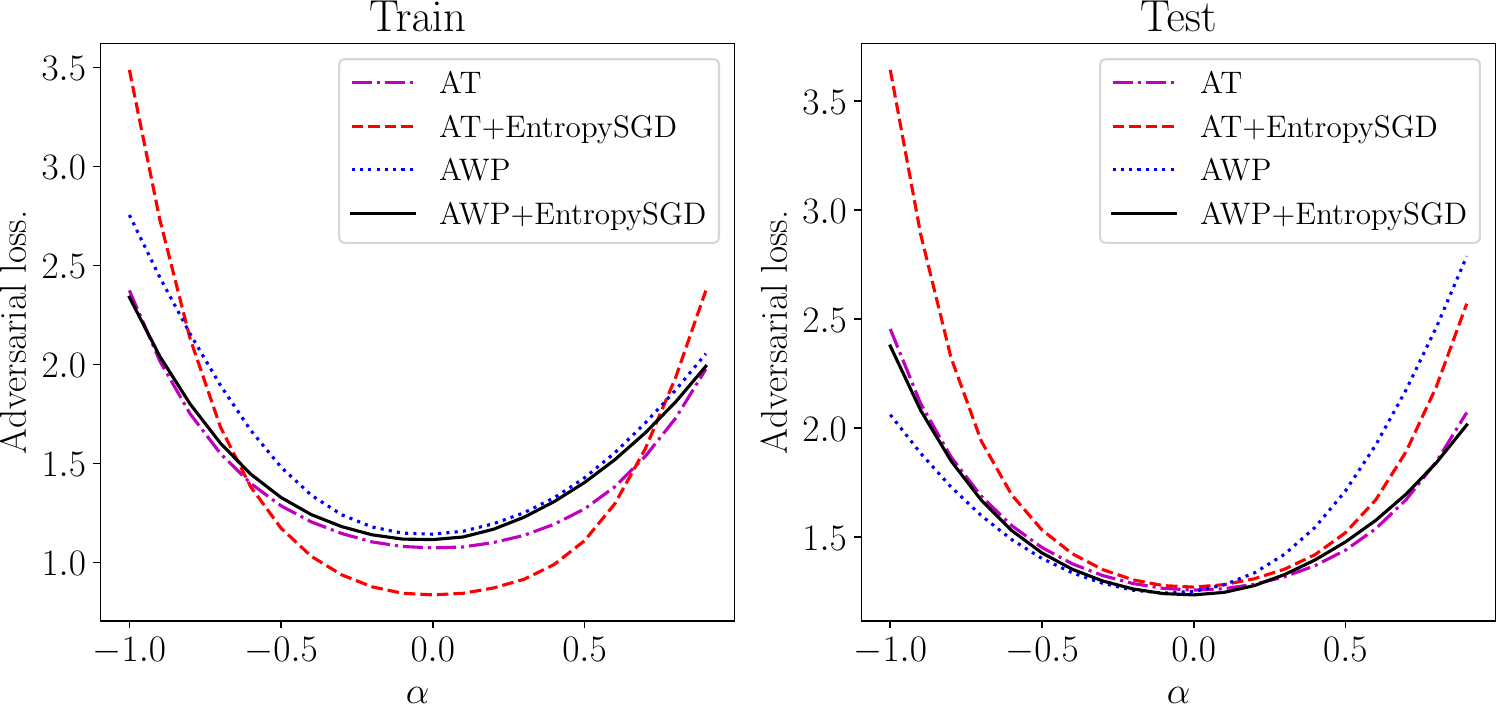}
    }
    \centering
    \subfloat[][WideResNet]{
        \includegraphics[width=.4\linewidth]{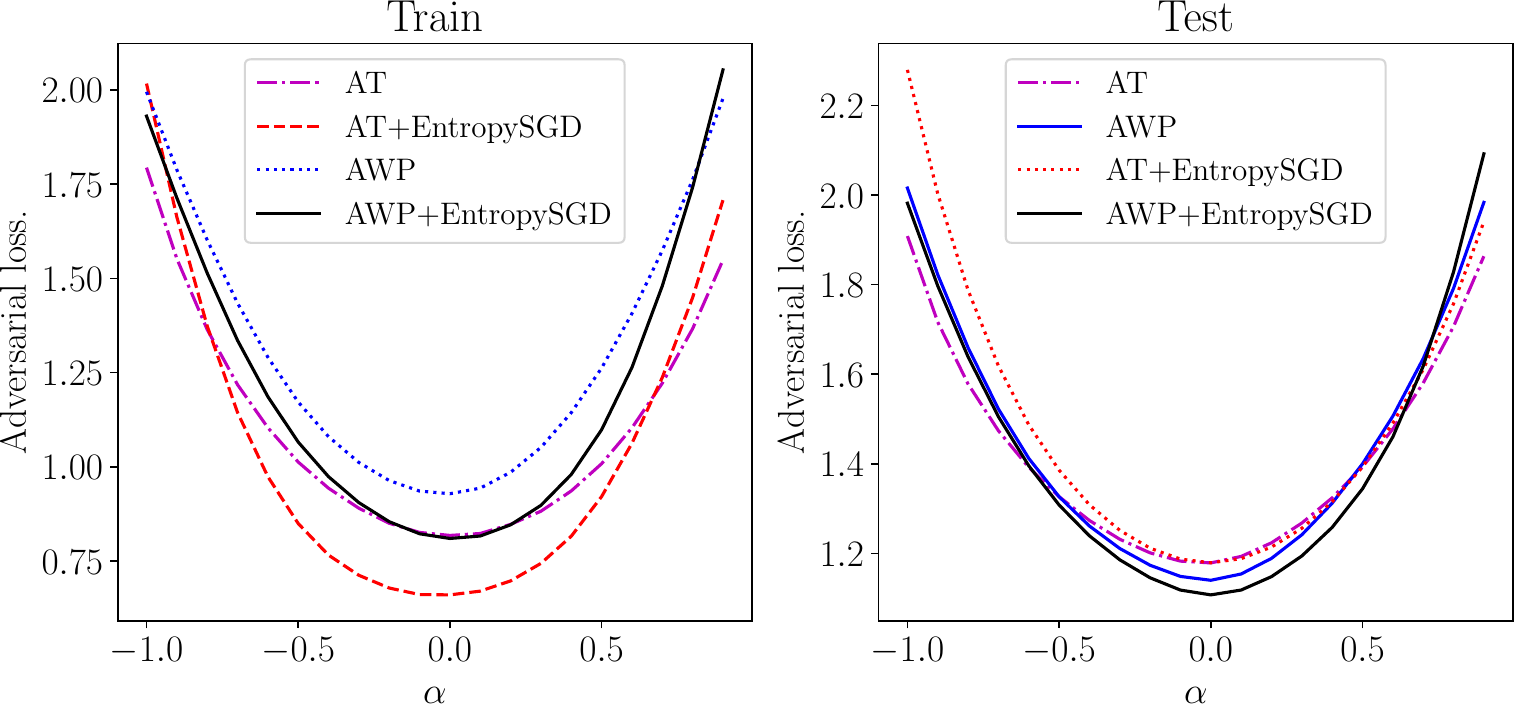}
    }
    \end{tabular}
    \caption{Visualizing adversarial loss $L_{\varepsilon}$ against perturbation of parameters by using Filter Normalization.
    The size of perturbation $\alpha$ is changed in from -1 to 1 in increments of 0.1.}
    \label{LossSurf}
\end{figure}
\subsection{Evaluation with TRADES}
\subsubsection{Setup}
We evaluated the performance of EnSGD with TRADES.
We trained models by using TRADES with SGD (TRADES), TRADES with EnSGD (+EnSGD),
TRADES with AWP and SGD (+AWP), TRADES with AWP and EnSGD (AWP+EnSGD).
Since we observed that training of EnSGD with AWP becomes slow after the 150-th epoch,
we additionally evaluated models trained by switching the optimization method from EnSGD to SGD at the 150-th epoch (+AWP+EnSGD(tuned)):
we trained models by using EnSGD for 150 epochs and trained them by using SGD after the 150-th epoch.
In this experiment, we followed the setup in the provided code of \cite{awp}.
For training of TRADES+AWP, TRADES+AWP+EnSGD, and TRADES+AWP+EnSGD(tuned)), 
we did not apply EntropySGD and AWP for 10 epochs following~\cite{awp}.
We used WideResNet-34-10 (WRN) \cite{WRN} and CIFAR10.
The hyperparameters for TRADES were based on \cite{awp}.
The $L_\infty$ norm of the perturbation is $\varepsilon\!=\!0.031$ at training time.
We randomly initialized the perturbation and updated for 10 iterations with a step size of 0.003 at training time.
At evaluation time, we use AutoAttack \cite{AutoAttack}.
For EnSGD, we set $\gamma=0.03$, $\varepsilon_{E}=1\times10^{-4}$, $\eta=0.1$, $\eta^\prime=0.1$, and 
 $L$ in EnSGD is set to 20 for +EnSGD and 30 for +AWP+EnSGD. 
 Additionally, we set $\alpha$ to 0.3 in +AWP+EnSGD, while we set $\alpha$ to 0.75 in other experiments including +AWP+EnSGD(tuned). 
For AWP, we set $\gamma_{A}=0.005$ following \cite{awp}.
The learning rates of SGD and EnSGD are set to 0.1 and divided by 10 at the 100-th and 150-th epoch,
and we used early stopping by evaluating test robust accuracies against untargeted PGD of 20 iterations.
We used momentum of 0.9 and weight decay of 0.0005.
We trained models for three times and show the average and standard deviation of test accuracies.
We normalized data but did not standardize data following \cite{awp}.
\subsubsection{Results}
\rtab{TRADEStab} lists the average of test robust accuracies when using TRADES with EnSGD.
TRADES with EnSGD (+EnSGD) outperforms TRADES with SGD (TRADES),
and TRADES with AWP and EnSGD (+AWP+EnSGD) outperforms TRADES with AWP and SGD (+ AWP).
Furthermore, when we switched the training method from EnSGD to SGD after 150 epochs (+AWP+EnSGD(tuned)),
models achieved the highest robust accuracies against AutoAttack.
This result indicates that non-smoothness particularly affects training at the early stage of training in which the learning rate is large,
and the performance at the early stage of training affects the final result of training.
Note that switching optimization methods are sometimes used for improving generalization performance \cite{keskar2017improving}.
\begin{table}[tbp]
        \centering
        \caption{Robust accuracies against AutoAttack on CIFAR10 using TRADES.}
        \label{TRADEStab}
        \begin{tabular}{cccccccccc}\toprule
        &TRADES&+EnSGD&+AWP&+AWP+EnSGD&+AWP+EnSGD(tuned)\\\midrule
        WRN&$52.9\!\pm\! 0.3$&$53.4\pm 0.1$&$55.6\!\pm\! 0.4$&$55.8\!\pm \!0.3$&$\bm{56.3\!\pm \!0.1}$\\
        \bottomrule
        \end{tabular}
    \end{table}
\subsection{Evaluation of second order entropySGD}
In Section~\ref{2ndEnSGDsec}, we present EntropySGD using variance-covariance matrix (2ndEnSGD).
We also conducted the experiments for 2ndEnSGD whose setup is the same as that for EnSGD.
\rtab{AAtab2nd} lists the test robust accuracies, and we can see that 2ndEnSGD performs almost the same as EnSGD.
This might be because the diagonal approximation in 2ndEnSGD is not very effective.
\begin{table}[tbp]
        \centering
    
        \caption{Robust accuracies against AutoAttack on CIFAR10. EnSGD denotes EntropySGD, and Robust accuracies against PGD on CIFAR100 and SVHN. EnSGD denotes EntropySGD.}
        \label{AAtab2nd}
        \resizebox{\columnwidth}{!}{
        \begin{tabular}{cccccccc}\toprule
        &AT&AT+EnSGD&AT+2ndEnSGD&AWP&AWP+EnSGD&AWP+2ndEnSGD\\\midrule
        CIFAR10(AA, RN18)&$48.0\!\pm\! 0.2$&$49.2\!\pm\! 0.4$&$48.93\!\pm\! 0.09$&$49.9 \!\pm\! 0.4$&$\bm{50.5\!\pm\! 0.2}$&$50.4 \pm 0.3$\\
        CIFAR10(AA, WRN)&$51.9\!\pm\! 0.5$&$53.3\!\pm\! 0.3$&$53.5 \pm 0.2$&$53.2\!\pm\! 0.3$&$54.72\!\pm \!0.05$&$\bm{54.9 \pm 0.2}$\\\midrule
        SVHN (PGD, RN18)&$53.1\!\pm\! 0.6$&$59.1\!\pm\! 0.4$&$59.3\!\pm\! 0.1$&$59.3\!\pm\! 0.1$&$\bm{59.91\!\pm\!0.09}$&$\bm{59.9\!\pm\! 0.1}$\\
        CIFAR100 (PGD, RN18)&$27.66\!\pm\! 0.04$&$28.69\!\pm\! 0.03$&$28.94\!\pm\! 0.05$&$\bm{30.95\!\pm\! 0.07}$&$30.9 \!\pm\! 0.1$&$30.7\!\pm\! 0.1$\\
        \bottomrule
        \end{tabular}
        }
    \end{table}
\end{document}